\documentclass{article}

\usepackage[preprint,nonatbib]{neurips_2026}

\usepackage[utf8]{inputenc} % allow utf-8 input
\usepackage[T1]{fontenc}    % use 8-bit T1 fonts
\usepackage{hyperref}       % hyperlinks
\usepackage{url}            % simple URL typesetting
\usepackage{booktabs}       % professional-quality tables
\usepackage{amsfonts}       % blackboard math symbols
\usepackage{nicefrac}       % compact symbols for 1/2, etc.
\usepackage{microtype}      % microtypography
\usepackage[table]{xcolor}         % colors
\usepackage{microtype}
\usepackage{graphicx}
\usepackage{subcaption}
\usepackage{booktabs} % for professional tables
\usepackage{url}
\usepackage{float}
\usepackage{graphicx}
\usepackage{multirow}
\usepackage{xspace}
\usepackage{natbib}
\usepackage{enumitem}
\usepackage{hyperref}
\usepackage{algorithm}
\usepackage{algorithmic}
\usepackage{makecell}
\usepackage{amsmath}
\allowdisplaybreaks
\usepackage{amssymb}
\usepackage{mathtools}
\usepackage{amsthm}
\usepackage{multirow}
\usepackage{subcaption}
\usepackage{todonotes}

\theoremstyle{plain}
\newtheorem{theorem}{Theorem}[section]

\newtheorem{lemma}[theorem]{Lemma}
\newtheorem{corollary}[theorem]{Corollary}
\theoremstyle{definition}

\newtheorem{assumption}[theorem]{Assumption}
\theoremstyle{remark}
\newtheorem{remark}[theorem]{Remark}

\definecolor{lightgray}{rgb}{0.9, 0.9, 0.9}
\definecolor{lightgreen}{rgb}{0.92, 0.98, 0.92} % 浅绿色，用于标题/分组行
\definecolor{darkgreen}{rgb}{0.85, 0.95, 0.85} % 稍深一点的绿色

\newcommand{\e}[1]{\mathbb{E}\left[#1\right]}
\newcommand{\nm}[1]{\left\|#1\right\|}
\newcommand{\nmsq}[1]{\left\|#1\right\|^2}
\newcommand{\ip}[1]{\left\langle#1\right\rangle}
\newcommand*\tcircle[1]{%
  \raisebox{0pt}{%
    \textcircled{\fontsize{7pt}{0}\fontfamily{phv}\selectfont #1}%
  }%
}

\title{DECA: Decentralizing Block-Wise Adam for Efficient LLM Full-Parameter Fine-Tuning on Non-IID Data}

\author{
  \textbf{Yunsheng Yuan}\textsuperscript{1}, 
  \textbf{Shaowei Li}\textsuperscript{1}, 
  \textbf{Kai Wang}\textsuperscript{2}, 
  \textbf{Zhongyuan Sun}\textsuperscript{1} \\
  \textbf{Zheng Zhang}\textsuperscript{3}, 
  \textbf{Kai Han}\textsuperscript{4}, 
  \textbf{Jun Luo}\textsuperscript{5}, 
  \textbf{Feng Li}\textsuperscript{1} \\
  \vspace{0.2cm}
  \normalfont % Resets the font to normal for the affiliations
  \textsuperscript{1}School of Computer Science and Technology, Shandong University, Qingdao China \\
  \textsuperscript{2}School of Mathematical Science, Peking University, China \\
  \textsuperscript{3}IEIT SYSTEM, China \\
  \textsuperscript{4}School of Computer Science and Artificial Intelligence, \\Shanghai University of Finance and Economics, Shanghai, China \\
  \textsuperscript{5}College of Computing and Data Science, Nanyang Technological University, Singapore \\
  \vspace{0.1cm}
  Email: \texttt{\{ysyuan1028, 202420868\}@mail.sdu.edu.cn, wangkaisd@stu.pku.edu.cn,} \\
  \texttt{202415172@mail.sdu.edu.cn, zhangzheng05@ieisystem.com,}\\
  \texttt{hankai@mail.shufe.edu.cn}
  \texttt{junluo@ntu.edu.sg, fli@sdu.edu.cn}
}

\begin{document}

\maketitle

\begin{abstract}
  Fine-tuning \emph{large language models} (LLMs) in privacy-sensitive and resource-constrained environments remains challenging. Since training data are often distributed across multiple clients, decentralized fine-tuning offers a natural paradigm for collaborative adaptation without a central server. However, enabling \emph{full-parameter fine-tuning} (FPFT) in this decentralized setting is difficult: FPFT provides strong adaptation capacity but incurs prohibitive resource consumption for billion-scale models. Existing decentralized LLM fine-tuning methods therefore mainly rely on parameter-efficient updates, which improve efficiency but may restrict downstream performance. Moreover, client data are typically non-IID, making decentralized optimization more vulnerable to client drift and unstable convergence. To address these challenges, we propose DECA, a resource-efficient decentralized FPFT framework for LLMs on non-IID data. DECA partitions model parameters into disjoint blocks and performs sequential block-wise Adam optimization, reducing resource consumption while preserving decentralized full-parameter adaptation. To stabilize training, DECA further introduces first- and second-order block-wise moment estimates with fresh local gradient statistics and consensus-derived discrepancy signals. We provide rigorous theoretical analysis and extensive experiments, showing that DECA achieves fast convergence, strong downstream performance, and significant resource efficiency.
\end{abstract}

\section{Introduction} \label{sec:intro}
  \emph{Large language models} (LLMs), such as LLaMA~\cite{DubeyJPAALMSF-ARXIV24}, Qwen~\cite{QwenAY-arXiv25}, and GPT~\cite{AchiamAAA-arXiv23}, have demonstrated remarkable capabilities in language understanding, reasoning, and generation. These models, typically built on the Transformer architecture~\cite{VaswaniSPNUJGKP-NIPS17}, serve as versatile foundations for a wide array of tasks. To effectively deploy these foundation models for domain-specific applications, fine-tuning remains an essential step for aligning model behavior with specialized downstream tasks.

  Most state-of-the-art studies related to LLM fine-tuning, e.g., \emph{parameter-efficient fine-tuning} (PEFT) methods~\cite{HoulsbyGJMDGAG-ICML19,LesterAC-EMNLP21,HuSSAWLWC-ICLR22} and \emph{full-parameter fine-tuning} (FPFT) methods~\cite{KingmaB-ICLR2015,LuX-MP15,MalladiGNDLCA-NIPS23,ZhaoZCWAT-ICML24,LuoYLi-NIPS24}, often assume that LLMs are tuned in a centralized manner, i.e., data are stored in a central infrastructure with sufficient resources. However, in real-world scenarios, data are distributed across different devices often in a non-IID fashion, while collecting the large volume of data from the devices to a single infrastructure may result in significant communication overhead and risk of privacy leakage. To this end, \emph{federated learning} (FL)~\cite{McMahanMRHA-AISTATS17} is applied, such that each client device calculates a local model update according to its private data, and a global model is trained by aggregating these local updates at a central server. Although FL has been integrated with both PEFT~\cite{GuoZWFQ-ICLR25,ZhaoDLLL-ICASSP23} and FPFT methods~\cite{QinCQDLD-ICML24,ShuHNLY-ICML25}, they generally trade performance for reducing the intensive demand on communication resources.

  %Unfortunately, in FL architecture, all of the clients are coordinated by the central server; hence, the usability of the FL framework may suffer severe limitations, e.g., when the central server suffers cyber-attacks or it has to persistently communicate with the clients and thus becomes a significant bottleneck. 

  Alternatively, the \emph{decentralized} paradigm offers a purely distributed solution that further eliminates the need for the central server, as each client only interacts with its direct neighbors. Though many state-of-the-art studies have been performed to investigate the application of the decentralized learning paradigm in training small-scale \emph{deep neural network} (DNN) models~\cite{LinKSJ-ICML21,EsfandiariTJBHHS-ICML21,VosFGKPS-NIPS23,GhiasvandRAP-TON25,WangYWL-ICDCS25,WangYLD-TON26}, only very few proposals exist to decentralize LLM fine-tuning, e.g., decentralized LoRA~\cite{GhiasvandAP-REALM25,Saadati-NeuralNet26}. Unfortunately, since PEFT techniques tune a subset of model parameters to reduce resource consumption, they potentially limit performance on downstream tasks~\cite{ZhangLCF-ICLR24,LuoYLi-NIPS24,PuJYK-arXiv23}.

  In this paper, we propose \textbf{DECA}, a lightweight decentralized framework for FPFT of LLMs on non-IID data. DECA partitions model parameters into disjoint blocks and decentralizes block-wise Adam optimization, enabling clients to collaboratively fine-tune the blocks in a sequential manner with significantly reduced resource consumption. 
  Furthermore, DECA introduces first- and second-order \emph{block-wise moment approximations} (BMAs), which extend Adam-style moment estimation to decentralized block-wise optimization by incorporating fresh local gradient statistics and consensus-derived discrepancy signals. This allows each client to form a more globally consistent update direction within each active block, thereby mitigating client drift and stabilizing optimization under non-IID data.
  Our theoretical analysis shows that DECA converges under non-IID data while matching the asymptotic convergence behavior of state-of-the-art decentralized optimization methods. Our experiments on classification and generation tasks with models ranging from 1.5B to 8B parameters demonstrate that DECA consistently outperforms decentralized baselines and enables practical FPFT on resource-constrained devices. Our main contributions are summarized as follows:
  \begin{itemize}
    \item We propose DECA, a decentralized FPFT framework enabling block-wise, BMA-enhanced Adam optimization for collaborative fine-tuning of LLMs on non-IID data.
    \item We provide a rigorous theoretical analysis establishing state-of-the-art convergence rates under non-IID data, together with substantial improvements in memory, computation, and communication efficiency.
    \item We conduct extensive experiments showing that DECA consistently outperforms existing baselines across diverse downstream tasks and model scales. 
  \end{itemize}

  The remainder of this paper is organized as follows. Sec.~\ref{sec:pre} introduces the system model. Sec.~\ref{sec:alg} presents the DECA algorithm, followed by the theoretical analysis in Sec.~\ref{sec:theory}. Sec.~\ref{sec:exp} reports our experiment results, and Sec.~\ref{sec:relwork} reviews related work. Finally, Sec.~\ref{sec:conclusion} concludes the paper.

\section{System Model and Preliminaries} \label{sec:pre}
  Consider a decentralized learning system represented by an undirected graph with weights $\mathcal{G}=\{\mathcal{N}, \mathbf{W}\}$, where $\mathcal{N}=\{1,2,\cdots,N\}$ denotes a set of $N$ clients, and $\mathbf{W}\in[0,1]^{N \times N}$ denotes the adjacency matrix of graph $G$. Let $\mathcal{D}_i$ represent the local data distribution of client $i \in \mathcal{N}$, and $D_i=|\mathcal{D}_i|$ denote the number of data samples in $\mathcal{D}_i$. It is worth noting that the data are non-IID across different clients. The communication topology is characterized by $\mathbf{W}$. We assume $\mathbf{W}$ is a symmetric doubly stochastic matrix~\cite{LianZZHZL-NIPS17, EsfandiariTJBHHS-ICML21, YuJY-ICML19, ZhangCHWY-ICML22}, such that $\sum_{i = 1}^N w_{i,j} = \sum_{j = 1}^N w_{i,j} = 1$ and $w_{i,j} = w_{j,i}$ for $\forall i,j \in \mathcal{N}$, where $w_{i,j}\in[0,1]$ denotes the element in the $i$-th row and $j$-th column of the matrix $\mathbf{W}$. We use $w_{i,j}$ to encode how much client $i$ can affect client $j$, while $w_{i,j}=0$ implies clients $i$ and $j$ are disconnected. Let $\mathcal{N}_i = \{ j \in \mathcal{N} \mid w_{i,j} > 0 \}$ denote the set of neighbors for client $i$, with cardinality $N_i = |\mathcal{N}_i|$. 
  %
  %We assume self-loops exist (i.e., $w_{i,i} > 0$), ensuring that $i \in \mathcal{N}_i$.
  %
  The objective can be formulated as follows:
  \begin{equation} \label{eq:decentobj}
    \min_{{x} \in \mathbb{R}^d} \mathcal{F}({x}) = \frac{1}{N} \sum_{i=1}^{N} \underbrace{\mathbb{E}_{\zeta_i \sim \mathcal{D}_i} \left[F_i({x}; \zeta_i)\right]}_{\triangleq f_i({x})} 
  \end{equation}
  where ${x} \in \mathbb{R}^d$ represents the parameters of the target global model, $f_i({x}) \triangleq \mathbb{E}_{\zeta_i \sim \mathcal{D}_i} [F_i({x}; \zeta_i)]$ denotes the local loss function of client $i$, and $\zeta_i \sim \mathcal{D}_i$ represents the randomly sampled data from $\mathcal{D}_i$. Without loss of generality, we assume $F_i (\cdot; \cdot) = F (\cdot; \cdot)$, $\forall i\in\mathcal{N}$.
  Unfortunately, enabling FPFT in a decentralized setting is rather challenging.
  \begin{itemize}
    \item \textbf{Communication Overhead:} In traditional decentralized optimization, each client $i$ has to exchange its full local model $x_i \in \mathbb{R}^d$ with its neighbors. Since LLMs typically contain billions of parameters, exchanging full model updates incurs substantial communication overhead, i.e., $\text{bit-width}\times d\times (N_i-1)$, which exceeds the bandwidth capacity of clients.
    \item \textbf{Memory Constraints:} FPFT mandates the storage of full-parameter local updates. Furthermore, the aggregation step necessitates accommodating extremely high-dimensional parameters received from neighbors, a requirement that frequently exceeds the physical GPU memory capacity of individual clients.
    \item \textbf{Statistical Divergence:} The non-IID nature of $\mathcal{D}_i$ creates a mismatch between each local objective $f_i(x)$ and the global objective $\mathcal{F}(x)$. As a result, local updates may bias client models, causing client drift. Without centralized aggregation, such drift can outpace the consensus process induced by the communication topology $\mathbf{W}$, increasing the risk of unstable training or divergence.
    
  \end{itemize}

\section{Our Algorithm} \label{sec:alg}
%
  %We first provide an overview of DECA in Sec.\ref{ssec:overview}, and then present its details in Sec.\ref{ssec:algorithm}. We further introduce a communication-efficient extension of DECA in Sec.~\ref{ssec:extension}.  

  \subsection{Overview} \label{ssec:overview}
    We propose \textbf{DECA}, a novel decentralized optimization framework that integrates \emph{block-wise Adam optimization} with first- and second-order \emph{block-wise moment approximations} (BMAs). Focusing on the resource-intensive Transformer core (while treating embedding and LM head layers as static), we partition the trainable parameter space into $B$ disjoint blocks, denoted as $x = \{x_1, x_2, \dots, x_B\}$. Let $d_k=|x_k|$ denote the number of parameters in the $k$-th block. The algorithm operates over $T$ global rounds. In each round $t$, we sequentially optimize the blocks from $k=1$ to $B$. Specifically, for the $k$-th block, clients perform $R$ steps of collaborative decentralized optimization, while holding the other blocks fixed. Formally, the sub-problem for optimizing block $k$ is defined as:
    \begin{equation} \label{eq:block_obj}
      \min_{x_{k}\in\mathbb{R}^{d_k}}~ \frac{1}{N} \sum^N_{i=1} \mathbb{E}_{\zeta \sim \mathcal{D}_i} \left[ F_i(x_1, \dots, x_k, \dots, x_B; \zeta) \right]
    \end{equation}
    Standard \emph{block coordinate descent} (BCD)-based methods typically use simple gradient descent or basic Adam updates~\cite{BengioLPL-NIPS06,Wright-MP15,BelilovskyEO-ICML19,LuoYLi-NIPS24}. However, directly applying block-wise Adam to decentralized non-IID settings is challenging, as local adaptive updates may amplify client drift, while intermittently updated blocks suffer from stale moment estimates. To address this issue, BMA constructs hybrid first- and second-order estimates by combining fresh local gradient statistics with a consensus-derived discrepancy signal, thereby correcting local updates toward network-wide agreement and stabilizing decentralized block-wise optimization.

    %
    %, which struggles with the complex loss landscapes of LLMs; conversely, decentralized Adam is notoriously susceptible to model divergence when data is non-IID. We bridge this gap by enhancing the block-wise Adam optimizer with BMAs. 
    %
    %

  \subsection{Algorithm Design} \label{ssec:algorithm}
    \begin{algorithm}[htb!]
    \caption{DECA (for each client $i\in\mathcal{N}$)}
    \label{alg:deca}
    \begin{algorithmic}[1]
    % Use \item[] to create an unnumbered line
    \item[] {\bfseries Input:} Graph $\mathcal{G}=\{\mathcal{N}, \mathbf{W}\}$, number of communication rounds $T$, number of local updates $R$, granularity of model partitioning $B$, learning rate $\gamma$, and hyperparameters $\alpha_1$, $\alpha_2$, $\beta_1$ and $\beta_2$.
    \item[] {\bfseries Output:} Final global model $x^{[T]} = \frac{1}{N}\sum^N_{i=1}x^{T}_i$
    %
    % Numbering starts here (Line 1)
    %
    \STATE Randomly initialize local model $x^{[0]}_i=\{x^{[0]}_{i,k}\}^B_{k=1}$
    \FOR{$t=0$ {\bfseries to} $T-1$}
      \FOR{$k=1$ {\bfseries to} $B$}
        \STATE $m^{[t,0]}_{i,k} = 0$, $v^{[t,0]}_{i,k} = 0$ \label{ln:init_mom}
        \FOR{$r=0$ {\bfseries to} $R-1$}
          \STATE $g^{[t,r]}_{i,k} = \nabla_k F_i (x^{[t,r]}_{i,k}, \zeta)$ where $\zeta \sim \mathcal{D}_i$;
          \STATE $m^{[t,r+\frac{1}{2}]}_{i,k} = \alpha_1 m^{[t,r]}_{i,k} + (1-\alpha_1) g^{[t,r]}_{i,k}$ and $v^{[t,r+\frac{1}{2}]}_{i,k} = \alpha_2 v^{[t,r]}_{i,k} + (1-\alpha_2) g^{[t,r]}_{i,k} \odot g^{[t,r]}_{i,k}$;
          \STATE $\widehat{m}^{[t,r]}_{i,k} = m^{[t,r+\frac{1}{2}]}_{i,k} / (1-\alpha^{r+1}_1)$ and $\widehat{v}^{[t,r]}_{i,k} = v^{[t,r+\frac{1}{2}]}_{i,k} / (1-\alpha^{r+1}_2)$;
          \STATE $x^{[t,r+\frac{1}{2}]}_{i,k} = x^{[t,r]}_{i,k} - \gamma\frac{\widehat{m}^{[t,r]}_{i,k}}{\sqrt{\widehat{v}^{[t,r]}_{i,k}} + \epsilon}$;
          \STATE Exchange $x^{[t,r+\frac{1}{2}]}_{i,k}$ with its neighbors;
          \STATE $x^{[t,r+1]}_{i,k} = \sum_{j\in\mathcal{N}_i} w_{i,j} x^{[t,r+\frac{1}{2}]}_{j,k}$;
          \STATE $h^{[t,r]}_{i,k} = \frac{x^{[t,r]}_{i,k} - x^{[t,r+1]}_{i,k}}{\left\| x^{[t,r]}_{i,k} - x^{[t,r+1]}_{i,k} \right \|}$;
          \STATE $m^{[t,r+1]}_{i,k} = \beta_1 m^{[t,r+\frac{1}{2}]}_{i,k} + (1-\beta_1)h^{[t,r]}_{i,k}$ and $v^{[t,r+1]}_{i,k} = \beta_2 v^{[t,r+\frac{1}{2}]}_{i,k} + (1-\beta_2)h^{[t,r]}_{i,k}\odot h^{[t,r]}_{i,k}$;
        \ENDFOR
        \STATE $x^{[t+1,0]}_{i,k} = x^{[t,R]}_{i,k}$;
      \ENDFOR
    \ENDFOR
    %\UNTIL{$noChange$ is $true$}
    \end{algorithmic}
    \end{algorithm}

    The pseudo-code of our algorithm is given in \textbf{Algorithm}~\ref{alg:deca}. The optimization of each block $k$ during round $t$ follows an iterative process managed through an inner loop of $R$ steps. At the beginning of the $k$-th block update in round $t$, each client $i$ initializes its BMAs: $m^{[t,0]}_{i,k}=0$ and $v^{[t,0]}_{i,k}=0$. At each step $r \in \{0, 1, \dots, R-1\}$, client $i$ calculates its local gradient based on its local data $\mathcal{D}_i$:
    \begin{equation} \label{eq:grad}
      g^{[t,r]}_{i,k} = \nabla_k F_i (x^{[t,r]}_{i,k}, \zeta), \quad \text{where } \zeta \sim \mathcal{D}_i.
    \end{equation}
    Subsequently, client $i$ updates its first and second BMAs using the hyperparameters $\alpha_1$ and $\alpha_2$:    
    \begin{equation} \label{eq:update_momentum}
    %\begin{cases}
      m^{[t,r+\frac{1}{2}]}_{i,k} = \alpha_1 m^{[t,r]}_{i,k} + (1-\alpha_1) g^{[t,r]}_{i,k}, ~\text{and}~
      v^{[t,r+\frac{1}{2}]}_{i,k} = \alpha_2 v^{[t,r]}_{i,k} + (1-\alpha_2) g^{[t,r]}_{i,k} \odot g^{[t,r]}_{i,k} 
    %\end{cases}
    \end{equation}
    Following this, we perform bias correction to obtain the estimates $\widehat{m}^{[t,r]}_{i,k}$ and $\widehat{v}^{[t,r]}_{i,k}$:
    \begin{equation} \label{eq:correct_momentum}
      \widehat{m}^{[t,r]}_{i,k} = {m^{[t,r+\frac{1}{2}]}_{i,k}}/{(1-\alpha^{r+1}_1)}, \quad \widehat{v}^{[t,r]}_{i,k} = {v^{[t,r+\frac{1}{2}]}_{i,k}}/{(1-\alpha^{r+1}_2)}.
    \end{equation}
    The local model parameters are then updated as follows:
    \begin{equation} \label{eq:update_local_model}
      x^{[t,r+\frac{1}{2}]}_{i,k} = x^{[t,r]}_{i,k} - \gamma \cdot {\widehat{m}^{[t,r]}_{i,k}} \Big/ {\left( \sqrt{\widehat{v}^{[t,r]}_{i,k}} + \epsilon \right)}.
    \end{equation}
    where $\epsilon$ is a small constant for numerical stability when the second moment estimate $\widehat{v}^{[t,r]}_{i,k}$ vanishes.

    Client $i$ then shares its parameters with its neighbors and performs weighted aggregation:
    \begin{equation} \label{eq:agg_model}
      x^{[t,r+1]}_{i,k} = \sum_{j\in\mathcal{N}_i} w_{i,j} x^{[t,r+\frac{1}{2}]}_{j,k}.
    \end{equation}
    To mitigate the challenges of data heterogeneity, we introduce a block-wise consensus-derived discrepancy signal to correct local optimization directions and reduce client drift. Specifically, we calculate the normalized difference between the local model and the aggregated model:
    \begin{equation} \label{eq:qgm_difference}
      h^{t,r}_{i,k} = \left( {x^{[t,r]}_{i,k} - x^{[t,r+1]}_{i,k}} \right) \Big/ {\left\| x^{[t,r]}_{i,k} - x^{[t,r+1]}_{i,k} \right \|},
    \end{equation}
    which is used to update both the first- and second-order BMAs for the subsequent step using the hyperparameters $\beta_1$ and $\beta_2$:
    \begin{align} \label{eq:qgm_update_m}
      m^{[t,r+1]}_{i,k} = \beta_1 m^{[t,r+\frac{1}{2}]}_{i,k} + (1-\beta_1)h^{t,r}_{i,k} = \alpha_1 \beta_1 m^{[t,r]}_{i,k} + \beta_1(1-\alpha_1) g^{[t,r]}_{i,k} + (1-\beta_1) h^{[t,r]}_{i,k}
    \end{align}
    and 
    \begin{align} \label{eq:qgm_update_v}
      v^{[t,r+1]}_{i,k} =& \beta_2 v^{[t,r+\frac{1}{2}]}_{i,k} + (1-\beta_2) h^{t,r}_{i,k} \odot h^{t,r}_{i,k} \nonumber\\
      =& \alpha_2 \beta_2 v^{[t,r]}_{i,k} + \beta_2(1-\alpha_2)\, g^{[t,r]}_{i,k}\odot g^{[t,r]}_{i,k} + (1-\beta_2)\, h^{[t,r]}_{i,k}\odot h^{[t,r]}_{i,k}.
    \end{align}
    %
    %Unlike standard Adam optimization~\cite{KingmaB-ICLR2015} or its decentralized variants~\cite{LinKSJ-ICML21}, which relies on continuously accumulated moment estimates derived from local gradients, our design leverages the intermediate moments $m^{[t,r+\frac{1}{2}]}_{i,k}$ and $v^{[t,r+\frac{1}{2}]}_{i,k}$ to update the BMAs. This choice is critical under block-wise optimization, where moment estimates associated with each parameter block become stale due to intermittent updates. Specifically, we first incorporates the fresh gradient of the active block to reconstruct an objective-aligned moment estimate, and then injects a consensus-derived local discrepancy (i.e., $h^{t,r}_{i,k}$) within the same parameter subspace, ensuring that the update direction remains primarily aligned with the (local) optimization objective while being adaptively steered toward network-wide agreement.
    %
    Unlike standard Adam optimization~\cite{KingmaB-ICLR2015} or its decentralized momentum variants~\cite{LinKSJ-ICML21}, which continuously accumulates moment estimates from local stochastic gradients, our design updates the BMAs using the intermediate moments $m^{[t,r+\frac{1}{2}]}_{i,k}$ and $v^{[t,r+\frac{1}{2}]}_{i,k}$. This design is critical for block-wise optimization, where moment estimates associated with inactive parameter blocks can become stale due to intermittent updates. Specifically, for each active block, we first incorporate the fresh gradient to reconstruct objective-aligned first- and second-order moment estimates, and then inject the consensus-derived local discrepancy $h^{[t,r]}_{i,k}$ within the same parameter subspace. As a result, the update direction remains primarily aligned with the local optimization objective while being adaptively corrected toward network-wide consensus, thereby mitigating client drift and stabilizing block-wise optimization under non-IID data.

\section{Analysis} \label{sec:theory}
  Due to space limitations, this section focuses on the theoretical convergence analysis of DECA. Additional analysis of communication complexity, memory consumption, and computational complexity is provided in Appendix~\ref{sec:add_theory}.
  %
  %
  %\subsection{Convergence Analysis} \label{ssec:convergence}
  %
  Before presenting our main results, we first give the requisite assumptions which have been widely adopted in theoretical analysis of decentralized optimization~\cite{YuJY-ICML19,EsfandiariTJBHHS-ICML21,LinKSJ-ICML21}. Specifically, \textbf{Assumption}~\ref{assum:smooth} imposes Lipschitz smoothness on the objective functions. \textbf{Assumption}~\ref{assum:variance} bounds the stochastic gradient variance and the magnitude of the block-wise gradients. \textbf{Assumption}~\ref{assum:dsm} characterizes the spectral properties of the communication topology. \textbf{Assumption}~\ref{assum:error_control} implies that switching from block $k$ to block $k+1$ may increase the objective value, but such increase is bounded, and the block transitions cannot arbitrarily destabilize the overall optimization.
    \begin{assumption}[\textbf{Lipschitz smoothness}] \label{assum:smooth}
      Each local objective function $f_i(\cdot)$ is $L$-smooth for any $i\in\mathcal{N}$. Specifically, there exist a constant $L \geq 0$ such that
      \begin{equation} \label{eq:smooth}
       f_i(x) \leq f_i(y)+\nabla f^\top_i(y)(x-y)+\frac{L}{2}\|x-y\|^2, ~\forall x,y\in\mathbb{R}^d
      \end{equation}
      and the overall function $f(\cdot)$ is also $L$-smooth such that:
      \begin{equation} \label{eq:smooth_overall}
        f(x) \leq f(y)+\nabla f^\top(y)(x-y)+\frac{L}{2}\|x-y\|^2, ~\forall x,y\in\mathbb{R}^d
      \end{equation}
    \end{assumption}
    \begin{assumption}[\textbf{Bounded Variance and Gradient}] \label{assum:variance}
      There exist non-negative constants $\xi$ and $\delta$ such that the variance of the stochastic gradients is bounded:
      \begin{align} \label{var_1}
        \mathbb{E}_{\zeta\sim\mathcal{D}_i} \big[ \left\| \nabla_k F_i(x; \zeta) - \nabla_k f_i(x) \right\|^2 \big] \leq \xi^2, \forall i\in\mathcal{N}, x\in\mathbb{R}^d, \forall k \in \{1, \dots, B\}&.
      \end{align}
      and the block-wise stochastic gradient is bounded:
      \begin{align} \label{G}
        \mathbb{E}_{\zeta\sim\mathcal{D}_i} \left[ \left\| \nabla_k F_i (x, \zeta) \right\|^2 \right] \leq \delta^2, \forall i\in\mathcal{N}, x\in\mathbb{R}^d, \forall k \in \{1, \dots, B\}.
      \end{align}
    \end{assumption}
    \begin{assumption}[\textbf{Doubly Stochastic Matrix}] \label{assum:dsm}
      The adjacent matrix $\mathbf{W}$ associated with the communication graphs $\mathcal{G}$ is a doubly stochastic matrix. Its spectral gap satisfies:
      \begin{equation} \label{eq:dsm}
        \lambda_1(\mathbf{W}) = 1 ~\text{and}~ \max \left\{  \left|\lambda_2(\mathbf{W})\right|, \left|\lambda_N(\mathbf{W}) \right| \right\} \leq \sqrt{\rho}
      \end{equation}
      where $\lambda_i(\mathbf{W})$ is the $i$-th largest eigenvalue of $\mathbf{W}$ and $\rho \in [0,1)$ is a constant measuring the connectivity of the graph.
    \end{assumption}
    We define a virtual sequence $\hat{x}_k^{[t,r]}$ which has SGD style but scaled by adaptive step sizes: 
    \begin{align}\label{eq:error_df}
      \hat{x}_k^{[t,r]} =
      \begin{cases}
        \bar{x}_{k-1}^{[t,R]} & \text{if } r = 0, \\
        \hat{x}_k^{[t,r-1]} - \eta \bar{g}_k^{[t,r-1]}/(1 - \alpha_1^{r+1})  & \text{if } r \geq 1.
      \end{cases}
    \end{align}
    where $\bar{g}_k^{[t,r]} = \frac{1}{N} \sum_{i=1}^N \nabla_k F_i(x_{i,k}^{[t,r]})$ and $\bar{x}^{[t,r]}_k$ represents the average of ${x}^{[t,r]}_{i,k}$ over all clients. We also define an error sequence $e_k^{[t,r+1]} = \hat{x}_k^{[t,r+1]} - \bar{x}_k^{[t,r+1]}$.
    %
    % \begin{align}
    % e_k^{[t,r+1]} = \hat{x}_k^{[t,r+1]} - \bar{x}_k^{[t,r+1]}
    % \end{align}
    % 
    \begin{assumption}[\textbf{Error-controlled Block Transition}] \label{assum:error_control}
      For any block $k$ and round $t$, there exists constant $C_{\mathrm{err}} > 0$ that
      \begin{equation} \label{eq:error_control}
        \mathbb{E}\big[\mathcal{F}(\hat{x}_{k+1}^{[t,0]})\big] - \mathbb{E}\big[\mathcal{F}(\hat{x}_k^{[t,R]})\big]
        \leq C_{\mathrm{err}} \, \mathbb{E}\big[\|e_k^{[t,R]}\|^2\big],
      \end{equation}
      It is implied that the increase of the objective value along the virtual sequence, from $\hat{x}_k^{[t,R]}$ to $\hat{x}_{k+1}^{[t,0]}$, is controlled by the squared norm of error $e_k^{[t,R]}$.
    \end{assumption}

    \textbf{Theorem}~\ref{thm:convergence} establishes the convergence guarantee of DECA under the above assumption. The average squared gradient norm is bounded by the initial optimality gap, together with additional error terms induced by stochastic gradient variance, decentralized consensus, and block-wise transitions.
    \begin{theorem}\label{thm:convergence}
      Recall $x^{[0]}_{i}$ and $x^{[t,r]}_{i,k}$ denote the initial local model parameters of client $i$, and the parameters of block $k$ at client $i$ during inner step $r$ of round $t$, respectively. Let $\bar{x}^{[t,r]}_k \triangleq \frac{1}{N}\sum_{i=1}^N x^{[t,r]}_{i,k}$ and $\bar{x}^{[0]} = \frac{1}{N} \sum_{i=1}^N x^{[0]}_{i}$. Let $\mathcal{F}^{\star}$ represent the optimal value of the global objective function. Define $0< \lambda < \frac{1-\alpha_1^2\beta_1^2}{\alpha_1^2\beta_1^2}$, and
      \begin{align}
        C_{0} = \frac{(1+\rho)\rho\left( (1+\tfrac{1}{\lambda})(16\delta^2 + 8\rho) + 4 + 2\delta^2\right)}{(1-\rho)(1-\alpha_1)^2 \epsilon^2 \left(1-(1+\lambda) \beta_1^2 \alpha_1^2\right)}, ~\text{and}~
        C_{1} = \frac{\alpha_1^2 + \delta^2(1-\alpha_1)^2}{(1-\alpha_1)^2 \epsilon^2}.
      \end{align}
      Under \textbf{Assumptions}~\ref{assum:smooth}--\ref{assum:error_control}, when 
      \[
        0< \gamma \leq \min\left\{ \frac{(1-\beta_1)}{3L\sqrt{TBR}}, \frac{2(1-\beta_1)}{L+6RC_{err} +\sqrt{(L+6RC_{err})^2 + 36L^2R^2}} \right\},
      \]
      we have
      \begin{align}
        \frac{1}{TBR} \sum_{t=0}^{T-1} \sum_{r=0}^{R-1} \sum_{k=1}^{B} \mathbb{E}\left[\left\|\nabla_k \mathcal{F}\left( \bar{x}_{k}^{[t,r]} \right)\right\|^2\right]
        \leq& \frac{8(1 -\beta_1^{R})}{TBR\gamma}\left( \e{\mathcal{F}(\bar{x}^{0}) - \mathcal{F}^{\star}} \right) + 8C_{1}C_{err}(1 -\beta_1^{R})\gamma  \nonumber \\
       & + \frac{4(1 -\beta_1^{R})L \gamma\xi^2}{N(1 -\beta_1)^2} + \frac{4L^2\gamma^2 (4C_{0}+3RC_{1})}{(1-\rho)(1-\beta_1)}
       % 
       % & + \frac{16(1 -\beta_1^{R}) L^2C_{0}\gamma^2}{(1-\rho)(1-\beta_1)} + \frac{12(1 -\beta_1^{R})R C_{1} L^2\gamma^2}{(1-\beta_1)} + 8C_{1}C_{err}(1 -\beta_1^{R})\gamma
      \end{align}
    \end{theorem}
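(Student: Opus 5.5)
The proof follows the standard virtual-sequence argument for decentralized momentum methods. We run it one block at a time and then glue consecutive blocks together using Assumption~\ref{assum:error_control}.

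\textbf{Step 1: one-step descent of the virtual iterate.} Fix $t$ and $k$. Because $\mathbf{W}$ is doubly stochastic, averaging \eqref{eq:agg_model} over clients gives $\bar{x}_k^{[t,r+1]} = \bar{x}_k^{[t,r]} - \frac{\gamma}{N}\sum_i \widehat{m}^{[t,r]}_{i,k}/(\sqrt{\widehat{v}^{[t,r]}_{i,k}}+\epsilon)$. Unrolling \eqref{eq:qgm_update_m}, the first moment is a geometrically weighted sum of past gradients (weights $(\alpha_1\beta_1)^{s}\beta_1(1-\alpha_1)$) plus discrepancy terms $h$ (weights $(1-\beta_1)$). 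I would compare this averaged update with the virtual SGD sequence $\hat{x}_k$ in \eqref{eq:error_df}, choosing the effective step $\eta$ proportional to $\gamma/(1-\beta_1)$. Applying $L$-smoothness \eqref{eq:smooth_overall} to $\hat{x}_k^{[t,r+1]}$ versus $\hat{x}_k^{[t,r]}$ gives
\[
\mathbb{E}\,\mathcal{F}(\hat{x}_k^{[t,r+1]}) \le \mathbb{E}\,\mathcal{F}(\hat{x}_k^{[t,r]}) - \tfrac{\eta}{2}\mathbb{E}\|\nabla_k\mathcal{F}(\bar{x}_k^{[t,r]})\|^2 + \tfrac{L\eta^2\xi^2}{N} + c\,\eta L^2\big(\mathbb{E}\|e_k^{[t,r]}\|^2 + \tfrac1N\textstyle\sum_i\mathbb{E}\|x^{[t,r]}_{i,k}-\bar{x}^{[t,r]}_k\|^2\big).
\]
To get this form, $\nabla\mathcal{F}(\hat{x})$ is replaced by $\nabla\mathcal{F}(\bar{x})$ at the cost of $L^2\|e\|^2$, and $\bar g$ is replaced by $\nabla\mathcal{F}(\bar x)$ at the cost of the consensus error. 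The $1/(\sqrt{\hat v}+\epsilon)$ scaling is bounded above by $1/\epsilon$ and handled componentwise. The factor $(1-\beta_1^R)$ in the final bound comes from summing the $\beta_1$-geometric weights over the $R$ inner steps.

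\textbf{Step 2: consensus and virtual errors.} For the consensus error I use Assumption~\ref{assum:dsm}: writing $X^{[t,r+1]} = \mathbf{W}(X^{[t,r]}-\gamma U^{[t,r]})$ and $\|(\mathbf{W}-\tfrac1N\mathbf{1}\mathbf{1}^\top)Z\|^2\le\rho\|Z\|^2$, Young's inequality with parameter gives $(1+\rho)/2$-contraction. The update $U$ has per-coordinate magnitude at most $\|m\|/\epsilon$, and I bound $\mathbb{E}\|m\|^2$ by unrolling \eqref{eq:qgm_update_m}. Here Young's inequality with $\lambda$ splits old momentum from new inputs, and the recursion $\|m^{r+1}\|^2\le(1+\lambda)\alpha_1^2\beta_1^2\|m^r\|^2+(1+1/\lambda)(\cdots)$ converges precisely when $\lambda<(1-\alpha_1^2\beta_1^2)/(\alpha_1^2\beta_1^2)$. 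The inputs $\|g\|^2\le\delta^2$ (Assumption~\ref{assum:variance}) and $\|h\|=1$ then give $\sup_r\mathbb{E}\|x_{i,k}-\bar x_k\|^2\le \gamma^2C_0/(1-\rho)$. For the virtual error, $e_k^{[t,r]}$ accumulates differences between the Adam-scaled momentum direction and the plain gradient over at most $R$ steps. Since $\widehat{m}/(\sqrt{\widehat v}+\epsilon)$ has squared norm at most $(\alpha_1^2+\delta^2(1-\alpha_1)^2)/((1-\alpha_1)^2\epsilon^2)=C_1$, I get $\mathbb{E}\|e_k^{[t,r]}\|^2\lesssim R\gamma^2 C_1$. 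This is the origin of the $3RC_1$ term, and of $C_1C_{\rm err}\gamma$ after dividing by $\gamma$.

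\textbf{Step 3: telescoping.} Sum Step 1 over $r=0,\dots,R-1$. Chain blocks via $\hat x_{k+1}^{[t,0]}=\bar x_k^{[t,R]}$ and Assumption~\ref{assum:error_control}, which costs $C_{\rm err}\mathbb{E}\|e_k^{[t,R]}\|^2$ per transition. Then sum over $k$ and $t$, so that $\mathcal{F}(\hat x_1^{[0,0]})-\mathcal{F}^\star$ telescopes. The two step-size conditions serve to absorb the $L^2\eta\,\mathbb{E}\|e\|^2$ and $RC_{\rm err}$ feedback terms into the descent term, which leaves a constant fraction such as $\eta/4$. The second condition is the root of the quadratic $3L^2R^2\gamma^2+(L+6RC_{\rm err})(1-\beta_1)\gamma\le(1-\beta_1)^2$ written out explicitly. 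Dividing by $TBR\eta/8$ gives the stated four terms.

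\textbf{Main obstacle.} The delicate step is Step 1. The update direction uses per-client adaptive denominators and the nonlinear normalized signal $h$, so the averaged step is not an unbiased scaled gradient. I would first try to treat everything other than $\beta_1(1-\alpha_1)g$ as a bounded perturbation controlled by $1/\epsilon$ and $\|h\|=1$. I would also lower-bound the effective step by $\gamma/(\sqrt{\delta^2/(1-\alpha_2)}+\epsilon)$ so that an inner-product descent term survives. If this yields incorrect constants, I would instead absorb the mismatch into $e_k$ and lean harder on Assumption~\ref{assum:error_control}.
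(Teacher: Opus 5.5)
Your skeleton is the paper's, and you list its actual core only as a fallback. The shared pieces are: an SGD-type virtual sequence $\hat{x}_k$ per block, with $L$-smoothness applied to $\hat{x}_k$; the bound $\|u\|_\infty\le 1/\epsilon$ with the Hadamard inequality; the $(1+\rho)/2$ contraction of the consensus error; a $\lambda$-weighted Young recursion for the momentum; a crude bound on $e_k$; and blocks glued through Assumption~\ref{assum:error_control}.

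The difficulty is not in Step 1. Since $\hat{x}_k$ moves by $-\gamma\bar{g}_k^{[t,r]}/(1-\alpha_1^{r+1})$ (the proof effectively takes $\eta=\gamma$), the smoothness step only sees the unbiased stochastic gradient average at the client iterates. The preconditioner, the momentum and $h$ enter only through $\mathbb{E}\|e_k\|^2$ and the consensus error. The fix you put first would fail if run on the actual averaged step, for three reasons. First, Assumption~\ref{assum:variance} bounds $\nabla_k F_i$ only in mean square, so there is no almost-sure bound on $\widehat{v}$ and hence no lower bound on the effective step. Second, $\widehat{v}^{[t,r]}$ contains $g^{[t,r]}\odot g^{[t,r]}$, so the preconditioner is correlated with the current gradient. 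Third, the $h$-contribution to the direction has size independent of $\gamma$, so treating it as a bounded perturbation leaves an $O(1)$ floor once you divide by $\gamma$.

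What actually removes this bias in the paper is your fallback. The error resets, $e_k^{[t,0]}=0$, at every block start, and $\mathbb{E}\|e_k\|^2=O(\gamma^2)$ within a block for fixed $R$. The jump $\hat{x}_k^{[t,R]}\to\hat{x}_{k+1}^{[t,0]}=\bar{x}_k^{[t,R]}$ is charged quadratically by Assumption~\ref{assum:error_control}. Smoothness alone would leave the linear term $-\langle\nabla\mathcal{F}(\hat{x}_k^{[t,R]}),e_k^{[t,R]}\rangle$, which again gives a non-vanishing floor.

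Your account of the constants also departs from the paper. There is no rescaling $\eta\propto\gamma/(1-\beta_1)$ and no $\beta_1$-geometric sum. The factors printed as $(1-\beta_1^R)$, $(1-\beta_1)^{-1}$ and $(1-\beta_1)^{-2}$ in the statement appear in the proof as $(1-\alpha_1^R)$, $(1-\alpha_1)^{-1}$ and $(1-\alpha_1)^{-2}$. They come from the bias correction $1/(1-\alpha_1^{r+1})$ in the virtual step, bounded below by $1/(1-\alpha_1^R)$ in the descent term and above by $1/(1-\alpha_1)$ elsewhere. The bound and step-size condition actually derived in the proof are in $\alpha_1$, so a $\beta_1$-rescaling moves you away from the stated constants.

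The second step-size condition exists because the paper keeps the negative term $\bigl(\frac{L\gamma^2}{2(1-\alpha_1)^2}-\frac{\gamma}{2(1-\alpha_1)}\bigr)\mathbb{E}\|\frac{1}{N}\sum_i\nabla_k f_i(x_{i,k}^{[t,r]})\|^2$ from smoothness, which you dropped. That term absorbs the gradient-norm part of the paper's $e_k$ bound. The resulting condition is $9L^2R^2y^2+(L+6RC_{\mathrm{err}})y\le 1$ with $y=\gamma/(1-\alpha_1)$, so the leading coefficient is $9$, not $3$. With your constant bound on $e_k$ there is no feedback to absorb, and that condition plays no role.

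Finally, $e_k^{[t,r]}$ is a sum of up to $R$ increments of size $O(\gamma\sqrt{C_1})$ that are not zero-mean. Cauchy--Schwarz therefore gives $\mathbb{E}\|e_k^{[t,r]}\|^2$ of order $R^2\gamma^2C_1$, not $R\gamma^2C_1$. The paper makes the same slip: it drops the sum over $\ell$ when bounding $\mathbb{E}\|e_k^{[t,r+1]}\|^2$. A correct count puts an extra factor $R$ on both the $C_1C_{\mathrm{err}}\gamma$ term and the $RC_1$ term.
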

    %
    % \todo[inline]{What does $\eta$ stand for?}

    In the following corollary, we characterize the convergence rate of DECA more explicitly.
    \begin{corollary} \label{cor:convergence}
      Let learning rate $\gamma = \mathcal{O}(1/\sqrt{TBR})$. When $T$ is sufficiently large such that $T\geq \frac{ \mathcal{O}\left(L^2 + R^2C_{err}^2 + L^2R^2\right)}{L^2BR}$, we have $\frac{1}{TBR} \sum_{t=0}^{T-1} \sum_{r=0}^{R-1} \sum_{k=1}^{B} \mathbb{E}\left[\left\|\nabla_k \mathcal{F}(\bar{x}_{k}^{[t,r]})\right\|^2\right] = \mathcal{O}\left(\frac{1}{\sqrt{TBR}}\right).$
      %
      % \begin{align}\label{eq:main_eq}
      %   \frac{1}{TBR} \sum_{t=0}^{T-1} \sum_{r=0}^{R-1} \sum_{k=1}^{B} \mathbb{E}\left[\left\|\nabla_k \mathcal{F}(\bar{x}_{k}^{[t,r]})\right\|^2\right] = \mathcal{O}\left(\frac{1}{\sqrt{TBR}}\right).
      % \end{align}
    \end{corollary}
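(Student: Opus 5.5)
The corollary follows from Theorem~\ref{thm:convergence} by substituting $\gamma = c/\sqrt{TBR}$ and checking the admissibility condition on $\gamma$. I will fix $\gamma = c/\sqrt{TBR}$ with a constant $c \le (1-\beta_1)/(3L)$. With this choice the first constraint $\gamma \le (1-\beta_1)/(3L\sqrt{TBR})$ holds automatically.

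\textbf{Step 1: the admissibility condition.} Denote the second constraint by $\gamma_{\max} := \frac{2(1-\beta_1)}{L+6RC_{err}+\sqrt{(L+6RC_{err})^2+36L^2R^2}}$. Since $\sqrt{a^2+b^2}\le a+b$, the denominator is at most $2(L+6RC_{err}+6LR)$, and therefore
\[
\gamma_{\max}\ \ge\ \frac{1-\beta_1}{L+6RC_{err}+6LR}.
\]
So it suffices to require $c/\sqrt{TBR}\le (1-\beta_1)/(L+6RC_{err}+6LR)$. Squaring and using $(a+b+c)^2\le 3(a^2+b^2+c^2)$, this holds whenever
\[
T \;\ge\; \frac{3c^2\,(L^2+36R^2C_{err}^2+36L^2R^2)}{(1-\beta_1)^2\,BR}.
\]
With $c=\Theta((1-\beta_1)/L)$ this is exactly $T\ge \mathcal{O}(L^2+R^2C_{err}^2+L^2R^2)/(L^2BR)$, the hypothesis of the corollary.

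\textbf{Step 2: substitute into the four terms of Theorem~\ref{thm:convergence}.} I will use $1-\beta_1^R\le 1$ throughout and treat $L,\xi,N,\beta_1,\rho,C_0,C_1,C_{err}$ as constants.
\begin{itemize}
\item The optimality-gap term $\frac{8(1-\beta_1^R)}{TBR\gamma}\,\mathbb{E}[\mathcal{F}(\bar x^0)-\mathcal{F}^\star]$ becomes $\mathcal{O}\big(\tfrac{1}{c\sqrt{TBR}}\big)$.
\item The block-transition term $8C_1C_{err}(1-\beta_1^R)\gamma$ is $\mathcal{O}(1/\sqrt{TBR})$.
\item The variance term $\frac{4L\gamma\xi^2}{N(1-\beta_1)^2}$ is $\mathcal{O}(1/\sqrt{TBR})$.
\item The consensus term $\frac{4L^2\gamma^2(4C_0+3RC_1)}{(1-\rho)(1-\beta_1)}$ is $\mathcal{O}\big(\tfrac{1+R}{TBR}\big)$.
\end{itemize}
The last term is the only one where care is needed, since it carries an extra factor of $R$. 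Write $\frac{R}{TBR}=\frac{1}{TB}$. Then
\[
\frac{1}{TB}\le \frac{1}{\sqrt{TBR}} \iff TB \ge R,
\]
and this is implied by the lower bound on $T$ from Step 1, which contains the term $L^2R^2/(L^2BR)=R/B$. The remaining piece $\frac{1}{TBR}$ is trivially at most $\frac{1}{\sqrt{TBR}}$.

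\textbf{Step 3: combine.} Summing the four bounds gives $\mathcal{O}(1/\sqrt{TBR})$. The main obstacle is bookkeeping the hidden constants rather than any real difficulty. In particular, the $R$-dependence of the consensus term has to be absorbed using the lower bound on $T$, and the step-size condition of Theorem~\ref{thm:convergence} has to be verified via the elementary bound on $\gamma_{\max}$ in Step 1.
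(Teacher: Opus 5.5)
Your proposal is correct and follows the same route as the paper: substitute $\gamma=\Theta(1/\sqrt{TBR})$ into Theorem~\ref{thm:convergence} and turn the step-size admissibility condition into the stated lower bound on $T$. Your write-up is in fact cleaner than the paper's. The paper states the admissibility inequality with the direction reversed and does not explicitly handle the extra factor $R$ in the $\gamma^2$ term. Your choice $c=\Theta((1-\beta_1)/L)$ explains the $L^2$ in the denominator of the hypothesis, and its $R/B$ part absorbs that extra factor of $R$.
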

    %
    %Furthermore, assuming $B$ are fixed, the algorithm converges at a rate of $\mathcal{O}\left(\frac{1}{\sqrt{TR}}\right)$, which matches the optimal asymptotic rates established for state-of-the-art decentralized learning algorithms~\cite{EsfandiariTJBHHS-ICML21,XuZW-TPAMI21,GhiasvandAP-REALM25}. The detailed proof is provided in Appendix~\ref{sec:convergenceproof}.
    %
    %\textbf{Corollary}~\ref{cor:convergence} formally establishes that DECA achieves a convergence rate of $\mathcal{O}(B/\sqrt{TR})$, which aligns with the optimal asymptotic rates found in state-of-the-art decentralized learning algorithms (e.g., \cite{LinKSJ-ICML21,LianZZHZL-NIPS17,EsfandiariTJBHHS-ICML21,XuZW-TPAMI21,GhiasvandAP-REALM25}) when the number of blocks $B$ is treated as a constant. Notably, this guarantee holds even under non-IID data distributions. Intuitively, this result quantifies the trade-off inherent in the block-wise strategy: while partitioning the model into $B$ blocks introduces a linear scaling factor to the convergence timeline (effectively requiring $B$ times more iteration to touch all parameters), the algorithm preserves the fundamental $1/\sqrt{TR}$ convergence speed of stochastic optimization. This confirms that DECA theoretically ensures convergence to a solution comparable to full-model synchronization methods, but with amortized resource costs over sequential block updates, as will be shown later.
    %
    The above theoretical results formally establish that DECA achieves a convergence rate of $\mathcal{O}(1/\sqrt{TBR})$, which matches the optimal asymptotic rates of state-of-the-art decentralized learning algorithms (e.g., \cite{LinKSJ-ICML21,LianZZHZL-NIPS17,EsfandiariTJBHHS-ICML21,XuZW-TPAMI21,GhiasvandAP-REALM25}) when the number of blocks $B$ is treated as a constant. Notably, this guarantee holds even under non-IID data distributions. 
    % 
    % Intuitively, this theoretical result captures the fundamental trade-off of block-wise stochastic optimization: partitioning the model into $B$ blocks introduces a linear factor in the convergence bound, reflecting the sequential traversal of all parameters, while preserving the $\mathcal{O}(1/\sqrt{TR})$ convergence rate of stochastic optimization. This demonstrates that DECA converges to solutions comparable to full-model decentralized methods.
    % 
    Intuitively, this theoretical result captures the fundamental trade-off of block-wise stochastic optimization: partitioning the model into $B$ blocks reflects the sequential traversal of all parameters. The convergence rate of $\mathcal{O}(1/\sqrt{TBR})$ matches the standard stochastic optimization rate $\mathcal{O}(1/\sqrt{TR})$ under a unified accounting of block-level updates, demonstrating that DECA converges to solutions comparable to full-model decentralized methods.

\section{Experiments}\label{sec:exp}
  In this section, we first introduce the experimental settings in Sec.\ref{ssec:expsetup}, and then report the results of our extensive experiments in Sec.\ref{sssec:bslcls} and Sec.\ref{ssec:asexp}. More details on the experimental setup and additional experiment results are provided in Appendices~\ref{sec:app_expsetting} and \ref{sec:app_expresults}, respectively.

  \subsection{Experiment Settings} \label{ssec:expsetup}
    We validate our framework across four widely adopted LLMs with varying parameter scales: {Llama-2-7B}~\cite{TouvronMSAABBBSBB-arXiv23}, {Llama-3.1-8B-Instruct}~\cite{DubeyJPAALMSF-ARXIV24}, {Qwen2-1.5B}~\cite{YangYHZYZLILH-arXiv24}, and {Qwen2.5-3B-Instruct}~\cite{QwenAY-arXiv25}. Our evaluation encompasses both classification and generation tasks. 
    For classification, we use four standard datasets (NWGI, AGNEWS, TFNS, and MNLI) with their pre-defined training and test splits, and report accuracy (Acc.) and $F_1$ score as evaluation metrics. To evaluate generation capability, we use the Alpaca dataset and assess model outputs with Vicuna (VIC.)~\cite{ChiangLLSWZZZZG-blog2023} and MT-Bench (MT.)~\cite{ZhengCSZWZLLLX-NIPS24}.

    % whereas for the generative dataset we evaluate model outputs using Rouge-L (ROU-L)~\cite{Lin-TSBO04}, Meteor (MET.)~\cite{BanerjeeL-ACL05}, CIDEr~\cite{VedantamLP-CVPR15} and the MT-Bench (MT.)~\cite{ZhengCSZWZLLLX-NIPS24}. 

    We consider an Erd\H{o}s--R\'enyi (ER) graph consisting of eight clients. To simulate non-IID data distributions, we partition the data using a Dirichlet distribution parameterized by $\mu$,~\cite{LinKSJ-ICML21, TakezawaBNSY-TMLR23, GhiasvandAP-REALM25, GuoZWFQ-ICLR25}. A smaller $\mu$ (e.g., $\mu=0.25$ in our case) induces more skewed client data distributions and thus stronger heterogeneity.
    We set the learning rate to $\gamma = 5 \times 10^{-5}$, the total number of communication rounds to $T=4$, and the number of local update steps per block to $R=48$. For the Adam optimizer, we adopt the following hyperparameters: $\alpha_1=0.9$, $\alpha_2=0.999$, $\epsilon=1 \times 10^{-8}$. For BMA, we adopt $\beta_1=0.9$, $\beta_2=0.999$ for classification tasks and $\beta_1=0.3$, $\beta_2=0.333$ for generation tasks. For model partitioning, we group every two consecutive Transformer layers into a trainable block, such that the Llama-2-7B, Llama-3.1-8B, Qwen2-1.5B, and Qwen2.5-3B models are partitioned into 32, 36, 28 and 32 blocks, respectively. 
    We select three state-of-the-art decentralized LLM fine-tuning methods as baselines:
    \textbf{Dec-Adapter}~\cite{HoulsbyGJMDGAG-ICML19} extends PEFT methods originally implemented via adapters to decentralized scenarios.
    \textbf{Dec-LoRA}~\cite{GhiasvandAP-REALM25} and \textbf{DeCAF}~\cite{Saadati-NeuralNet26} propose decentralized LoRA strategies.

  \subsection{Comparison with Baselines} \label{sssec:bslcls}
    We first compare DECA with existing baselines on classification tasks. As shown in Table~\ref{tab:exp_classification}, DECA achieves the best average accuracy and $F_1$ score on both Qwen2.5-3B and Llama-3.1-8B models, demonstrating its consistent effectiveness across model scales. On Qwen2.5-3B, DECA outperforms the strongest baseline by 2.95\% in average accuracy and 3.36\% in average $F_1$ score, with improvements observed on all four datasets. On Llama-3.1-8B, DECA also obtains the best average results. These results indicate that DECA effectively alleviates the limitations imposed by the low-rank approximations used in prior decentralized fine-tuning methods.
    %
    % On average, DECA surpasses the best DL baseline, Dec-LoRA, by substantial margins, specifically reaching gains of up to 1.6\% in accuracy and 1.3\% in $F_1$ score on the Llama-3.1-8B-Instruct model. Notably, DECA achieves these results without the need for an additional central server.
    %
    %
    \begin{table*}[t!]
    \centering
    \begin{small}
    \caption{Performance comparison in classification tasks. Values in \textbf{bold} indicate the best performance among the \emph{decentralized} methods, while \underline{underlined} values denote the second best results. The last column reports the average performance across all datasets.}
    \label{tab:exp_classification}
    \begin{tabular}{l *{5}{c} *{5}{c}}
    \toprule
    \multirow{2}{*}{\textbf{Methods}} 
    % & \multicolumn{12}{c}{\textbf{Datasets \& Evalutions}} \\
    % %
    % %
    % \cmidrule(lr){2-13}
    & \multicolumn{2}{c}{\textbf{NWGI}} & \multicolumn{2}{c}{\textbf{AGNEWS}} & \multicolumn{2}{c}{\textbf{TFNS}} & \multicolumn{2}{c}{\textbf{MNLI}}  & \multicolumn{2}{c}{\textbf{Average}} \\
    \cmidrule(lr){2-3} \cmidrule(lr){4-5} \cmidrule(lr){6-7} \cmidrule(lr){8-9} \cmidrule(lr){10-11} 
    & \textit{Acc.} & $F_1$ & \textit{Acc.} & $F_1$ & \textit{Acc.} & $F_1$ & \textit{Acc.} & $F_1$ & \textit{Acc.} & $F_1$ \\
    \midrule
    \rowcolor{lightgreen} \multicolumn{11}{c}{\textit{Qwen2.5-3B}} \\
    \midrule
    %
    % FFA-LoRA
    % &50.11 &28.68 &65.20 &65.75 &69.38 &58.44 &68.75 &66.90 &52.05 &45.41 & 61.10 & 53.04 \\
    %
    % FedSA-LoRA
    % & 56.39 & 37.03 & 74.66 & 74.29 & 71.73 & 61.57 & 78.32 & 77.78 & 69.05 & 64.57 & 70.03 & 63.05 \\
    %
    % FedPrompt
    % & 37.78 & 22.87 & 55.58 & 56.48 & 43.09 & 28.82 & 47.51 & 39.87 & 36.85 & 29.62 & 44.16 & 35.53 \\
    % \hdashline
    %
    Dec-Adapter
    &\underline{52.72} &\underline{35.55} &83.82 &83.36 &88.36 &\underline{85.47} &\underline{80.57} &\underline{78.65} &\underline{76.37} &\underline{70.76} \\
    Dec-LoRA
    % &51.44 &33.23 &81.30 &81.11 &85.51 &79.80 &77.62 &76.43 &73.97 &67.64 \\
    &52.44 &35.23 &\underline{87.30} &\underline{86.11} &84.69 & 78.31 &77.63 &76.01 &75.52 &68.92 \\
    DeCAF
    &42.93 &27.96 &67.45 &65.36 &66.12 &55.52 &72.09 &67.62 &62.15 &54.12 \\
    \textbf{DECA}
    &\textbf{53.51} &\textbf{36.09} &\textbf{88.84} &\textbf{88.79} &\textbf{89.25} &\textbf{86.24} &\textbf{85.67} &\textbf{85.34} &\textbf{79.32} &\textbf{74.12} \\
    % 
    % \textbf{DECA(w/o QM)}
    % &48.67 &31.53 &83.67 &83.46 &82.38 &76.62 &77.62 &75.98 &73.09 &66.90 \\
    % %
    % \textbf{$4$-DECA}
    % &50.34 &35.01 &83.82 &83.36 &85.32	&80.66 &78.20	&76.89 &74.42 &68.98 \\
    % % 
    % \textbf{$8$-DECA}
    % &49.10 &33.71 &83.05 &82.57 &84.34 &80.14 &77.70 &75.75 &73.55 &68.04 \\
    %
    \midrule
    \rowcolor{darkgreen} \multicolumn{11}{c}{\textit{Llama-3.1-8B}} \\
    \midrule
    %
    % FFA-LoRA 
    % & 53.51 & 34.59 & 82.36 & 80.97 & 82.66 & 79.45 & 69.86 & 66.07 & 58.50 & 49.37 & 70.48 & 62.69 \\
    %
    % FedSA-LoRA
    % & 57.10 & 40.51 & 83.39 & 82.23 & 72.78 & 56.32 & 78.07 & 77.48 & 67.33 & 60.25 & 71.73 & 63.36 \\
    %
    Dec-Adapter
    & 12.47 &4.71 &24.66 &12.24 &23.29 &15.24 &32.68 &20.10 &23.28 &13.07 \\
    Dec-LoRA
    & \underline{54.72} & \underline{38.35} &\underline{90.76} &\underline{90.74} &\textbf{91.03} &\textbf{88.42} &\underline{85.15} &\underline{84.68} &\underline{80.42} &\underline{75.55} \\
    DeCAF
    & 47.13 & 29.03 & 70.74 & 65.13 & 67.98 & 56.13 & 72.49 & 67.95 &64.59 &54.56 \\
    \textbf{DECA} 
    & \textbf{57.36} &\textbf{39.82} &\textbf{92.79} &\textbf{92.77} &\underline{90.42} &\underline{87.66} &\textbf{88.64} & \textbf{88.46} &\textbf{82.30} &\textbf{77.18} \\
    %
    % \textbf{$\tau$-DECA}
    % &52.61 &35.58 & \underline{85.45} & \underline{83.95} & \underline{86.46} & \underline{83.26} &73.20 & 70.46 & \underline{76.93} & \underline{71.62} \\
    %
    % \midrule
    % \rowcolor{darkgreen} \multicolumn{11}{c}{\textit{Qwen2.5-14B-Instruct}} \\
    % \midrule
    %
    % FFA-LoRA 
    % & 56.91 & 30.59 & 86.32 & 84.24 & 78.27 & 76.32 & 80.04 & 78.03 & 57.75 & 47.50  & 72.46 & 64.34 \\
    %
    % FedSA-LoRA
    % & \textbf{58.76} & 35.45 & 84.25 & 83.23 & 75.88 & 65.81 & \textbf{81.36} & \textbf{79.16} & 64.70 & 59.39  & 74.39 & 66.41 \\
    %
    % FedPrompt
    % & 53.48 & 29.93 &82.30 & 81.95 &83.84 &76.61 &50.47 & 43.07 &51.85 & 43.14 & 64.39 & 54.94 \\ \hdashline
    %
    % Dec-LoRA
    % & 54.51 &\textbf{40.16} &92.95 &92.95 &\textbf{90.56} &87.09 &87.76 & 87.29 &81.45 &76.87 \\
    %
    % DeCAF
    % & 47.69 & 27.35 & 72.65 & 68.46 & 69.49 & 57.37 &65.07 &60.22 & 62.40 & 51.68 \\
    %
    % \textbf{DECA} 
    % & \textbf{58.00} &38.30 &\textbf{93.41} &\textbf{93.41} &90.20 &\textbf{87.25} &\textbf{89.62} & \textbf{89.46} &\textbf{82.81} &\textbf{77.11} \\
    %
    % \textbf{$\tau$-DECA}
    % & 53.54 & \underline{36.87} & \underline{86.93} & \underline{85.20} & \underline{88.39} & \underline{85.45} & 73.56 & 70.56 & \underline{76.81} & \underline{71.33} \\
    %
    \bottomrule
    \end{tabular}
    \end{small}
    \end{table*}

    \begin{table*}[t!]
    \centering
    \caption{Performance comparison in generation tasks.}
    \label{tab:gen_bsl}
    \begin{small}
    \begin{tabular}{l *{4}{c} | *{4}{c} }
    \toprule
    \multirow{2}{*}{\textbf{Methods}} & \multicolumn{4}{c}{Qwen2-1.5B} & \multicolumn{4}{c}{Llama-2-7B} \\
    \cmidrule(lr){2-5} \cmidrule(lr){6-9} 
    & \textbf{VIC.} & \textbf{MT-1} & \textbf{MT-2} & \textbf{MT.} & \textbf{VIC.} & \textbf{MT-1} & \textbf{MT-2} & \textbf{MT.} \\
    \midrule
    Dec-Adapter
    & 5.38 & 4.18 & 2.65 & 3.41 & 4.86 & 3.92 & 2.62 & 3.27 \\
    Dec-LoRA 
    & 5.52 & 4.43 & 2.95 & 3.69 & 5.60 & 4.30 & 2.87 & 3.59 \\
    DeCAF
    &\textbf{5.71} &4.87 &3.26 &4.06 & \textbf{5.91} & \underline{4.31} & \textbf{3.00} & \underline{3.65}  \\
    \textbf{DECA}
    & \underline{5.68} &\textbf{4.88} & \textbf{3.30} &\textbf{4.10}  & \underline{5.86} & \textbf{4.50} & \underline{2.93} & \textbf{3.72} \\
    \bottomrule
    \end{tabular}
    \end{small}
    \end{table*}

    Table~\ref{tab:gen_bsl} compares DECA with the decentralized PEFT baselines on generation tasks. On Qwen2-1.5B, DECA achieves the best MT-1 (4.88), MT-2 (3.30), and overall MT (4.10) scores, while remaining close to DeCAF on VIC. On Llama-2-7B, DECA obtains the highest MT-1 (4.50) and overall MT (3.72) scores, although DeCAF performs slightly better on VIC. and MT-2. Overall, DECA consistently delivers the best or near-best performance across model scales, validating the effectiveness of the proposed method for generation tasks. It is worth noting that, although DeCAF can achieve slightly better results on a few metrics by employing TSVD decomposition, it incurs substantially higher computation overhead, up to $230.52\times$ that of DECA, as shown in Appendix~\ref{ssec:app_exp_resource}.    
    % %
    %

    To illustrate the convergence behaviors of the different algorithms on the classification and generation tasks, we plot the training loss trajectories over model update steps (which correspond to iterations of the innermost loop $r$), in Fig.~\ref{fig:loss_tfns} and Fig.~\ref{fig:loss_alpaca}, respectively. On the classification tasks, DECA achieves a substantially lower loss than Dec-Adapter and converges at a faster rate, demonstrating more efficient optimization under non-IID data. Compared with the LoRA-based decentralized PEFT baselines, Dec-LoRA and DeCAF, DECA exhibits comparable convergence rates and final loss values, indicating that it can maintain stable optimization performance while using full-parameter fine-tuning. On the generation tasks, DECA again clearly outperforms Dec-Adapter, which consistently suffers from the highest loss values and the slowest convergence. Meanwhile, DECA shows convergence behavior similar to Dec-LoRA, reaching comparable final losses with a similar convergence speed. Although DeCAF continues to reduce its loss throughout training (with much higher computation overhead), it exhibits larger fluctuations within the observed update steps. Overall, DECA achieves competitive convergence performance compared with LoRA-based decentralized algorithms.

    % \textcolor{green}{To illustrate the convergence behaviors of the decentralized methods (DECA, $\tau$-DECA, Dec-LoRA, and DeCAF) on classification and generation tasks, we plot the training loss trajectories over model update steps (which correspond to iterations of the innermost loop $r$), in Fig.~\ref{fig:loss_tfns} and Fig.~\ref{fig:loss_alpaca}, respectively. Across all model scales, DECA and $\tau$-DECA consistently converge faster and achieve lower loss values than decentralized PEFT baselines such as Dec-LoRA and DeCAF, demonstrating more efficient and stable optimization under non-IID data. DECA typically exhibits the steepest early loss reduction and the smoothest convergence, while $\tau$-DECA closely follows with slightly slower initial progress due to reduced synchronization frequency, yet reaches comparable final losses. The performance gap is particularly pronounced in the generation task, indicating that full-parameter fine-tuning is more effective than low-rank adaptations for capturing complex generative behaviors. Overall, these results confirm that DECA achieves superior convergence across tasks, and that $\tau$-DECA retains most of this advantage while improving communication efficiency. }

    %
    \begin{figure*}[t]
    \centering
      \begin{minipage}[t]{0.48\textwidth}
      \centering
      \includegraphics[width=\textwidth]{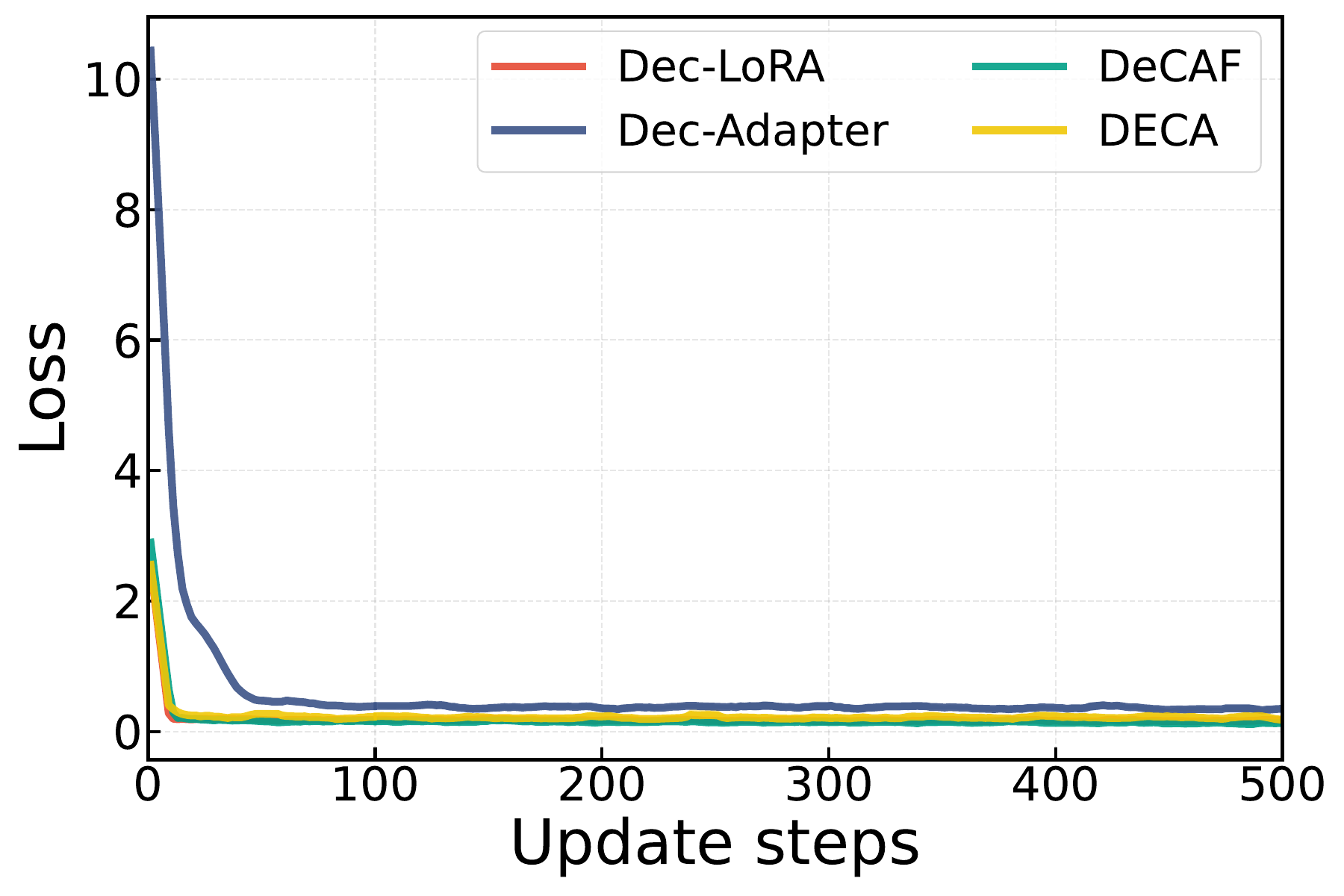}
      \captionof{figure}{Training loss of different algorithms on TFNS dataset using Llama-3.1-8B model.}
      \label{fig:loss_tfns}
      \end{minipage}
      \hfill
      \begin{minipage}[t]{0.48\textwidth}
      \centering
      \includegraphics[width=\textwidth]{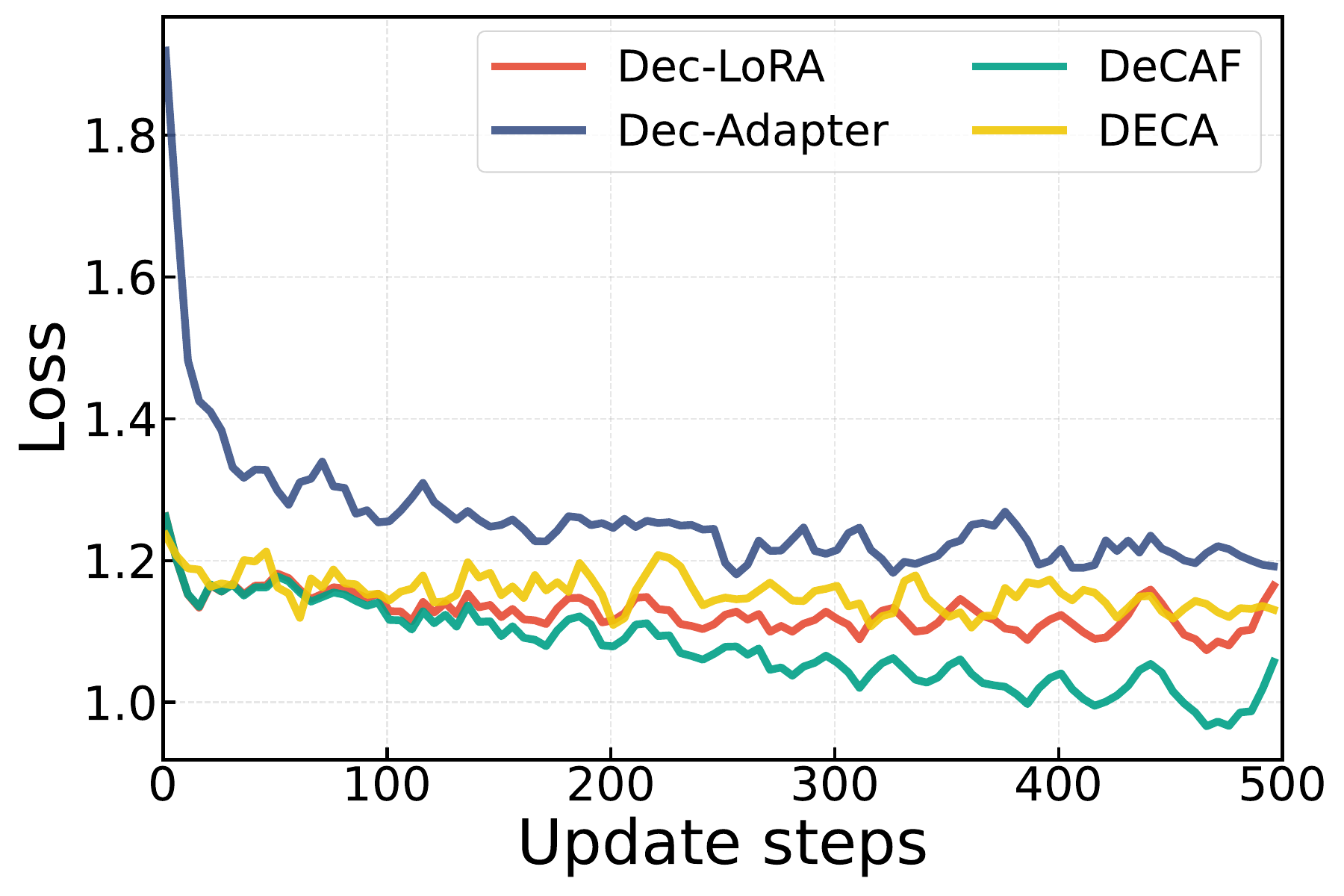}
      \captionof{figure}{Training loss of different algorithms on Alpaca dataset using Qwen2-1.5B model.}
      \label{fig:loss_alpaca}
      \end{minipage}
    \end{figure*}

    \begin{figure*}[t]
        \centering
        \begin{minipage}[t]{0.48\textwidth}
            \centering
            \includegraphics[width=\textwidth]{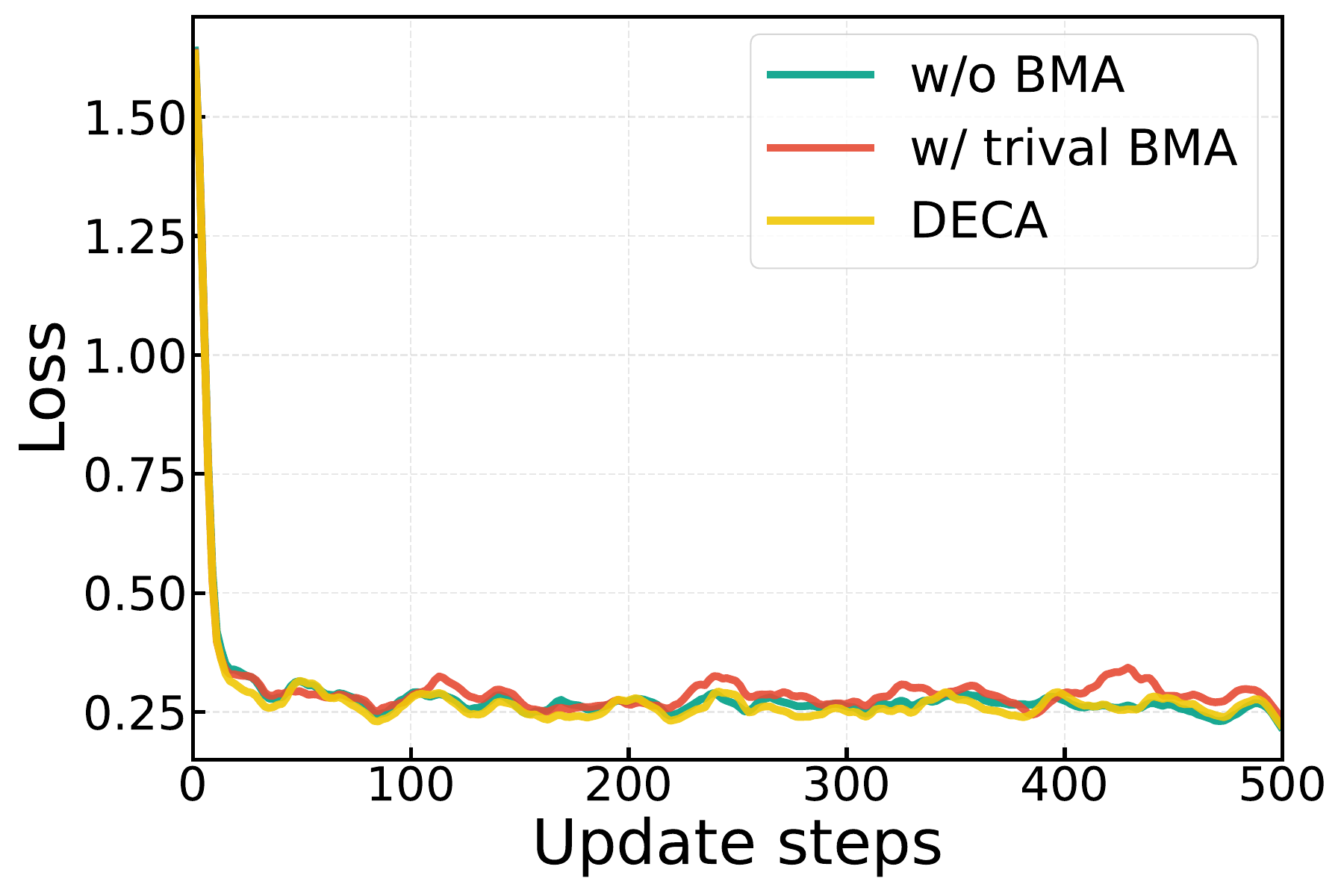}
            \captionof{figure}{Training loss of different algorithms on NWGI dataset using Llama-3.1-8B model.}
            \label{fig:loss_abl_llama}
        \end{minipage}
        \hfill
        \begin{minipage}[t]{0.48\textwidth}
            \centering
            \includegraphics[width=\textwidth]{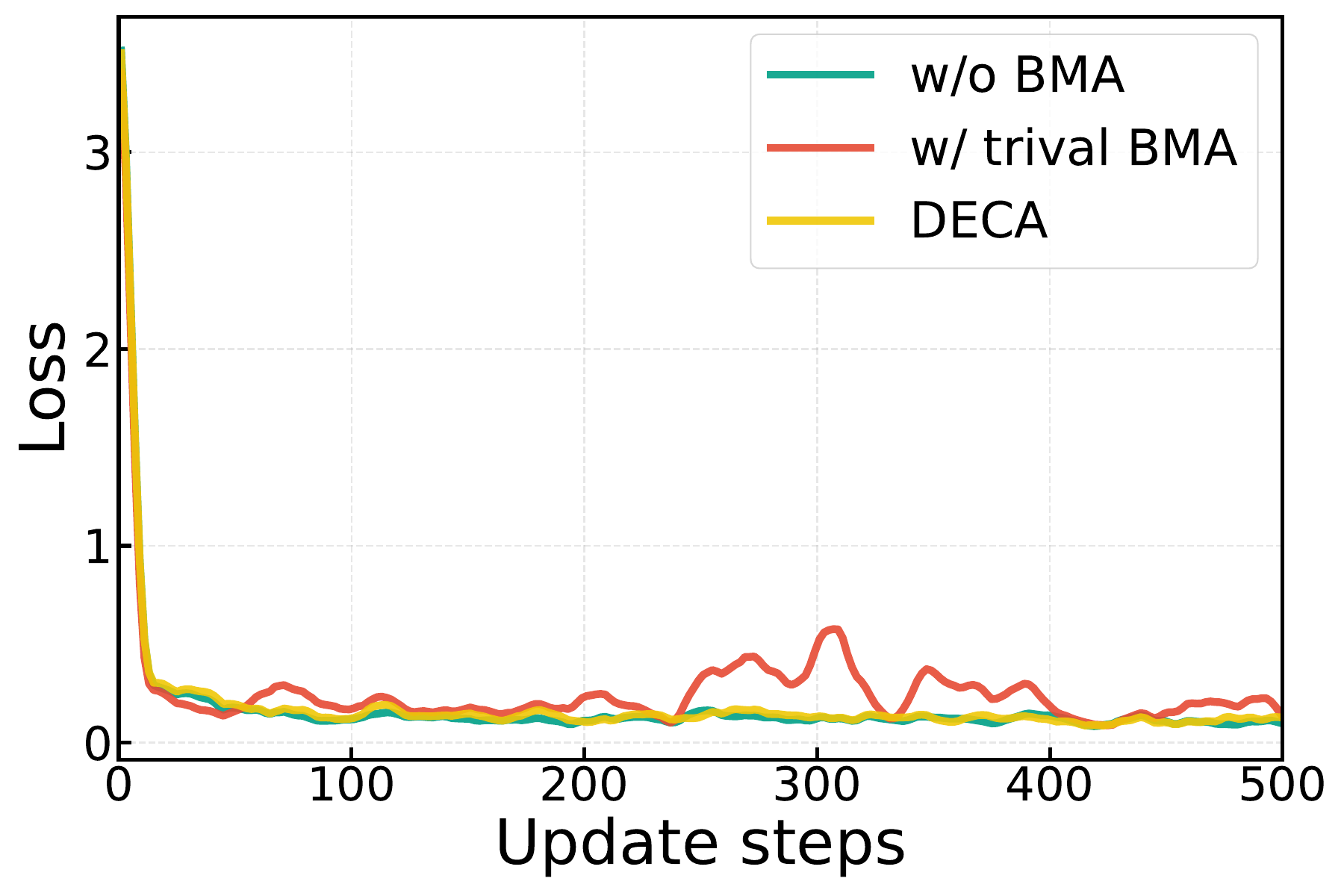}
            \captionof{figure}{Training loss of different algorithms on AGNEWS dataset using Qwen2.5-3B model.}
            \label{fig:loss_abl}
        \end{minipage}
    \end{figure*}

    \subsection{Ablation Study}\label{ssec:asexp}

    To investigate the importance of the BMA module in our algorithm, we design two ablation settings: \textbf{w/ trivial BMA} and \textbf{w/o BMA}. In the former setting, we let $m^{[t,r+1]}_{i,k} = \alpha_1 m^{[t,r]}_{i,k} + (1-\alpha_1) h^{[t,r]}_{i,k}$ and $v^{[t,r+1]}_{i,k} = \alpha_2 v^{[t,r]}_{i,k} + (1-\alpha_2)\, h^{[t,r]}_{i,k}\odot h^{[t,r]}_{i,k}$, which can be viewed as a block-wise extension of the quasi-global momentum principle in~\cite{LinKSJ-ICML21}, while in the latter one, we ablate the entire BMA module by removing Lines~12--13 in \textbf{Algorithm}~\ref{alg:deca}, which yields a trivial decentralized variant of the block-wise Adam optimization~\cite{LuoYLi-NIPS24}.
    Table~\ref{tab:exp_classification_abl} illustrates that DECA consistently achieves the best performance among ablation settings on classification tasks. Taking {Qwen2.5-3B} as an example, DECA reaches an average accuracy of 79.32\% and an average $F_1$ score of 74.12\%, while removing BMA leads to a clear degradation of 6.23\% and 7.22\%, respectively; replacing by w/ trivial BMA further reduces performance to 65.19\% and 61.69\%. A similar pattern is observed on Llama-3.1-8B, where DECA achieves 82.30\% accuracy and a 77.18\% $F_1$ score, consistently outperforming w/ trivial BMA and w/o BMA.
    Fig.~\ref{fig:loss_abl_llama} and Fig.~\ref{fig:loss_abl} further compare the training loss under different ablation settings across models. All methods exhibit similar convergence speeds in the early stage of fine-tuning. However, as training proceeds, w/ trivial BMA shows noticeable oscillations in the loss trajectory, especially on Qwen2.5-3B, suggesting that overemphasizing consensus-derived information may disturb local optimization. In contrast, w/o BMA remains relatively stable, but its performance is clearly inferior to DECA, as shown in Table~\ref{tab:exp_classification_abl}. These results indicate that DECA achieves a better balance between local gradient-driven optimization and consensus-guided correction, leading to both stable convergence and stronger downstream performance.

    \begin{table*}[t!]
    \centering
    \begin{small}
    \caption{Ablation results for classification tasks.}   
    \label{tab:exp_classification_abl}
    \begin{tabular}{l *{5}{c} *{5}{c}}
    \toprule
    \multirow{2}{*}{\textbf{Methods}} 
    % & \multicolumn{12}{c}{\textbf{Datasets \& Evalutions}} \\
    % %
    % %
    % \cmidrule(lr){2-13}
    & \multicolumn{2}{c}{\textbf{NWGI}} & \multicolumn{2}{c}{\textbf{AGNEWS}} & \multicolumn{2}{c}{\textbf{TFNS}} & \multicolumn{2}{c}{\textbf{MNLI}}  & \multicolumn{2}{c}{\textbf{Average}} \\
    \cmidrule(lr){2-3} \cmidrule(lr){4-5} \cmidrule(lr){6-7} \cmidrule(lr){8-9} \cmidrule(lr){10-11} 
    & \textit{Acc.} & $F_1$ & \textit{Acc.} & $F_1$ & \textit{Acc.} & $F_1$ & \textit{Acc.} & $F_1$ & \textit{Acc.} & $F_1$ \\
    \midrule
    \rowcolor{lightgreen} \multicolumn{11}{c}{{Qwen2.5-3B}} \\
    \midrule
    \textbf{DECA}
    &\textbf{53.51} &\textbf{36.09} &\textbf{88.84} &\textbf{88.79} &\textbf{89.25} &\textbf{86.24} &\textbf{85.67} &\textbf{85.34} &\textbf{79.32} &\textbf{74.12} \\
    w/ trivial BMA
    &30.62 &21.38 &\underline{85.36} &\underline{85.24} &76.66 &73.49 &68.10 &66.63 &65.19 &61.69 \\
    w/o BMA
    &\underline{48.67} &\underline{31.53} &83.67 &83.46 &\underline{82.38} &\underline{76.62} &\underline{77.62} &\underline{75.98} &\underline{73.09} &\underline{66.90} \\
    %
    % \textbf{$4$-DECA}
    % &50.34 &35.01 &83.82 &83.36 &85.32	&80.66 &78.20	&76.89 &74.42 &68.98 \\
    % % 
    % \textbf{$8$-DECA}
    % &49.10 &33.71 &83.05 &82.57 &84.34 &80.14 &77.70 &75.75 &73.55 &68.04 \\
    %
    \midrule
    \rowcolor{darkgreen} \multicolumn{11}{c}{{Llama-3.1-8B}} \\
    \midrule
    \textbf{DECA} 
    &\textbf{57.36} &\textbf{39.82} &\textbf{92.79} &\textbf{92.77} &\textbf{90.42} &\textbf{87.66} &\textbf{88.64} & \textbf{88.46} &\textbf{82.30} &\textbf{77.18} \\
    w/ trivial BMA
    &\underline{56.85} &37.48 &\underline{90.82} &\underline{90.75} &\underline{90.23} &\underline{87.35} &\underline{87.26} &\underline{87.01} &\underline{81.29} &\underline{75.65} \\
    w/o BMA
    &53.81 &\underline{37.54} &86.49 &85.95 &86.03 &83.21 &78.85 &76.07 &76.30 &70.69 \\
    \bottomrule
    \end{tabular}
    \end{small}
    \end{table*}

    % \begin{table*}[t!]
    % \centering
    % \caption{Ablation results for generation tasks.}
    % \label{tab:gen_bsl_abl}
    % \begin{small}
    % \begin{tabular}{l *{4}{c} | *{4}{c} }
    % \toprule
    % \multirow{2}{*}{\textbf{Methods}} & \multicolumn{4}{c}{Qwen2-1.5B} & \multicolumn{4}{c}{Llama-2-7B} \\
    % \cmidrule(lr){2-5} \cmidrule(lr){6-9} 
    % & \textbf{VIC.} & \textbf{MT-1} & \textbf{MT-2} & \textbf{MT.} & \textbf{VIC.} & \textbf{MT-1} & \textbf{MT-2} & \textbf{MT.} \\
    % \midrule
    % %
    % \textbf{DECA}
    % & \textbf{5.68} &\textbf{4.88} & \textbf{3.30} &\textbf{4.10} & \textbf{5.78} & \textbf{4.61} & \underline{2.84} & \textbf{3.73} \\
    % % 
    % w/ trivial BMA 
    % & 5.31 & 4.28 & 2.88 & 3.58 
    % & 5.60 & 4.43 & 2.81 & 3.62 \\
    % %
    % w/o BMA
    % & \underline{5.63} &\underline{4.81} &\underline{3.15} &\underline{3.98} & \underline{5.65} &\underline{4.51} & \textbf{2.86} & \underline{3.68} \\ 
    % %
    % \bottomrule
    % \end{tabular}
    % \end{small}
    % \end{table*}

\section{Related Work} \label{sec:relwork}
%
%Due to space limitations, we provide a brief literature survey below. 
  
%\subsection{Efficient LLM Fine-Tuning}

  To reduce the computational burden of LLM training, PEFT methods—such as Adapter~\cite{HoulsbyGJMDGAG-ICML19}, Prompt Tuning~\cite{LesterAC-EMNLP21,LiL-ARXIV221}, and the widely adopted LoRA~\cite{HuSSAWLWC-ICLR22,LialinSNA-ICLR24,XiaCH-arXiv24}—update only a fraction of parameters. While efficient, PEFT often restricts model capacity. Conversely, memory-efficient FPFT methods have emerged, leveraging techniques like on-the-fly gradient computation (LOMO)~\cite{LvYLGQ-ACL24}, zeroth-order approximations (MeZO)~\cite{MalladiGNDLCA-NIPS23}, gradient projections (GaLore)~\cite{ZhaoZCWAT-ICML24}, and CPU offloading~\cite{RenRARYZLH-ATC21,LiuZLLFWZS-EMNLP24}. Notably, \cite{LuoYLi-NIPS24} employs block coordinate descent to construct a block-wise Adam optimizer. However, this approach is strictly centralized and its theoretical convergence guarantee holds only in a deterministic setting. Extending it to decentralized stochastic optimization is highly non-trivial due to the mismatch between local block-wise updates and globally heterogeneous data distributions.

  %\subsection{Federated and Decentralized Learning}
  %
    To preserve privacy, FL aggregates updates via a central server. Due to communication constraints, most FL-LLM works rely on PEFT, including prompt-based FedPrompt~\cite{ZhaoDLLL-ICASSP23} and various LoRA adaptations like FDLoRA~\cite{QiLHFYQ-arXiv24}, FedSA-LoRA~\cite{GuoZWFQ-ICLR25}, and FFA-LoRA~\cite{SunLLD-ICLR24}. While recent works like FedKSeed~\cite{QinCQDLD-ICML24} and Ferret~\cite{ShuHNLY-ICML25} attempt federated FPFT, they remain dependent on a central coordinator.
    Decentralized learning eliminates the server bottleneck via peer-to-peer communication. Foundational algorithms like DPSGD~\cite{LianZZHZL-NIPS17} and DMSGD~\cite{YuJY-ICML19} established convergence for IID data. To address data heterogeneity, advanced techniques such as Gradient Tracking~\cite{PuN-MP21,TakezawaBNSY-TMLR23} and Cross-Gradient Aggregation~\cite{EsfandiariTJBHHS-ICML21} have been developed. 
    %
    %\textcolor{red}{\cite{LinKSJ-ICML21} introduces Quasi-Global (QG) momentum to stabilize decentralized DNN training under data heterogeneity. We build on this principle, but extend it to the block-wise optimization setting by designing both first- and second-order BMAs. These BMAs jointly incorporate local gradient information and consensus-derived discrepancy signals, thereby mitigating client drift and stabilizing block-wise optimization on non-IID data.} 
    %
    \cite{LinKSJ-ICML21} introduces Quasi-Global (QG) momentum for decentralized SGD under data heterogeneity, which locally approximates the global optimization direction. Building on this principle, we design first- and second-order BMAs that integrate fresh local gradient statistics with consensus-derived discrepancy signals in a block-wise manner, thereby mitigating client drift and stabilizing Adam-style optimization on non-IID data with theoretical guarantees.
    While decentralized adaptation for LLMs has been explored via PEFT (e.g., Dec-LoRA~\cite{GhiasvandAP-REALM25} and DeCAF~\cite{Saadati-NeuralNet26}), these methods are limited by updating only a small subset of model parameters. \emph{To the best of our knowledge, no existing work has established a decentralized framework for FPFT, representing an unexplored frontier for achieving superior task-specific adaptation.} A detailed literature survey is available in Appendix~\ref{sec:app_relwork}.

\section{Conclusion} \label{sec:conclusion}
  In this paper, we presented DECA, a resource-efficient decentralized FPFT framework for LLMs under non-IID data. DECA decentralizes block-wise Adam optimization by partitioning model parameters into disjoint blocks, enabling clients to collaboratively fine-tune the full model with reduced resource consumption. To mitigate client drift caused by data heterogeneity, DECA introduces first- and second-order BMAs that incorporate fresh local gradient statistics and consensus-derived discrepancy signals into Adam-style moment estimation, thereby stabilizing block-wise optimization without central coordination. We establish convergence guarantees for DECA under stochastic gradients and non-IID data, and extensive experiments on classification and generation tasks demonstrate its effectiveness across diverse LLMs.

\bibliography{reference}
\bibliographystyle{plain}

\newpage
\appendix
\onecolumn
\setcounter{tocdepth}{2}
\tableofcontents
\newpage

\section*{Appendix}

\section{Proof of Theorem~\ref{thm:convergence}} \label{sec:convergenceproof}

  \subsection{Preliminary Lemmas}
    We first reformulate $x^{[t,r]}_{i,k}$ into a block-wise sequence and define $\bar{x}^{[t,r]}_k$ as:
    \begin{align}
    x^{[t,r]}_{i,k} &= \{ \theta_{i,1}^{[t+1,0]}, \ldots, \theta_{i,k-1}^{[t+1,0]},  \theta_{i,k}^{[t,r]}, \theta_{i,k+1}^{[t,0]}, \ldots, \theta_{i,B}^{[t,0]}\} \\ \nonumber
    \bar{x}^{[t,r]}_k &= \{ \bar{\theta}_{1}^{[t+1,0]}, \ldots, \bar{\theta}_{k-1}^{[t+1,0]}, \bar{\theta}_{k}^{[t,r]}, \bar{\theta}_{k+1}^{[t,0]}, \ldots, \bar{\theta}_{B}^{[t,0]} \}
    \end{align}
    
    Before diving into the proof details, we introduce several preliminary lemmas.
    %
    %The proof of \textbf{Lemma}~\ref{lem:s_sup} can be found in \cite{LuoYLi-NIPS24}, while the proofs of \textbf{Lemmas}~\ref{lem:dsm}--\ref{lem:hadamard} are provided in \cite{YuJY-ICML19, WangYWL-ICDCS25}.
    %
    \begin{lemma}[Uniform bound for the adaptive reciprocal]\label{lem:s_sup}
    For every client $i\in\mathcal N$, block $k\in\{1,\dots,B\}$, outer round $t$, and inner step $r$, define
    \[
     u_{i,k}^{[t,r]}=\frac{1}{\sqrt{\widehat v_{i,k}^{[t,r]}}+\epsilon},
    \]
    where the square root and the division are understood coordinatewise. Then
    \[
    \|u_{i,k}^{[t,r]}\|_{\infty}\le \frac{1}{\epsilon}.
    \]
    \end{lemma}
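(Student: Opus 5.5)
The plan is to show that every coordinate of $\widehat v_{i,k}^{[t,r]}$ is nonnegative. Once that holds, the bound is immediate: $\sqrt{\widehat v_{i,k}^{[t,r]}}$ is a well-defined nonnegative real in each coordinate, so each denominator satisfies $\sqrt{(\widehat v_{i,k}^{[t,r]})_j}+\epsilon\ge \epsilon>0$. Hence $0<(u_{i,k}^{[t,r]})_j\le 1/\epsilon$ for every coordinate $j$, and taking the maximum over $j$ gives $\|u_{i,k}^{[t,r]}\|_\infty\le 1/\epsilon$.

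Nonnegativity will be proved by induction on $r$ within each pair $(t,k)$, using the reset in Line~\ref{ln:init_mom} of \textbf{Algorithm}~\ref{alg:deca}.

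\begin{itemize}
\item \emph{Base case.} The algorithm sets $v_{i,k}^{[t,0]}=0$, which is coordinatewise nonnegative.
\item \emph{Half step.} Suppose $v_{i,k}^{[t,r]}\ge 0$ coordinatewise. By \eqref{eq:update_momentum}, $v_{i,k}^{[t,r+\frac12]}=\alpha_2 v_{i,k}^{[t,r]}+(1-\alpha_2)\,g_{i,k}^{[t,r]}\odot g_{i,k}^{[t,r]}$. This is a combination, with nonnegative weights $\alpha_2,1-\alpha_2$ (for $\alpha_2\in[0,1)$), of a nonnegative vector and a Hadamard square, which is also nonnegative. So $v_{i,k}^{[t,r+\frac12]}\ge 0$.
\item \emph{Full step.} By \eqref{eq:qgm_update_v}, $v_{i,k}^{[t,r+1]}=\beta_2 v_{i,k}^{[t,r+\frac12]}+(1-\beta_2)\,h_{i,k}^{[t,r]}\odot h_{i,k}^{[t,r]}$. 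The same argument applies with weights $\beta_2,1-\beta_2\ge 0$, so $v_{i,k}^{[t,r+1]}\ge 0$, closing the induction.
\end{itemize}

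Bias correction \eqref{eq:correct_momentum} divides $v_{i,k}^{[t,r+\frac12]}$ by $1-\alpha_2^{r+1}$, which is strictly positive for $\alpha_2\in[0,1)$. This preserves nonnegativity, so $\widehat v_{i,k}^{[t,r]}\ge 0$ coordinatewise.

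There is no real obstacle; the only care needed is bookkeeping of edge cases.

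\begin{itemize}
\item The hyperparameters must satisfy $\alpha_2,\beta_2\in[0,1)$. This is implicit in the algorithm and holds for the values used in the experiments.
\item The direction $h_{i,k}^{[t,r]}$ is undefined when $x^{[t,r]}_{i,k}=x^{[t,r+1]}_{i,k}$. In that case I would adopt the convention $h=0$, which keeps the Hadamard square nonnegative.
\end{itemize}

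The resulting bound holds deterministically, for every realization of the stochastic gradients, and therefore also for all subsequent expectations.
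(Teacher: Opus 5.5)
Your proposal is correct and matches the paper's proof: induction on $r$ from the reset $v^{[t,0]}_{i,k}=0$ through both the $g$- and $h$-updates, positivity of the bias-correction factor, and then $\sqrt{\widehat v_{i,k}^{[t,r]}}+\epsilon\ge\epsilon$ coordinatewise. Your two edge-case remarks are reasonable additions that the paper leaves implicit: the requirement $\alpha_2,\beta_2\in[0,1)$, and a convention for $h^{[t,r]}_{i,k}$ when $x^{[t,r]}_{i,k}=x^{[t,r+1]}_{i,k}$.
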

    \begin{proof}
    By \textbf{Algorithm}~\ref{alg:deca}, $v_{i,k}^{[t,0]}=0$. Moreover,
    \[
     v_{i,k}^{[t,r+\frac12]}
     =\alpha_2 v_{i,k}^{[t,r]}+(1-\alpha_2)g_{i,k}^{[t,r]}\odot g_{i,k}^{[t,r]},
    \]
    and
    \[
     v_{i,k}^{[t,r+1]}
     =\beta_2 v_{i,k}^{[t,r+\frac12]}+(1-\beta_2)h_{i,k}^{[t,r]}\odot h_{i,k}^{[t,r]}.
    \]
    Since both $g_{i,k}^{[t,r]}\odot g_{i,k}^{[t,r]}$ and $h_{i,k}^{[t,r]}\odot h_{i,k}^{[t,r]}$ are coordinatewise nonnegative, induction on $r$ yields that $v_{i,k}^{[t,r+\frac12]}$ is coordinatewise nonnegative. Hence the bias-corrected second moment
    \[
    \widehat v_{i,k}^{[t,r]}=\frac{v_{i,k}^{[t,r+\frac12]}}{1-\alpha_2^{r+1}}
    \]
    is also coordinatewise nonnegative. Therefore each coordinate of $\sqrt{\widehat v_{i,k}^{[t,r]}}+\epsilon$ is at least $\epsilon$, so each coordinate of $u_{i,k}^{[t,r]}$ is at most $1/\epsilon$. Taking the maximum over coordinates proves the claim.
    \end{proof}
    
    \begin{lemma}[Mixing identities and spectral contraction]\label{lem:dsm}
    Let
    \[
    \mathbf J=\frac1N\mathbf 1\mathbf 1^{\top}.
    \]
    Under \textbf{Assumption}~\ref{assum:dsm} and the symmetry of the DECA mixing matrix in the undirected communication model, the following identities hold:
    \[
    \mathbf J\mathbf W=\mathbf W\mathbf J=\mathbf J,
    \qquad
    (\mathbf I-\mathbf J)\mathbf W=\mathbf W(\mathbf I-\mathbf J)=\mathbf W-\mathbf J.
    \]
    Moreover, for every integer $q\ge 1$,
    \[
    \|(\mathbf I-\mathbf J)\mathbf W^q\|_{\mathfrak S}\le (\sqrt\rho)^q.
    \]
    \end{lemma}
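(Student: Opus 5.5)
Both identities follow from the doubly stochastic and symmetric structure of $\mathbf W$, and the contraction bound follows from its spectral decomposition. The norm $\|\cdot\|_{\mathfrak S}$ is read as the spectral (operator $2$-)norm.

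\emph{Identities.} Double stochasticity gives $\mathbf W\mathbf 1=\mathbf 1$ and $\mathbf 1^\top\mathbf W=\mathbf 1^\top$. Hence
\[
\mathbf W\mathbf J=\tfrac1N(\mathbf W\mathbf 1)\mathbf 1^\top=\mathbf J,
\qquad
\mathbf J\mathbf W=\tfrac1N\mathbf 1(\mathbf 1^\top\mathbf W)=\mathbf J.
\]
Expanding then yields $(\mathbf I-\mathbf J)\mathbf W=\mathbf W-\mathbf J\mathbf W=\mathbf W-\mathbf J$, and likewise $\mathbf W(\mathbf I-\mathbf J)=\mathbf W-\mathbf J$. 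These are purely algebraic.

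\emph{Reduction to a power.} I would first show $(\mathbf I-\mathbf J)\mathbf W^q=(\mathbf W-\mathbf J)^q$ by induction on $q$. The case $q=1$ is the identity above. For the inductive step, use that $\mathbf J$ is idempotent ($\mathbf J^2=\mathbf J$) together with $\mathbf J\mathbf W=\mathbf W\mathbf J=\mathbf J$:
\[
(\mathbf W-\mathbf J)^{q+1}=(\mathbf W-\mathbf J)^q(\mathbf W-\mathbf J)=(\mathbf I-\mathbf J)\mathbf W^q(\mathbf W-\mathbf J)=(\mathbf I-\mathbf J)(\mathbf W^{q+1}-\mathbf J).
\]
Since $(\mathbf I-\mathbf J)\mathbf J=0$, the right-hand side equals $(\mathbf I-\mathbf J)\mathbf W^{q+1}$.

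\emph{Spectral bound.} Because $\mathbf W$ is real symmetric, it has an orthonormal eigenbasis $\{u_\ell\}$ with eigenvalues $\lambda_\ell$. Take $u_1=\mathbf 1/\sqrt N$ with $\lambda_1=1$, so that $\mathbf J=u_1u_1^\top$. Then
\[
\mathbf W-\mathbf J=\sum_{\ell\ge 2}\lambda_\ell u_\ell u_\ell^\top,
\]
which is symmetric with eigenvalues $0,\lambda_2,\dots,\lambda_N$. The spectral norm of a symmetric matrix is its largest eigenvalue in absolute value, so by Assumption~\ref{assum:dsm}
\[
\|\mathbf W-\mathbf J\|_2=\max\{|\lambda_2|,|\lambda_N|\}\le\sqrt\rho .
\]
For the $q$-th power, $(\mathbf W-\mathbf J)^q=\sum_{\ell\ge2}\lambda_\ell^q u_\ell u_\ell^\top$, whose norm is $\max_{\ell\ge2}|\lambda_\ell|^q\le(\sqrt\rho)^q$. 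Submultiplicativity gives the same conclusion.

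\emph{Main obstacle.} The only delicate point is that the eigenvector for $\lambda_1=1$ must be exactly $\mathbf 1/\sqrt N$. If $1$ were a repeated eigenvalue, the bound $|\lambda_2|\le\sqrt\rho<1$ would fail, so Assumption~\ref{assum:dsm} excludes this case. The assumption therefore ensures that $\mathbf J$ is precisely the projector onto the top eigenspace.

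If $\|\cdot\|_{\mathfrak S}$ is instead meant as a Schatten norm, the same diagonalization bounds each retained eigenvalue. For the operator (Schatten-$\infty$) norm this gives exactly the claim; for other Schatten norms I would interpret the statement as the operator norm.
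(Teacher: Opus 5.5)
Your proposal is correct and follows essentially the paper's argument: you derive the identities from $\mathbf W\mathbf 1=\mathbf 1$ and $\mathbf 1^\top\mathbf W=\mathbf 1^\top$, and you get the contraction from the orthonormal eigenbasis of the symmetric $\mathbf W$, with $\mathbf J$ the projector onto $\mathrm{span}\{\mathbf 1\}$. The detour through $(\mathbf I-\mathbf J)\mathbf W^q=(\mathbf W-\mathbf J)^q$ is harmless, and it also makes explicit that the operator is symmetric, so its spectral norm equals its largest absolute eigenvalue; the paper leaves that point implicit.
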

    \begin{proof}
    By \textbf{Assumption}~\ref{assum:dsm}, $\mathbf W\mathbf 1=\mathbf 1$ and $\mathbf 1^{\top}\mathbf W=\mathbf 1^{\top}$. Hence
    \[
    \mathbf W\mathbf J
    =\mathbf W\Bigl(\frac1N\mathbf 1\mathbf 1^{\top}\Bigr)
    =\frac1N(\mathbf W\mathbf 1)\mathbf 1^{\top}
    =\frac1N\mathbf 1\mathbf 1^{\top}
    =\mathbf J,
    \]
    and similarly,
    \[
    \mathbf J\mathbf W
    =\frac1N\mathbf 1(\mathbf 1^{\top}\mathbf W)
    =\frac1N\mathbf 1\mathbf 1^{\top}
    =\mathbf J.
    \]
    Therefore,
    \[
    (\mathbf I-\mathbf J)\mathbf W=\mathbf W-\mathbf J=\mathbf W(\mathbf I-\mathbf J).
    \]
    Since the communication graph is undirected, $\mathbf W$ is symmetric. Hence, $\mathbf W$ admits an orthonormal eigenbasis. The vector $\mathbf 1$ is an eigenvector associated with eigenvalue $1$, while $\mathbf J$ is the orthogonal projector onto ${\rm span}\{\mathbf 1\}$. Consequently, $(\mathbf I-\mathbf J)\mathbf W^q$ vanishes on ${\rm span}\{\mathbf 1\}$ and has eigenvalues $\lambda_j(\mathbf W)^q$ on ${\rm span}\{\mathbf 1\}^{\perp}$. By \textbf{Assumption}~\ref{assum:dsm},
    \[
    |\lambda_j(\mathbf W)|\le \sqrt\rho,
    \qquad j=2,\dots,N.
    \]
    Hence
    \[
    \|(\mathbf I-\mathbf J)\mathbf W^q\|_{\mathfrak S}
    =\max_{2\le j\le N}|\lambda_j(\mathbf W)|^q
    \le (\sqrt\rho)^q.
    \]
    \end{proof}
    
    \begin{lemma}[Frobenius contraction under mixing]\label{lem:fnorm}
    Under \textbf{Lemma}~\ref{lem:dsm}, for every matrix $\mathbf X\in\mathbb R^{N\times d}$ and every integer $q\ge 1$,
    \[
    \|(\mathbf I-\mathbf J)\mathbf W^q\mathbf X\|_{\mathfrak F}^2
    \le \rho^q\|\mathbf X\|_{\mathfrak F}^2.
    \]
    \end{lemma}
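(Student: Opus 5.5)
The plan is to reduce the Frobenius bound to the spectral bound of Lemma~\ref{lem:dsm}, applied column by column. Write $\mathbf X=[\mathbf x_1,\dots,\mathbf x_d]$ with columns $\mathbf x_\ell\in\mathbb R^N$. Then
\[
\|(\mathbf I-\mathbf J)\mathbf W^q\mathbf X\|_{\mathfrak F}^2=\sum_{\ell=1}^d\|(\mathbf I-\mathbf J)\mathbf W^q\mathbf x_\ell\|^2 ,
\]
so it suffices to bound each term by $\rho^q\|\mathbf x_\ell\|^2$ and sum, since $\sum_\ell\|\mathbf x_\ell\|^2=\|\mathbf X\|_{\mathfrak F}^2$.

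For a single column, the definition of the spectral norm gives
\[
\|(\mathbf I-\mathbf J)\mathbf W^q\mathbf x_\ell\|\le\|(\mathbf I-\mathbf J)\mathbf W^q\|_{\mathfrak S}\|\mathbf x_\ell\| .
\]
Lemma~\ref{lem:dsm} then bounds the operator norm by $(\sqrt\rho)^q$. Squaring gives the factor $\rho^q$, and summing over $\ell$ completes the argument. Equivalently, one can use the general inequality $\|\mathbf A\mathbf X\|_{\mathfrak F}\le\|\mathbf A\|_{\mathfrak S}\|\mathbf X\|_{\mathfrak F}$ with $\mathbf A=(\mathbf I-\mathbf J)\mathbf W^q$.

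There is no real obstacle here. The only point needing care is that Lemma~\ref{lem:dsm} is stated for the spectral norm of the $N\times N$ matrix $(\mathbf I-\mathbf J)\mathbf W^q$, which acts on the client dimension. That is exactly the left multiplication in the statement, so the column decomposition applies directly. If one prefers not to rely on the earlier lemma, the bound can also be verified directly, using the orthonormal eigenbasis of the symmetric $\mathbf W$. Expand each column in that basis: $\mathbf I-\mathbf J$ kills the $\mathbf 1$-component, and every remaining component is scaled by $\lambda_j^q$ with $|\lambda_j|\le\sqrt\rho$.
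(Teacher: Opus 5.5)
Your proposal is correct and matches the paper's proof. The paper applies $\|\mathbf A\mathbf X\|_{\mathfrak F}\le\|\mathbf A\|_{\mathfrak S}\|\mathbf X\|_{\mathfrak F}$ with $\mathbf A=(\mathbf I-\mathbf J)\mathbf W^q$, uses the bound $(\sqrt\rho)^q$ from Lemma~\ref{lem:dsm}, and squares; your column-by-column argument is simply the proof of that inequality.
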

    \begin{proof}
    For any compatible matrices $\mathbf A$ and $\mathbf X$,
    \[
    \|\mathbf A\mathbf X\|_{\mathfrak F}\le \|\mathbf A\|_{\mathfrak S}\|\mathbf X\|_{\mathfrak F}.
    \]
    Applying this inequality with $\mathbf A=(\mathbf I-\mathbf J)\mathbf W^q$ and using \textbf{Lemma}~\ref{lem:dsm}, we get
    \[
    \|(\mathbf I-\mathbf J)\mathbf W^q\mathbf X\|_{\mathfrak F}
    \le \|(\mathbf I-\mathbf J)\mathbf W^q\|_{\mathfrak S}\|\mathbf X\|_{\mathfrak F}
    \le (\sqrt\rho)^q\|\mathbf X\|_{\mathfrak F}.
    \]
    Squaring both sides yields the claim.
    \end{proof}
    
    \begin{lemma}[Hadamard-product inequality]\label{lem:hadamard}
    For any matrices $\mathbf A,\mathbf B\in\mathbb R^{N\times d}$,
    \[
    \|\mathbf A\odot\mathbf B\|_{\mathfrak F}
    \le \|\mathbf A\|_{\mathfrak F}\,\|\mathbf B\|_{\infty}.
    \]
    \end{lemma}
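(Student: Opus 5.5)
The plan is to prove the bound entrywise, reducing it to the elementary fact that a single entry of a Hadamard product is the product of the corresponding entries. Write $\mathbf A=(a_{pq})$ and $\mathbf B=(b_{pq})$. Here $\|\mathbf B\|_{\infty}$ is read as the entrywise maximum $\max_{p,q}|b_{pq}|$. This matches how the lemma is used together with Lemma~\ref{lem:s_sup}, where $\|u_{i,k}^{[t,r]}\|_\infty\le 1/\epsilon$ is a coordinatewise bound.

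First expand the squared Frobenius norm of the product:
\[
\|\mathbf A\odot\mathbf B\|_{\mathfrak F}^2=\sum_{p=1}^N\sum_{q=1}^d a_{pq}^2\,b_{pq}^2 .
\]
Next bound each factor $b_{pq}^2$ by $\max_{p',q'}b_{p'q'}^2=\|\mathbf B\|_\infty^2$. Since every $a_{pq}^2$ is nonnegative, this gives
\[
\sum_{p,q} a_{pq}^2\,b_{pq}^2\le \|\mathbf B\|_\infty^2\sum_{p,q}a_{pq}^2=\|\mathbf B\|_\infty^2\,\|\mathbf A\|_{\mathfrak F}^2 .
\]
Taking square roots, both sides being nonnegative, yields the claim.

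There is no real obstacle here. The only point needing care is the meaning of $\|\cdot\|_\infty$. If it were instead the induced (max-row-sum) operator norm, the inequality would still hold, because that norm dominates the entrywise maximum. I would state the entrywise interpretation explicitly, since that is the one used when bounding $\gamma\,\widehat m\odot u$ in the consensus-error analysis.
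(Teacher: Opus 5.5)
Your proof is correct and matches the paper's argument: expand $\|\mathbf A\odot\mathbf B\|_{\mathfrak F}^2$ entrywise, bound each $b_{pq}^2$ by $\|\mathbf B\|_\infty^2$ (read as the entrywise maximum, as the paper does), and take square roots. Your side remark is also correct: under the max-row-sum reading the bound still holds, because that norm dominates the entrywise maximum.
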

    \begin{proof}
    By the definitions of the Frobenius norm and the entrywise maximum norm,
    \[
    \|\mathbf A\odot\mathbf B\|_{\mathfrak F}^2
    =\sum_{i=1}^N\sum_{j=1}^d |A_{ij}B_{ij}|^2
    \le \sum_{i=1}^N\sum_{j=1}^d |A_{ij}|^2\,\|\mathbf B\|_{\infty}^2
    =\|\mathbf A\|_{\mathfrak F}^2\|\mathbf B\|_{\infty}^2.
    \]
    Taking square roots completes the proof.
    \end{proof}

  \subsection{Proof Details}
  %
    % \todo[inline]{Split into two subsections for Theorem 4.4 and Corollary 4.5, respectively.}

    % \subsubsection{Proof of \textbf{Theorem}~\ref{thm:convergence}}

    % \subsubsection{Proof of \textbf{Corollary}~\ref{cor:convergence}}
    
    %\subsubsection{Proof Sketch}

    The proof first introduces a virtual block-wise sequence $\hat{x}_{k}^{[t,r]}$ that follows a centralized SGD-style recursion with adaptive step scaling, and uses it as an analyzable proxy for the averaged decentralized iterate $\bar{x}_{k}^{[t,r]}$. For each active block, the proof applies smoothness to establish a one-step descent inequality for $\mathbb{E}[F(\hat{x}_{k}^{[t,r+1]})]-\mathbb{E}[F(\hat{x}_{k}^{[t,r]})]$, where the descent term is proportional to $\|\nabla_k F(\bar{x}_{k}^{[t,r]})\|^2$ and the remaining terms capture stochastic gradient noise, consensus error, and the sequence tracking error $e_{k}^{[t,r]}=\hat{x}_{k}^{[t,r]}-\bar{x}_{k}^{[t,r]}$. The consensus error is then bounded using the spectral contraction of the mixing matrix, together with the uniform bound on the adaptive reciprocal and the Hadamard-product inequality to handle Adam-style scaling. Finally, the sequence tracking error and the objective increase caused by transitions between consecutive blocks are controlled by the error-controlled block-transition assumption. Summing the resulting descent inequalities over all rounds, blocks, and inner steps yields the final average squared-gradient bound and, with $\gamma=O(1/\sqrt{TBR})$, DECA has the convergence rate of $O(1/\sqrt{TBR})$.

    \textbf{Non-convex Single-step Update}. 
    For all $r \geq 0$, we bridge the gap between $\hat{x}_k^{[t,r]}$ and $\bar{x}_k^{[t,r]}$ with consensus error $\nmsq{\bar{x}_{k}^{[t,r]} - x_{i,k}^{[t,r]}}$ and error sequence $e_k^{[t,r]}$:
    \begin{align}\label{eq:one_step_main}
      & \e{\mathcal{F}(\hat{x}_k^{[t,r+1]})} - \e{\mathcal{F}(\hat{x}_k^{[t,r]})} \nonumber \\
      =& \e{\ip{\nabla_k \mathcal{F}(\hat{x}_{k}^{[t,r]}), \hat{x}_k^{[t,r+1]} - \hat{x}_k^{[t,r]}}} + \frac{L}{2} \e{\nmsq{\hat{x}_k^{[t,r+1]} - \hat{x}_k^{[t,r]}}} \nonumber \\
      =& \e{\ip{\nabla_k \mathcal{F}(\hat{x}_{k}^{[t,r]}), \hat{\theta}_k^{[t,r+1]} - \hat{\theta}_k^{[t,r]}}} + \frac{L}{2} \e{\nmsq{\hat{\theta}_k^{[t,r+1]} - \hat{\theta}_k^{[t,r]}}} \nonumber \\
      =& - \frac{\gamma}{N(1 -\alpha_1^{r+1})} \sum_{i=1}^N \e{\ip{\nabla_k \mathcal{F}(\hat{x}_{k}^{[t,r]}), \nabla_k f_i(x_{i,k}^{[t,r]})}} + \frac{L\gamma^2}{2(1 -\alpha_1^{r+1})^2} \e{\nmsq{\bar{g}_k^{[t,r]}}} \nonumber \\
      \overset{\tcircle{1}}{\leq} & - \frac{\gamma}{N(1 -\alpha_1^{r+1})} \sum_{i=1}^N \e{\ip{\nabla_k \mathcal{F}(\hat{x}_{k}^{[t,r]}), \nabla_k f_i(x_{i,k}^{[t,r]})}} \nonumber \\
      & + \frac{L\gamma^2}{2(1 -\alpha_1^{r+1})^2} \e{\nmsq{\frac{1}{N}\sum_{i=1}^N \nabla_k f_i({x}_{i,k}^{[t,r]})}} + \frac{L\gamma^2\xi^2}{2N(1 -\alpha_1^{r+1})^2} \nonumber \\
      \overset{\tcircle{2}}{\leq} & - \frac{\gamma}{(1 -\alpha_1^{r+1})} \e{\ip{\nabla_k \mathcal{F}(\hat{x}_{k}^{[t,r]}), \frac{1}{N}\sum_{i=1}^N \nabla_k f_i(x_{i,k}^{[t,r]})}} \nonumber \\
      & + \frac{L\gamma^2}{2(1 -\alpha_1^{r+1})^2} \e{\nmsq{\frac{1}{N}\sum_{i=1}^N \nabla_k f_i({x}_{i,k}^{[t,r]})}} + \frac{L\gamma^2\xi^2}{2N(1 -\alpha_1^{r+1})^2} \nonumber \\
      \overset{\tcircle{3}}{\leq} & \left( \frac{L\gamma^2}{2(1 -\alpha_1^{r+1})^2} - \frac{\gamma}{2(1 -\alpha_1^{r+1})} \right) \e{\nmsq{\frac{1}{N}\sum_{i=1}^N \nabla_k f_i({x}_{i,k}^{[t,r]})}} \nonumber \\ 
      & + \frac{L\gamma^2\xi^2}{2N(1 -\alpha_1^{r+1})^2} - \frac{\gamma}{2(1 -\alpha_1^{r+1})} \e{\nmsq{\nabla_k \mathcal{F}(\hat{x}_{k}^{[t,r]})}} \nonumber \\
      & + \frac{\gamma}{2(1 -\alpha_1^{r+1})} \e{\nmsq{\nabla_k \mathcal{F}(\hat{x}_{k}^{[t,r]}) - \frac{1}{N}\sum_{i=1}^N \nabla_k f_i(x_{i,k}^{[t,r]})}} \nonumber \\
      \overset{\tcircle{4}}{\leq} & \left( \frac{L\gamma^2}{2(1 -\alpha_1^{r+1})^2} - \frac{\gamma}{2(1 -\alpha_1^{r+1})} \right) \e{\nmsq{\frac{1}{N}\sum_{i=1}^N \nabla_k f_i(x_{i,k}^{[t,r]})}} \nonumber \\
      & + \frac{L\gamma^2\xi^2}{2N(1 -\alpha_1^{r+1})^2} - \frac{\gamma}{4(1 -\alpha_1^{r+1})} \e{\nmsq{\nabla_k \mathcal{F}(\bar{x}_{k}^{[t,r]})}} \nonumber \\ 
      & + \frac{\gamma}{2(1 -\alpha_1^{r+1})} \e{\nmsq{\nabla_k \mathcal{F}(\hat{x}_{k}^{[t,r]}) - \nabla_k \mathcal{F}(\bar{x}_{k}^{[t,r]})}} \nonumber \\
      & + \frac{\gamma}{2(1 -\alpha_1^{r+1})} \e{\nmsq{\nabla_k \mathcal{F}(\hat{x}_{k}^{[t,r]}) - \frac{1}{N}\sum_{i=1}^N \nabla_k f_i(x_{i,k}^{[t,r]})}} \nonumber \\
      \leq & \left( \frac{L\gamma^2}{2(1 -\alpha_1^{r+1})^2} - \frac{\gamma}{2(1 -\alpha_1^{r+1})} \right) \e{\nmsq{\frac{1}{N}\sum_{i=1}^N \nabla_k f_i(x_{i,k}^{[t,r]})}} \nonumber \\
      & + \frac{L\gamma^2\xi^2}{2N(1 -\alpha_1^{r+1})^2} - \frac{\gamma}{4(1 -\alpha_1^{r+1})} \e{\nmsq{\nabla_k \mathcal{F}(\bar{x}_{k}^{[t,r]})}} \nonumber \\ 
      & + \frac{3\gamma}{2(1 -\alpha_1^{r+1})} \e{\nmsq{\nabla_k \mathcal{F}(\hat{x}_{k}^{[t,r]}) - \nabla_k \mathcal{F}(\bar{x}_{k}^{[t,r]})}} \nonumber \\
      & + \frac{\gamma}{(1 -\alpha_1^{r+1})} \e{\nmsq{\nabla_k \mathcal{F}(\bar{x}_{k}^{[t,r]}) - \frac{1}{N}\sum_{i=1}^N \nabla_k f_i(x_{i,k}^{[t,r]})}} \nonumber \\
      \leq & \left( \frac{L\gamma^2}{2(1 -\alpha_1^{r+1})^2} - \frac{\gamma}{2(1 -\alpha_1^{r+1})} \right) \e{\nmsq{\frac{1}{N}\sum_{i=1}^N \nabla_k f_i(x_{i,k}^{[t,r]})}} \nonumber \\
      & + \frac{L\gamma^2\xi^2}{2N(1 -\alpha_1^{r+1})^2} - \frac{\gamma}{4(1 -\alpha_1^{r+1})} \e{\nmsq{\nabla_k \mathcal{F}(\bar{x}_{k}^{[t,r]})}} \nonumber \\ & + \frac{3L^2\gamma}{2(1 -\alpha_1^{r+1})} \e{\nmsq{\hat{x}_{k}^{[t,r]} - \bar{x}_{k}^{[t,r]}}} \nonumber \\
      & + \frac{\gamma}{N(1 -\alpha_1^{r+1})} \sum_{i=1}^N \e{\nmsq{\nabla_k f_i(\bar{x}_{k}^{[t,r]}) - \nabla_k f_i(x_{i,k}^{[t,r]})}} \nonumber \\
      \overset{\tcircle{5}}{\leq} & \left( \frac{L\gamma^2}{2(1 -\alpha_1^{r+1})^2} - \frac{\gamma}{2(1 -\alpha_1^{r+1})} \right) \e{\nmsq{\frac{1}{N}\sum_{i=1}^N \nabla_k f_i(x_{i,k}^{[t,r]})}} \nonumber \\
      & + \frac{L\gamma^2\xi^2}{2N(1 -\alpha_1^{r+1})^2} - \frac{\gamma}{4(1 -\alpha_1^{r+1})} \e{\nmsq{\nabla_k \mathcal{F}(\bar{x}_{k}^{[t,r]})}} \nonumber \\
      & + \frac{3L^2\gamma}{2(1 -\alpha_1^{r+1})} \e{\nmsq{e_k^{[t,r]}}} + \frac{L^2\gamma}{N(1 -\alpha_1^{r+1})} \sum_{i=1}^N \e{\nmsq{\bar{x}_{k}^{[t,r]} - x_{i,k}^{[t,r]}}}
    \end{align}
    $\tcircle{1}$ comes from 
    \begin{align*}
      \mathbb{E}\left\|\frac{1}{N}\sum_{i=1}^{N}\nabla_k F_i(x_{i,k}^{[t,r]}, \xi_{i,b}^{[t,r]}) - \frac{1}{N}\sum_{i=1}^{N}\nabla_k f_i(x_{i,k}^{[t,r]})\right\|^2 \leq \mathbb{E}\left\|\frac{1}{N}\sum_{i=1}^{N}\nabla_k f_i(x_{i,k}^{[t,r]})\right\|^2 + \frac{\xi^2}{N} 
    \end{align*}
    with $f_i(x_{i,k}^{[t,r]})$ represeting an unbiased estimation on $F_i(x_{i,k}^{[t,r]}, \zeta)$; $\tcircle{2}$ comes from characteristic of inner-product; $\tcircle{3}$ refers to $2E\langle A,B\rangle = -A^2 - B^2 + (A-B)^2 $; $\tcircle{4}$ is derived from $\frac{1}{2} \mathbb{E} \|A - B\|^2 - \frac{1}{4} \mathbb{E} \|B\|^2 + \frac{1}{2} \mathbb{E} \|A\|^2 = \mathbb{E} \|A\|^2 - \mathbb{E}\langle A, B\rangle + \frac{1}{4} \mathbb{E} \|B\|^2 \geq 0$ where $A=\nabla_k \mathcal{F}(\bar{x}_{k}^{[t,r]}) + \nabla_k \mathcal{F}(\hat{x}_{k}^{[t,r]})$ and $B=\nabla_k \mathcal{F}(\bar{x}_{k}^{[t,r]})$; $\tcircle{5}$ satisfies when $\left( \frac{L\gamma^2}{2(1 -\alpha_1^{r+1})^2} - \frac{\gamma}{2(1 -\alpha_1^{r+1})} \right) \leq 0$.

    Rewrite $x_{k}^{[t,r+1]}$ and $m_{i}^{[t,r+1]}$ as
    \begin{align}
      x_{k}^{[t,r+1]} =& \sum_{j\in \mathcal{N}_i} w_{ij} x_{j,k}^{t,\tau+\frac{1}{2}} = \sum_{j\in \mathcal{N}_i} w_{ij} x_{j,k}^{[t,r]} - \gamma \sum_{j\in \mathcal{N}_i} w_{ij}  \dfrac{\hat{m}_{j}^{[t,r]}}{\sqrt{\hat{v}_{j}^{[t,r]}} + \epsilon} \nonumber \\
      =& \sum_{j\in \mathcal{N}_i} w_{ij} x_{j,k}^{[t,r]} - \frac{\gamma}{(1-\alpha_1^{r+1})} \sum_{j\in \mathcal{N}_i} w_{ij} \left(\alpha_1 m_{j}^{[t,r]} + (1-\alpha_1) g_{j,k}^{[t,r]} \right) \odot u_{j}^{[t,r]} \nonumber \\
      =& \sum_{j\in \mathcal{N}_i} w_{ij} x_{j,k}^{[t,r]} - \frac{\gamma \alpha_1}{(1-\alpha_1^{r+1})}  \sum_{j\in \mathcal{N}_i} w_{ij} \left(m_{j}^{[t,r]} \odot u_{j}^{[t,r]} \right) \nonumber \\
      & - \frac{\gamma (1-\alpha_1)}{(1-\alpha_1^{r+1})} \sum_{j\in \mathcal{N}_i} w_{ij} \left(g_{j,k}^{[t,r]} \odot u_{j}^{[t,r]}\right)
    \end{align}
    and
    \begin{align}
      m_{i}^{[t,r+1]} =& \beta_1 {m}_{i}^{[t,r+\frac{1}{2}]} + (1-\beta_1) {h}_{i,k}^{[t,r]} \nonumber \\
      =& \beta_1 \alpha_1 {m}_{i}^{[t,r]} + \beta_1(1-\alpha_1) g_{i,k}^{[t,r]}  + \frac{1-\beta_1}{\nm{{d}_{i,k}^{[t,r]}}} \left({x}_{i,k}^{[t,r]} - \sum_{j\in \mathcal{N}_i} w_{ij} {x}_{i,k}^{[t,r+\frac{1}{2}]} \right) \nonumber \\
      =& \beta_1\alpha_1 {m}_{i}^{[t,r]} + \beta_1(1-\alpha_1) g_{i,k}^{[t,r]} + \frac{1-\beta_1}{\nm{{d}_{i,k}^{[t,r]}}} \left( {x}_{i,k}^{[t,r]} - \sum_{j\in \mathcal{N}_i} w_{ij} {x}_{j,k}^{[t,r]}\right) \nonumber \\
      & + \frac{\gamma(1-\beta_1) \alpha_1}{(1-\alpha_1^{r+1})\nm{{d}_{i,k}^{[t,r]}}} \sum_{j\in \mathcal{N}_i} w_{ij} \left( m_{j}^{[t,r]} \odot u_{j}^{[t,r]} \right) \nonumber \\
      & + \frac{\gamma(1-\beta_1)(1-\alpha_1)}{(1-\alpha_1^{r+1}) \nm{{d}_{i,k}^{[t,r]}}} \sum_{j\in \mathcal{N}_i} w_{ij} \left( g_{j,k}^{[t,r]} \odot u_{j}^{[t,r]} \right)
    \end{align}
    where $\nm{{d}_{i,k}^{[t,r]}} = \nm{{x}_{i,k}^{[t,r]} - \sum_{j\in \mathcal{N}_i} w_{ij} {x}_{j,k}^{[t,r+\frac{1}{2}]}}$. Let $\mathbf{H}^{[t,r]} \overset{\Delta}{=} \mathbf{M}^{[t,r]} \odot \mathbf{U}^{[t,r]}$, $\mathbf{Q}_k^{[t,r]} \overset{\Delta}{=} \mathbf{G}_k^{[t,r]} \odot \mathbf{U}^{[t,r]}$, $\Delta_k^{[t,r]} = \mathrm{diag}\left[\nm{{x}_{i,k}^{[t,r]} - \sum_{j\in \mathcal{N}_i} w_{ij} {x}_{j,k}^{[t,r+\frac{1}{2}]}}^{-1}, \forall i\in\mathcal{N} \right]$, we have the matrix forms of ${x}_{k}^{[t,r+1]}$ and $m_{i}^{[t,r+1]}$:
    \begin{align} \label{eq:matrix_form_x}
      \mathbf{X}_{k}^{[t,r+1]} =& \mathbf{W} \mathbf{X}_{k}^{[t,r]} - \frac{\gamma \alpha_1}{(1-\alpha_1^{r+1})} \mathbf{W} \mathbf{H}^{[t,r]} - \frac{\gamma (1-\alpha_1)}{(1-\alpha_1^{r+1})} \mathbf{W} \mathbf{Q}_k^{[t,r]} 
    \end{align}
    and
    \begin{align} \label{eq:matrix_form_m}
      \mathbf{M}^{[t,r+1]} =&  \beta_1 {\mathbf{M}}^{[t,r+\frac{1}{2}]} + (1-\beta_1)\Delta_k^{[t,r]}\left(\mathbf{X}_{k}^{[t,r]} -\mathbf{X}_{k}^{[t,r+\frac{1}{2}]} \right)\nonumber \\ 
      =& \beta_1\alpha_1 {\mathbf{M}}^{[t,r]} + \beta_1(1-\alpha_1) \mathbf{G}_k^{[t,r]} + (1-\beta_1)\Delta_k^{[t,r]} \left(\mathbf{X}_{k}^{[t,r]} -\mathbf{X}_{k}^{[t,r+\frac{1}{2}]} \right)\nonumber \\
      =& \beta_1\alpha_1 {\mathbf{M}}^{[t,r]} + \beta_1(1-\alpha_1) \mathbf{G}_k^{[t,r]} + (1-\beta_1) \left(\mathbf{I} - \mathbf{W} \right) \Delta_k^{[t,r]} \mathbf{X}_{k}^{[t,r]} \nonumber \\
      & + \frac{\gamma\alpha_1 (1-\beta_1)}{(1-\alpha_1^{r+1})} \mathbf{W} \Delta_k^{[t,r]} \mathbf{H}^{[t,r]} + \frac{\gamma(1-\beta_1)(1-\alpha_1)}{(1-\alpha_1^{r+1})} \mathbf{W} \Delta_k^{[t,r]} \mathbf{Q}_k^{[t,r]} 
    \end{align}

    \textbf{Consensus Error}. Let $\mathbf{J} = \frac{1}{N}\mathbf{1}_N \mathbf{1}_N^\top $. We bound the Frobenius norm of $\mathbf{X}_k^{[t,r+1]} -\bar{\mathbf{X}}_k^{[t,r+1]}$:
    \begin{align}\label{eq:x_iter}
      & \frac{1}{N} \mathbb{E} \left\| \mathbf{X}_k^{[t,r+1]} -\bar{\mathbf{X}}_k^{[t,r+1]} \right\|_\mathfrak{F}^{2}  \nonumber \\
      = & \frac{1}{N}\mathbb{E} \left\|
        \begin{aligned}
        &\mathbf{W} \left(\mathbf{X}_k^{[t,r]} -\bar{\mathbf{X}}_k^{[t,r]} \right) - \left(\mathbf{W} - \mathbf{W}\mathbf{J}\right) \frac{\gamma \alpha_1}{(1-\alpha_1^{r+1})} \mathbf{H}^{[t,r]} \\
        & - \left(\mathbf{W} - \mathbf{W}\mathbf{J}\right) \frac{\gamma (1-\alpha_1)}{(1-\alpha_1^{r+1})} \mathbf{Q}_k^{[t,r]}
        \end{aligned}
      \right\|_\mathfrak{F}^{2} \nonumber \\
      \overset{\tcircle{2}}{\leq} & \frac{1+\rho}{2N} \mathbb{E} \left\| \mathbf{X}_k^{[t,r]} -\bar{\mathbf{X}}_k^{[t,r]} \right\|_\mathfrak{F}^{2} + \frac{2(1+\rho)}{N(1-\rho)} \mathbb{E} \left\| \left(\mathbf{I} - \mathbf{J}\right) \mathbf{W} \frac{\gamma \alpha_1}{(1-\alpha_1^{r+1})} \mathbf{H}^{[t,r]} \right\|_\mathfrak{F}^{2} \nonumber \\
      & + \frac{2(1+\rho)}{N(1-\rho)} \mathbb{E} \left\| \left(\mathbf{I} - \mathbf{J}\right) \mathbf{W} \frac{\gamma (1-\alpha_1)}{(1-\alpha_1^{r+1})} \mathbf{Q}_k^{[t,r]} \right\|_\mathfrak{F}^{2} \nonumber \\
      \overset{\tcircle{3}}{\leq} & \frac{1+\rho}{2N} \mathbb{E} \left\| \mathbf{X}_k^{[t,r]} -\bar{\mathbf{X}}_k^{[t,r]} \right\|_\mathfrak{F}^{2} + \frac{2(1+\rho)\rho\gamma^2 \alpha_1^2}{N(1-\rho)(1-\alpha_1^{r+1})^2} \mathbb{E} \left\| \mathbf{H}^{[t,r]} \right\|_\mathfrak{F}^{2} \nonumber \\
      & + \frac{2(1+\rho)\rho\gamma^2 (1-\alpha_1)^2}{N(1-\rho)(1-\alpha_1^{r+1})^2} \mathbb{E} \left\| \mathbf{Q}_k^{[t,r]} \right\|_\mathfrak{F}^{2} \nonumber \\
      \overset{\tcircle{4}}{\leq} & \frac{1+\rho}{2N} \mathbb{E} \left\| \mathbf{X}_k^{[t,r]} -\bar{\mathbf{X}}_k^{[t,r]} \right\|_\mathfrak{F}^{2} + \frac{2(1+\rho)\rho\gamma^2 \alpha_1^2}{N(1-\rho)(1-\alpha_1^{r+1})^2\epsilon^2} \mathbb{E} \left\| \mathbf{M}^{[t,r]} \right\|_\mathfrak{F}^{2}  \nonumber \\ 
      & + \frac{2(1+\rho)\delta^2\rho\gamma^2 (1-\alpha_1)^2}{(1-\rho)(1-\alpha_1^{r+1})^2 \epsilon^2} \nonumber \\ 
      \overset{\tcircle{5}}{\leq} & \frac{1+\rho}{2N} \mathbb{E} \left\| \mathbf{X}_k^{[t,r]} -\bar{\mathbf{X}}_k^{[t,r]} \right\|_\mathfrak{F}^{2} + \frac{4(1+\rho)\rho\gamma^2 \alpha_1^2}{N(1-\rho)(1-\alpha_1^{r+1})^2\epsilon^2} \mathbb{E} \left\| \mathbf{M}^{[t,r]} - \bar{\mathbf{M}}^{[t,r]} \right\|_\mathfrak{F}^{2} \nonumber \\
      & + \frac{4(1+\rho)\rho\gamma^2 \alpha_1^2 + 2(1+\rho)\delta^2\rho\gamma^2 (1-\alpha_1)^2}{(1-\rho)(1-\alpha_1^{r+1})^2\epsilon^2} 
    \end{align}
    where we have $\tcircle{1}$ using Young’s inequality, $\tcircle{2}$ and $\tcircle{3}$ according to \textbf{Lemma}~\ref{lem:fnorm}, $\tcircle{4}$ from \textbf{Lemma}~\ref{lem:s_sup} and \textbf{Lemma}~\ref{lem:hadamard}, and $\tcircle{5}$ since $\nm{h_{i,k}^{[t,r]}} \leq 1$ and thus $\nm{m_{i,k}^{[t,r+1]}}=\nm{\beta_1m_{i,k}^{[t,r+\frac{1}{2}]} + (1-\beta_1)h_{i,k}^{[t,r]}} \leq \beta_1\nm{m_{i,k}^{[t,r]}} + \beta_1(1-\beta_1) \nm{g_{i,k}^{[t,r]}} + (1-\beta_1)\nm{h_{i,k}^{[t,r]}} \leq \frac{\beta_1\delta+1}{1+\beta_1}, \; \forall i,k,t,r$.

    \textbf{Momentum Error}. Using the same way to bound the consensus error, we have:
    \begin{align}\label{eq:m_iter}
      & \frac{1}{N}\mathbb{E} \left\| \mathbf{M}_k^{[t,r+1]} -\bar{\mathbf{M}}_k^{[t,r+1]} \right\|_{F}^{2} \nonumber \\
      = & \frac{1}{N}\mathbb{E} \left\|
        \begin{aligned}
        &\beta_1\alpha_1 \left({\mathbf{M}}^{[t,r]} - \bar{\mathbf{M}}^{[t,r]} \right) 
        + \beta_1(1-\alpha_1) \left( \mathbf{G}_k^{[t,r]} - \bar{\mathbf{G}}_k^{[t,r]} \right) \\
        & + (1-\beta_1) \left(\mathbf{I} - \mathbf{J}\right) \mathbf{W} 
        \left[ \Delta_k^{[t,r]}\left({\mathbf{X}}_{k}^{[t,r]} -{\mathbf{X}}_{k}^{[t,r+\frac{1}{2}]} \right)\right]
        \end{aligned}
        \right\|_{F}^{2} \nonumber \\
      \overset{\tcircle{1}}{\leq} & \frac{1+\lambda}{N} \beta_1^2\alpha_1^2 \mathbb{E} \left\| {\mathbf{M}}^{[t,r]} - \bar{\mathbf{M}}^{[t,r]}\right\|_{F}^{2} + \frac{2(1+\frac{1}{\lambda})\beta_1^2(1-\alpha_1)^2}{N} \mathbb{E} \left\| {\mathbf{G}_k}^{[t,r]} - \bar{\mathbf{G}}_k^{[t,r]} \right\|_{F}^{2} \nonumber \\
      & + \frac{2(1+\frac{1}{\lambda})\rho(1-\beta_1)^2 }{N} \mathbb{E} \left\| \Delta_k^{[t,r]}\left({\mathbf{X}}_{k}^{[t,r]} -{\mathbf{X}}_{k}^{[t,r+\frac{1}{2}]} \right) \right\|_{F}^{2} \nonumber \\
      \leq & \frac{1+\lambda}{N} \beta_1^2 \alpha_1^2 \mathbb{E} \left\| {\mathbf{M}}^{[t,r]} - \bar{\mathbf{M}}^{[t,r]} \right\|_{F}^{2} + 4\left(1+\frac{1}{\lambda}\right) \beta_1^2(1-\alpha_1)^2\delta^2 + 2\left(1+\frac{1}{\lambda}\right)\rho(1-\beta_1)^2 \nonumber \\
      \overset{\tcircle{2}}{\leq} & \frac{1}{1-(1+\lambda)\beta_1^2\alpha_1^2} \left(4\left(1+\frac{1}{\lambda}\right) \beta_1^2(1-\alpha_1)^2\delta^2 + 2\left(1+\frac{1}{\lambda}\right)\rho(1-\beta_1)^2 \right)
    \end{align}
    where $\tcircle{1}$ comes from Young's inequality $\|A+B\|^2 \leq (1+\lambda)\|A\|^2 + (1+\frac{1}{\lambda}) \|B\|^2$ where $0< \lambda < \frac{1-\alpha_1^2\beta_1^2}{\alpha_1^2\beta_1^2}$; $\tcircle{2}$ holds for $\sum_{z} q^z < \frac{1}{1-q}, \forall q < 1$ and ${\mathbf{M}}^{[t,0]} - \bar{\mathbf{M}}^{[t,0]} = 0$. Substituting Eq.~\eqref{eq:m_iter} into Eq.~\eqref{eq:x_iter} and letting $C_2 = 4(1+\rho)\rho\alpha_1^2 + 2(1+\rho)\delta^2\rho (1-\alpha_1)^2$ and $C_3 = 4(1+\frac{1}{\lambda})\beta_1^2(1-\alpha_1)^2\delta^2 + 2\left(1+\frac{1}{\lambda}\right)\rho(1-\beta_1)^2 $, we have:
    \begin{align}\label{eq:x_iter_final}
      \frac{1}{N} \mathbb{E} \left\| \mathbf{X}_k^{[t,r+1]} -\bar{\mathbf{X}}_k^{[t,r+1]} \right\|_\mathfrak{F}^{2} 
      \leq & \frac{1+\rho}{2N} \mathbb{E} \left\| \mathbf{X}_k^{[t,r]} -\bar{\mathbf{X}}_k^{[t,r]} \right\|_\mathfrak{F}^{2}  + \frac{C_2\gamma^2}{(1-\rho)(1-\alpha_1^{r+1})^2\epsilon^2}  \nonumber \\
      & + \frac{4(1+\rho)\rho\gamma^2 \alpha_1^2}{N(1-\rho)(1-\alpha_1^{r+1})^2 \epsilon^2} \mathbb{E} \left\| \mathbf{M}^{[t,r]} - \bar{\mathbf{M}}^{[t,r]} \right\|_\mathfrak{F}^{2} \nonumber \\
      \leq & \frac{1+\rho}{2N} \mathbb{E} \left\| \mathbf{X}_k^{[t,r]} -\bar{\mathbf{X}}_k^{[t,r]} \right\|_\mathfrak{F}^{2}  + \frac{C_2\gamma^2}{(1-\rho)(1-\alpha_1^{r+1})^2\epsilon^2} \nonumber \\
      & + \frac{4C_3(1+\rho)\rho \gamma^2 \alpha_1^2}{(1-\rho)(1-\alpha_1^{r+1})^2 \epsilon^2 \left(1-(1+\lambda)\beta_1^2\alpha_1^2\right)} \nonumber \\
      \leq & \frac{1+\rho}{2N} \mathbb{E} \left\| \mathbf{X}_k^{[t,r]} -\bar{\mathbf{X}}_k^{[t,r]} \right\|_\mathfrak{F}^{2} + C_4 \gamma^2 \leq \frac{2C_4\gamma^2}{1-\rho}
    \end{align}
    where $C_4 = \frac{4C_3(1+\rho)\rho \alpha_1^2 + C_2\left(1-(1+\lambda)\beta_1^2\alpha_1^2 \right)}{(1-\rho)(1-\alpha_1)^2 \epsilon^2 \left(1-(1+\lambda) \beta_1^2 \alpha_1^2\right)}$  and the last inequality comes from $\mathbf{X}_k^{[t,0]} -\bar{\mathbf{X}}_k^{[t,0]} = \mathbf{X}_{k-1}^{[t,R]} -\bar{\mathbf{X}}_{k-1}^{[t,R]} $ and $\mathbf{X}_1^{[0,0]} = \bar{\mathbf{X}}_1^{[0,0]}$.

    \textbf{Sequence Tracking Error}. Next we show the error boundary of $e_k^{[t,r]}$. According to the definition of the error term (see Eq.~\eqref{eq:error_df}), we have
    \begin{align}
      e_k^{[t,r+1]} =& \hat{x}_k^{[t,r+1]} -  \bar{x}_k^{[t,r+1]} \nonumber \\
      =& e_k^{[t,r]} + \gamma \frac{1}{1-\alpha_1^{r+1}} \left[\frac{1}{N}\sum_{i=1}^N \sum_{j\in \mathcal{N}_i} w_{ij}\left(\alpha_1 m_{j}^{[t,r]} + (1-\alpha_1) g_{j,k}^{[t,r]} \right)\odot u_{j}^{[t,r]}-\bar{g}_k^{[t,r]}\right] \nonumber \\
      =& e_k^{[t,0]} +\sum_{\ell=0}^{r}\frac{\gamma}{1-\alpha_1^{\ell+1}} \left[\frac{1}{N}\sum_{i=1}^N \sum_{j\in \mathcal{N}_i} w_{ij}\left(\alpha_1 m_{j}^{[t,\ell]} + (1-\alpha_1) g_{j,k}^{[t,\ell]} \right)\odot u_{j}^{[t,\ell]}-\bar{g}_k^{[t,\ell]}\right]
    \end{align}
    Under \textbf{Assumption~\ref{assum:error_control}}, we have $e_k^{[t,0]} = \hat{x}_k^{[t,0]} - \bar{x}_k^{[t,0]} = 0, \forall {t, k} $. Hence, by taking the expectation at both sides, we obtain
    \begin{align}\label{eq:ee}
      & \e{\nmsq{e_k^{[t,r+1]}}} \nonumber \\
      \leq & r\gamma^2 \sum_{\ell=0}^{r}\frac{1}{(1-\alpha_1^{\ell+1})^2} \e{\nmsq{\frac{1}{N}\sum_{i=1}^N \sum_{j\in \mathcal{N}_i} w_{ij}\left(\alpha_1 m_{j}^{[t,\ell]} + (1-\alpha_1) g_{j,k}^{[t,\ell]} \right)\odot u_{j}^{[t,\ell]}-\bar{g}_k^{[t,\ell]}}}
      \nonumber \\
      \leq & 2r\gamma^2\sum_{\ell=0}^{r}\frac{1}{(1-\alpha_1^{\ell+1})^2} \frac{\alpha_1^2 + \delta^2(1-\alpha_1)^2}{\epsilon^2} + 2r\gamma^2 \sum_{\ell=0}^{r}\frac{1}{(1-\alpha_1^{\ell+1})^2} \e{\nmsq{\bar{g}_k^{[t,\ell]}}} \nonumber \\
      \leq & 2r\frac{\gamma^2\alpha_1^2 + \gamma^2\delta^2(1-\alpha_1)^2}{(1-\alpha_1)^2 \epsilon^2} + \frac{3r\gamma^2}{(1-\alpha_1)^2} \sum_{\ell=0}^{r} \e{\nmsq{\frac{1}{N}\sum_{i=1}^N \nabla_k f_i\left(x_{i,k}^{[t,\ell]} \right)}}
    \end{align}
    Finally, we substitute Eqs.~\eqref{eq:x_iter}, \eqref{eq:m_iter}, and \eqref{eq:ee} into Eq.~\eqref{eq:one_step_main} and obtain
    \begin{align}\label{eq:main_eq_single}
      & \e{\mathcal{F}(\hat{x}_k^{[t,r+1]})} - \e{\mathcal{F}(\hat{x}_k^{[t,r]})} \nonumber \\
      \leq & \left( \frac{L\gamma^2}{2(1 -\alpha_1^{r+1})^2} - \frac{\gamma}{2(1 -\alpha_1^{r+1})} \right) \e{\nmsq{\frac{1}{N}\sum_{i=1}^N \nabla_k f_i(x_{i,k}^{[t,r]})}} \nonumber \\
      & - \frac{\gamma}{4(1 -\alpha_1^{r+1})} \e{\nmsq{\nabla_k \mathcal{F}(\bar{x}_{k}^{[t,r]})}} + \frac{L\gamma^2\xi^2}{2N(1 -\alpha_1^{r+1})^2}  \nonumber \\
      & + \frac{3L^2\gamma}{2(1 -\alpha_1^{r+1})} \e{\nmsq{e_k^{[t,r]}}} + \frac{L^2\gamma}{N(1 -\alpha_1^{r+1})} \sum_{i=1}^N \e{\nmsq{\bar{x}_{k}^{[t,r]} - x_{i,k}^{[t,r]}}} \nonumber \\
      \leq & \left( \frac{L\gamma^2}{2(1 -\alpha_1)^2} - \frac{\gamma}{2(1 -\alpha_1)} \right) \e{\nmsq{\frac{1}{N}\sum_{i=1}^N \nabla_k f_i(x_{i,k}^{[t,r]})}} - \frac{\gamma}{4(1 -\alpha_1^{r+1})} \e{\nmsq{\nabla_k \mathcal{F}(\bar{x}_{k}^{[t,r]})}} \nonumber \\
      & + \frac{L\gamma^2\xi^2}{2N(1 -\alpha_1)^2} + \frac{3L^2\gamma}{2(1 -\alpha_1)} \e{\nmsq{e_k^{[t,r]}}} + \frac{L^2\gamma}{(1 -\alpha_1)}\e{\nmsq{\bar{\mathbf{X}}_{k}^{[t,r]} - \mathbf{X}_{i,k}^{[t,r]}}} \nonumber \\
      \leq & \left( \frac{L\gamma^2}{2(1 -\alpha_1)^2} - \frac{\gamma}{2(1 -\alpha_1)} \right) \e{\nmsq{\frac{1}{N}\sum_{i=1}^N \nabla_k f_i(x_{i,k}^{[t,r]})}} - \frac{\gamma}{4(1 -\alpha_1^{r+1})} \e{\nmsq{\nabla_k \mathcal{F}(\bar{x}_{k}^{[t,r]})}} \nonumber \\
      & + \frac{L\gamma^2\xi^2}{2N(1 -\alpha_1)^2} + \frac{3L^2\gamma}{2(1 -\alpha_1)} \e{\nmsq{e_k^{[t,r]}}} + \frac{2L^2C_4\gamma^3}{(1-\rho)(1-\alpha_1)}
    \end{align}

    \textbf{Descent Iteration}. By summing over $r=0\to R-1$, we get
    \begin{align}\label{eq:main_eq_r}
      & \e{\mathcal{F}(\hat{x}_{k+1}^{[t,0]})} - \e{\mathcal{F}(\hat{x}_k^{[t,0]})} \nonumber \\
      = & \e{\mathcal{F}(\hat{x}_{k+1}^{[t,0]})} - \e{\mathcal{F}(\hat{x}_k^{[t,R]})} + \e{\mathcal{F}(\hat{x}_k^{[t,R]})} - \e{\mathcal{F}(\hat{x}_k^{[t,R-1]})} \nonumber \\ 
      & + \ldots + \e{\mathcal{F}(\hat{x}_k^{[t,2]})} - \e{\mathcal{F}(\hat{x}_k^{[t,1]})} + \e{\mathcal{F}(\hat{x}_k^{[t,1]})} - \e{\mathcal{F}(\hat{x}_k^{[t,0]})} \nonumber \\
      \leq & C_{err}\e{\nmsq{e_k^{[t,R]}}} + \frac{2RL^2C_4\gamma^3}{(1-\rho)(1-\alpha_1)} + \frac{RL\gamma^2\xi^2}{2N(1 -\alpha_1)^2} \nonumber\\
      & + \sum_{r=0}^{R-1} \left( \frac{L\gamma^2}{2(1 -\alpha_1)^2} - \frac{\gamma}{2(1 -\alpha_1)} \right) \e{\nmsq{\frac{1}{N}\sum_{i=1}^N \nabla_k f_i(x_{i,k}^{[t,r]})}}  \nonumber \\
      & - \sum_{r=0}^{R-1}\frac{\gamma}{4(1 -\alpha_1^{r+1})} \e{\nmsq{\nabla_k \mathcal{F}(\bar{x}_{k}^{[t,r]})}} + \sum_{r=0}^{R-1} \frac{3L^2\gamma}{2(1 -\alpha_1)} \e{\nmsq{e_k^{[t,r]}}} 
    \end{align}
    Furthermore, by summing over a full round block switch, we have
    \begin{align}\label{eq:main_eq_rb}
      & \e{\mathcal{F}(\hat{x}_1^{[t+1,0]})} - \e{\mathcal{F}(\hat{x}_1^{[t,0]})} \nonumber \\
      =& \e{\mathcal{F}(\hat{x}_1^{[t+1,0]})} - \e{\mathcal{F}(\hat{x}_{B}^{[t,0]})} + \ldots + \e{\mathcal{F}(\hat{x}_2^{[t,0]})} - \e{\mathcal{F}(\hat{x}_1^{[t,0]})} \nonumber \\
      \leq & \left( \frac{L\gamma^2}{2(1 -\alpha_1)^2} - \frac{\gamma}{2(1 -\alpha_1)} \right) \sum_{k=1}^{B} \sum_{r=0}^{R-1}  \e{\nmsq{\frac{1}{N}\sum_{i=1}^N \nabla_k f_i(x_{i,k}^{[t,r]})}} + \frac{RBL\gamma^2\xi^2}{2N(1 -\alpha_1)^2} \nonumber \\
      & + \sum_{k=1}^{B} \sum_{r=0}^{R-1} \frac{3L^2\gamma}{2(1 -\alpha_1)} \e{\nmsq{e_k^{[t,r]}}} +  C_{err}\sum_{k=1}^{B} \e{\nmsq{e_k^{[t,R]}}}  \nonumber \\
      & - \sum_{k=1}^{B} \sum_{r=0}^{R-1}  \frac{\gamma}{4(1 -\alpha_1^{r+1})} \e{\nmsq{\nabla_k \mathcal{F}(\bar{x}_{k}^{[t,r]})}} + \frac{2RBL^2C_4\gamma^3}{(1-\rho)(1-\alpha_1)}
    \end{align}
    Moreover, summing from $t=0 \to T-1$ and letting $\mathcal{F}^{\star}=\e{\mathcal{F}(\hat{x}_1^{[T,0]})}$ and $\e{\mathcal{F}(\hat{x}^{0})} = \e{\mathcal{F}(\hat{x}_1^{[0,0]})}$, we obtain
    \begin{align}\label{eq:main_eq_rbt}
       \mathcal{F}^{\star} - \e{\mathcal{F}(\hat{x}^{0})}
      = & \e{\mathcal{F}(\hat{x}_1^{[T,0]})} - \e{\mathcal{F}(\hat{x}_1^{[T-1,0]})} + \ldots +  \e{\mathcal{F}(\hat{x}_1^{[1,0]})} - \e{\mathcal{F}(\hat{x}_1^{[0,0]})} \nonumber \\
      % 
      % = & \sum_{k=1}^{B} \left(\e{\mathcal{F}(\hat{x}_k^{[T-1,R]})} - \e{\mathcal{F}(\hat{x}_k^{[0,0]})} \right) \nonumber \\
       \leq & \left( \frac{L\gamma^2}{2(1 -\alpha_1)^2} - \frac{\gamma}{2(1 -\alpha_1)} \right) \sum_{t=0}^{T-1} \sum_{k=1}^{B} \sum_{r=0}^{R-1}  \e{\nmsq{\frac{1}{N}\sum_{i=1}^N \nabla_k f_i(x_{i,k}^{[t,r]})}}  \nonumber \\
       & + \frac{TBRL\gamma^2\xi^2}{2N(1 -\alpha_1)^2} + \sum_{t=0}^{T-1} \sum_{k=1}^{B} \sum_{r=0}^{R-1} \frac{3L^2\gamma}{2(1 -\alpha_1)} \e{\nmsq{e_k^{[t,r]}}} \nonumber \\
       & + \frac{2TBRL^2C_4\gamma^3}{(1-\rho)(1-\alpha_1)}  + C_{err} \sum_{t=0}^{T-1} \sum_{k=1}^{B} \e{\nmsq{e_k^{[t,R]}}} \nonumber \\
       & - \sum_{t=0}^{T-1} \sum_{k=1}^{B} \sum_{r=0}^{R-1} \frac{\gamma}{4(1 -\alpha_1^{R})} \e{\nmsq{\nabla_k \mathcal{F}(\bar{x}_{k}^{[t,r]})}}
    \end{align}
    % 
    % Let $C_5=\frac{\alpha_1^2 + \delta^2(1-\alpha_1)^2}{(1-\alpha_1)^2 \epsilon^2}$. 
    By substituting Eq.~\eqref{eq:ee} into Eq.~\eqref{eq:main_eq_rbt},  we obtain:
    \begin{align}
       & \mathcal{F}^{\star} - \e{\mathcal{F}(\hat{x}^{0})} \nonumber \\
       \leq & \left( \frac{L\gamma^2}{2(1 -\alpha_1)^2} - \frac{\gamma}{2(1 -\alpha_1)} \right) \sum_{t=0}^{T-1} \sum_{k=1}^{B} \sum_{r=0}^{R-1}  \e{\nmsq{\frac{1}{N}\sum_{i=1}^N \nabla_k f_i(x_{i,k}^{[t,r]})}}  \nonumber \\
       & + \frac{3R\gamma^2}{(1-\alpha_1)^2} \left(\frac{3L^2\gamma R}{2(1-\alpha_1)}+ C_{err} \right) \sum_{t=0}^{T-1} \sum_{k=1}^{B} \sum_{r=0}^{R-1}  \e{\nmsq{\frac{1}{N}\sum_{i=1}^N \nabla_k f_i(x_{i,k}^{[t,r]})}} \nonumber \\
       & + \frac{TBRL\gamma^2\xi^2}{2N(1 -\alpha_1)^2} + + \frac{2C_4TBRL^2\gamma^3}{(1-\rho)(1-\alpha_1)} + \frac{3C_1TBR^2 L^2\gamma^3}{2(1-\alpha_1)} + C_1C_{err}TBR\gamma^2  \nonumber \\
       & - \sum_{t=0}^{T-1} \sum_{k=1}^{B} \sum_{r=0}^{R-1} \frac{\gamma}{4(1 -\alpha_1^{R})}  \e{\nmsq{\nabla_k \mathcal{F}(\bar{x}_{k}^{[t,r]})}}
    \end{align}

    To eliminate the term $\mathbb{E}\!\left[\left\|\frac{1}{N}\sum_{i=1}^N \nabla_k f_i\!\left(x_{i,k}^{[t,r]}\right)\right\|^2\right]$, it suffices to choose the learning rate $\gamma$ such that its coefficient is non-positive. This requires
    \begin{align}
      \frac{L\gamma^2}{2(1 -\alpha_1)^2} - \frac{\gamma}{2(1 -\alpha_1)} + \frac{3R\gamma^2}{(1-\alpha_1)^2} \left(\frac{3L^2\gamma R}{2(1-\alpha_1)}+ C_{err} \right) \leq 0
    \end{align}
    Hence, we have:
    \begin{align}
      \gamma \leq \frac{2(1-\alpha_1)}{L+6RC_{err}+\sqrt{(L+6RC_{err})^2 + 36L^2R^2}}
    \end{align}
    Dividing by $T$, $B$ and $R$ and reordering the inequality yields:
    \begin{align}
       & \frac{1}{TBR} \sum_{t=0}^{T-1} \sum_{k=1}^{B} \sum_{r=0}^{R-1} \e{\nmsq{\nabla_k \mathcal{F}(\bar{x}_{k}^{t,r})}} \nonumber \\
       \leq & \frac{8(1 -\alpha_1^{R})}{TBR\gamma}\left( \e{\mathcal{F}(\hat{x}^{0}) - \mathcal{F}^{\star}} \right) + \frac{4(1 -\alpha_1^{R})L \gamma\xi^2}{N(1 -\alpha_1)^2} \nonumber \\
       & + \frac{16(1 -\alpha_1^{R})L^2C_4\gamma^2}{(1-\rho)(1-\alpha_1)} + \frac{12(1 -\alpha_1^{R})RC_1 L^2\gamma^2}{(1-\alpha_1)}  + 8C_1C_{err}(1 -\alpha_1^{R})\gamma
    \end{align}
    Let $\gamma = \mathcal{O}\left( \frac{1}{\sqrt{TBR}} \right)$, and $\gamma$ satisfies
    \begin{align}
      0< \gamma \leq \min\left\{ \frac{(1-\beta_1)}{3L\sqrt{TBR}}, \frac{2(1-\beta_1)}{L+6RC_{err} +\sqrt{(L+6RC_{err})^2 + 36L^2R^2}} \right\}
    \end{align}
    When
    %
    % \begin{align}
    %     \frac{1}{\sqrt{TBR}} & \geq \frac{(1-\beta_1)}{3L\sqrt{TBR}} \nonumber \\
    %     \frac{1}{\sqrt{TBR}} & \geq \frac{2(1-\beta_1)}{L+6RC_{err} +\sqrt{(L+6RC_{err})^2 + 36L^2R^2}}
    % \end{align}
    %
    \begin{align}
    %\begin{cases}
      \frac{1}{\sqrt{TBR}} &\geq \max \left\{ \frac{(1-\beta_1)}{3L\sqrt{TBR}}, \frac{2(1-\beta_1)}{L+6RC_{err} +\sqrt{(L+6RC_{err})^2 + 36L^2R^2}} \right\}
    %\end{cases}
    \end{align}
    such that 
    \begin{align}
      T \geq \max \left\{ \frac{(1-\beta_1)^2}{9L^2 B R},~\frac{ \mathcal{O}\left(L^2 + R^2C_{err}^2 + L^2R^2\right)}{L^2BR} \right\}
    \end{align}
    %
    % \begin{align}
    % &\left\{
    % \begin{aligned}
    % \frac{1}{\sqrt{TBR}} &\geq \frac{(1-\beta_1)}{3L\sqrt{TBR}} \\
    % \frac{1}{\sqrt{TBR}} &\geq \frac{2(1-\beta_1)}{L+6RC_{err} +\sqrt{(L+6RC_{err})^2 + 36L^2R^2}}
    % \end{aligned}
    % \right.
    % \\
    % & \Longrightarrow\ 
    % T \geq \max\left\{
    % \frac{(1-\beta_1)^2}{9L^2 B R},\ 
    % \frac{ \mathcal{O}\left(L^2 + R^2C_{err}^2 + L^2R^2\right)}{L^2BR}
    % \right\}
    % \end{align}
    %fa
    we obtain
    \begin{align}
       \frac{1}{TBR} \sum_{t=0}^{T-1} \sum_{k=1}^{B} \sum_{r=0}^{R-1} \e{\nmsq{\nabla_k \mathcal{F}(\bar{x}_{k}^{t,r})}} = \mathcal{O}\left( \frac{1}{\sqrt{TBR}} \right)
    \end{align}

    \begin{remark}
      Compared with QGM~\cite{LinKSJ-ICML21}, which proves convergence for decentralized SGD with first-order momentum, and BAdam~\cite{LuoYLi-NIPS24}, which analyzes centralized block-wise Adam only in the deterministic case, DECA establishes convergence for decentralized block-wise Adam under stochastic gradients and non-IID data. The proof simultaneously controls stochastic gradient noise, peer-to-peer consensus error, client drift, stale block-wise moment estimates, and the objective perturbation induced by transitions between consecutive blocks. By introducing a virtual averaged block sequence and bounding its deviation from decentralized client iterates, we prove that DECA achieves an average squared-gradient convergence rate of $O(1/\sqrt{TBR})$, thereby preserving the standard stochastic non-convex convergence behavior under block-wise optimization.
      
    \end{remark}

\section{Additional Theoretical Analysis} \label{sec:add_theory}

\subsection{Communication Complexity} \label{ssec:commcomplexity}

    We analyze the per-round communication complexity of each client, where $d$ denotes the number of model parameters and $N_{\max}$ denotes the maximum node degree of the communication graph. Standard decentralized algorithms, such as D-SGD~\cite{LianZZHZL-NIPS17}, synchronize full-model parameters with neighboring clients at each iteration, incurring a per-client communication cost of $\mathcal{O}(dN_{\max})$. For LLMs with billions of parameters, such full-parameter exchange can easily saturate communication links and lead to severe latency. In contrast, DECA exchanges only the active block of size $d/B$ during each inner step, resulting in a per-round complexity of $\mathcal{O}\!\left(dRN_{\max}/B\right)$. Unlike decentralized PEFT methods~\cite{GhiasvandAP-REALM25,Saadati-NeuralNet26}, which improve communication efficiency by restricting updates to low-rank parameter subspaces, DECA reduces bandwidth cost through block-wise communication while preserving full-parameter fine-tuning capacity.

    %
    %The $\tau$-DECA variant further lowers the communication frequency, yielding an amortized cost of $\mathcal{O}\left(\frac{dRN_{\rm max}}{\tau B}\right)$. 
    %

  \subsection{Memory Consumption} \label{ssec:memconsumption}

    We analyze the GPU memory consumption of DECA. Let $M$ denote the FP16 memory required to store the model parameters, i.e., $M \approx 2d$ bytes. In standard decentralized FPFT with Adam~\cite{KingmaB-ICLR2015}, each client stores FP16 parameters for forward and backward propagation, together with FP32 master parameters, gradients, and first- and second-order moment estimates. This results in a total memory cost of approximately $9M$. When the consensus-derived local discrepancy is further incorporated, an additional FP32 auxiliary vector $h$ is required (see Line~12 in \textbf{Algorithm}~\ref{alg:deca}), increasing the memory cost to approximately $11M$. In contrast, DECA maintains FP32 optimizer states only for the active block, reducing the peak memory usage to $M + 10M/B$. This analysis excludes the memory required for intermediate activations, as activation storage is inherent to backpropagation rather than optimizer-state maintenance. Nevertheless, DECA is fully compatible with gradient checkpointing~\cite{ChenXZG-arXiv16}, which trades additional computation for reduced activation memory and can further improve feasibility on memory-constrained devices.

  \subsection{Computation Complexity} \label{ssec:compcomplexity}

    We analyze the computational efficiency of DECA under a layer-wise partitioning strategy that groups consecutive Transformer layers into blocks. The computational cost is measured using an \emph{atomic backward operation}, defined as the backward pass of one batch through a single Transformer module. In standard decentralized SGD or LoRA, trainable parameters span all layers, requiring full-depth backpropagation for every batch. Let $R$ denote the number of batches processed per block (see \textbf{Algorithm}~\ref{alg:deca}). A global round processes $RB$ batches, each traversing all $B$ blocks, resulting in $RB^2$ atomic operations. In contrast, DECA updates only the active block. Backpropagation proceeds only from the loss to the active block, traversing $k$ blocks when updating the $k$-th block. Summing over all blocks yields a total cost of $R \sum_{k=1}^{B} k = \tfrac{1}{2} R B (B+1)$. Moreover, DECA reduces backward computation by skipping parameter gradient calculations for frozen intermediate layers. Instead, it performs only the vector Jacobian products required to propagate gradients to the active block, effectively halving the arithmetic cost for these layers.

\section{More Information about Our Experiment Settings} \label{sec:app_expsetting}
  We select the following models to demonstrate that our algorithm scales effectively across different parameter sizes ($1.5\text{B}- 8\text{B}$) and architectures:
  \begin{itemize}
    \item \textbf{Qwen2-1.5B} is a lightweight yet capable model from the Qwen series, designed for efficient deployment under limited computational and memory budgets. Its inclusion allows us to assess whether DECA remains effective for extremely compact models, further validating its applicability to mobile, edge, and resource-constrained decentralized scenarios.
    \item \textbf{Qwen2.5-3B-Instruct} is a compact architecture tailored for mobile and edge applications. We include this model to explicitly validate the feasibility of our approach within resource-constrained decentralized environments.
    \item \textbf{Llama-2-7B} is a widely adopted open-weights model from Meta, representing a strong previous-generation baseline in the 7B parameter class. We include this model to evaluate the effectiveness of DECA on a mature and extensively benchmarked architecture, enabling direct comparison with prior decentralized and instruction-following studies.
    \item \textbf{Llama-3.1-8B-Instruct} is the latest open-weights offering from Meta. This model serves as the contemporary industry standard for the 8B parameter class. Its instruction-tuning allows it to act as a robust baseline for assessing chat and instruction-following capabilities.
    % \item \textbf{Qwen2.5-14B-Instruct} is a high-performance medium-scale model that frequently surpasses significantly larger baselines (e.g., Llama-2-70B). Its inclusion demonstrates the scalability of DECA to larger, more capable architectures.
  \end{itemize}

  We utilize the following datasets to evaluate the model's understanding capabilities across diverse domains. NWGI, AGNEWS, TFNS, and MNLI are datasets for classification tasks, while Alpaca is a dataset for generation tasks. 

  \begin{itemize}
    \item \textbf{NWGI (News Writer Genre Identification)}~\footnote{\url{https://huggingface.co/datasets/dohonba/nwgi}} is a literary analysis task where the model classifies the genre or stylistic identity of a news writer based on article text. In this work, it serves as a benchmark dataset for evaluating the model's performance on classification tasks.
    \item \textbf{AGNEWS}~\footnote{\url{https://huggingface.co/datasets/fancyzhx/ag_news}} is a classic benchmark dataset for topic classification that challenges models to categorize news snippets into four distinct classes: World, Sports, Business, and Sci/Tech. It is widely used to evaluate a model's fundamental content understanding and its ability to discern the primary subject matter of short text segments.
    \item \textbf{TFNS (Twitter Financial News Sentiment)}~\footnote{\url{https://huggingface.co/datasets/zeroshot/twitter-financial-news-sentiment}} is a domain-specific sentiment analysis benchmark that tasks models with classifying financial tweets as ``Bullish'' (positive) or ``Bearish'' (negative). This dataset is specifically designed to evaluate a model's grasp of financial domain knowledge and its ability to interpret sentiment within the specialized context of market-related social media discourse.
    \item \textbf{MNLI (Multi-Genre Natural Language Inference)}~\footnote{\url{https://huggingface.co/datasets/SetFit/mnli}} is a challenging benchmark for evaluating a model's logic and reasoning capabilities. In this task, the model is presented with a premise and a hypothesis and must determine whether the premise entails, contradicts, or remains neutral toward the hypothesis.
    %
    % \item \textbf{EMOTION}~\footnote{\url{https://huggingface.co/datasets/dohonba/many_emotions}} is a fine-grained text classification benchmark designed to evaluate a model's sensitivity to affective nuance beyond simple binary sentiment. Sourced primarily from social media platforms like Twitter, it challenges models to categorize informal text into a diverse spectrum of emotional states, such as joy, sadness, anger, fear, love, and surprise. This dataset specifically tests the ability to decode real-world communication complexities, including colloquialisms, sarcasm, and varying emotional intensity.
    %
    \item \textbf{Alpaca}~\footnote{\url{https://huggingface.co/datasets/tatsu-lab/alpaca}} is a dataset consisting of 52,000 instruction-response pairs generated by text-davinci-003. It serves as a standard benchmark for evaluating a large language model's ability to follow diverse user instructions. In this paper, the dataset is utilized to assess the generation capabilities of the fine-tuned models.
  \end{itemize}

  We employ accuracy (Acc) and $F_1$ score to assess model performance for classification tasks. As for the generation tasks, we adopt the following evaluation metrics:
  \begin{itemize}
    % \item \textbf{Rouge-L (ROU-L)}~\cite{Lin-TSBO04} measures the overlap between the generated summary and the reference using the longest common subsequence. It is the standard metric for summarization tasks.
    % %
    % \item \textbf{Meteor (MET.)}~\cite{BanerjeeL-ACL05} is an advanced metric that correlates well with human judgment by accounting for synonyms (e.g., matching ``fast'' with ``quick'') and stemming (e.g., matching ``run'' with ``running'').
    % %
    % \item \textbf{CIDEr}~\cite{VedantamLP-CVPR15} is designed for image captioning to evaluate how ``consensus-like'' the generated text is. It utilizes TF-IDF weighting to prioritize rare, informative words found in human references over common stopwords.
    %
    \item \textbf{Vicuna (VIC.)}~\cite{ChiangLLSWZZZZG-blog2023} is a GPT-4-based framework for evaluating chatbot quality. Instead of relying on static NLP benchmarks, it uses 80 diverse, challenging questions spanning across 8 tasks to compare model responses side-by-side, with GPT-4 serving as an automatic judge, offering an efficient proxy for human assessment of open-ended chatbot responses.
    \item \textbf{MT-Bench (MT.)}~\cite{ZhengCSZWZLLLX-NIPS24} is a modern benchmark for evaluating multi-turn conversation quality across categories like coding, reasoning, and roleplay. It employs a ``LLM-as-a-Judge'' approach, using a stronger model (e.g., GPT-4) to score responses on a scale of 1--10, providing a robust proxy for human evaluation.
  \end{itemize}

    We select three state-of-the-art decentralized LLM fine-tuning methods as baselines. Specifically, \textbf{Dec-Adapter}~\cite{HoulsbyGJMDGAG-ICML19} extends adapter-based PEFT to decentralized settings. \textbf{Dec-LoRA}~\cite{GhiasvandAP-REALM25} and \textbf{DeCAF}~\cite{Saadati-NeuralNet26} propose decentralized LoRA-based fine-tuning strategies that eliminate the central server and rely on peer-to-peer client collaboration. By benchmarking against these methods, we systematically compare our decentralized FPFT paradigm with representative decentralized PEFT approaches across different system settings.
      \begin{itemize}
      %
        % \item \textcolor{red}{\textbf{FFA-LoRA}~\cite{SunLLD-ICLR24}, LoRA -based federated LLM fine-tuning approach which freezes the A matrices in LoRA and only trains and communicates and aggregates the B matrices.}
        %
        % \item \textbf{FedSA-LoRA}~\cite{GuoZWFQ-ICLR25} is a federated LoRA method. It utilizes a data-driven strategy to selectively aggregate only the most informative LoRA updates from participating clients. By evaluating the quality or relevance of local adapters before aggregation, this method filters out redundant or detrimental parameter updates, thereby enhancing convergence stability while reducing communication overhead.
        % %
        % \item \textbf{FedPrompt}~\cite{ZhaoDLLL-ICASSP23} applies prompt tuning within a Federated Learning setting. By keeping the entire pre-trained backbone frozen and aggregating only these lightweight ``soft prompts'' via a central server. This method entials light-weight communications and reduces the computational burden on local clients.
        %
        % \item \textbf{FedCyBGD}, a federated block-wise full-parameter fine-tuning method in which each client trains a fixed block of the model, and the server aggregates updates across blocks.
        %
        \item \textbf{Dec-Adapter}~\cite{HoulsbyGJMDGAG-ICML19} is a decentralized extension of adapter-based PEFT. It follows the original adapter design, where lightweight trainable modules are inserted into a frozen backbone model, and adapts this paradigm to decentralized learning by allowing clients to exchange and aggregate only adapter parameters with their neighbors.
        \item \textbf{Dec-LoRA}~\cite{GhiasvandAP-REALM25} proposes a decentralized LoRa method. Specifically, each client maintains a local copy of the frozen backbone and a set of trainable low-rank adapters, exchanging and aggregating only the adapter parameters with their neighbors. This approach combines the communication efficiency of LoRA with the robustness of decentralized learning.
        \item \textbf{DeCAF}~\cite{Saadati-NeuralNet26}, a decentralized LoRA-based framework that replaces traditional aggregation with consensus-aware updates, using TSVD decomposition to aggregate the local LoRA matrices.
      \end{itemize}

      We conduct our experiments on a GPU cluster, where each GPU corresponds to one client. Specifically, we use NVIDIA RTX 5090 GPUs for the Qwen2-1.5B and Qwen2.5-3B models, and NVIDIA RTX PRO 6000 GPUs for the larger Llama-2-7B and Llama-3.1-8B models. 
      %
      %Our code~\footnote{\url{https://anonymous.4open.science/r/DECA-3F73}} is avaliable. 

\section{Additional Experiment Results} \label{sec:app_expresults}
  We evaluate the different algorithms under different communication topologies and varying numbers of clients in Sec.~\ref{ssec:topology} and Sec.~\ref{ssec:clients}, respectively. We then investigate the impact of different partitioning granularities on the performance of our algorithm in Sec.~\ref{ssec:granularity} and report the resource consumption of the different algorithms in Sec.~\ref{ssec:app_exp_resource}. We finally perform hyperparameter sensitivity analysis in Sec.~\ref{ssec:sensitivity}.

  \subsection{Different Communication Topologies} \label{ssec:topology}
    We employ the Qwen2.5-3B model to evaluate the classification performance of the proposed methods against baselines across ring, ER, and bipartite communication topologies, each comprising eight clients. The experimental results in Table~\ref{tab:app_classexp_topo} clearly demonstrate the effectiveness of DECA under the different communication topologies. Overall, DECA achieves the best average accuracy and $F_1$ scores across all three topologies, showing strong robustness to variations in network connectivity. In the ring topology, DECA obtains an average accuracy of 75.58\% and an average $F_1$ score of 69.64\%, outperforming Dec-Adapter by 0.79\% and 2.87\%, respectively. Compared with Dec-LoRA, DECA further improves the average accuracy and $F_1$ score by 4.38\% and 7.99\%, and the gains over DeCAF are even larger. Under the bipartite topology, DECA again consistently delivers the best overall performance, improving average accuracy and $F_1$ score by 2.79\% and 3.18\% over Dec-Adapter, 4.27\% and 8.60\% over Dec-LoRA, and 12.82\% and 15.16\% over DeCAF.
    Notably, DECA maintains highly stable performance across different topologies, with average accuracy ranging only from 75.22\% to 76.14\% and average $F_1$ score ranging from 69.22\% to 69.64\%. This indicates that DECA can effectively mitigate the negative impact of constrained or heterogeneous communication structures. Overall, these results confirm that DECA consistently achieves lower classification loss and stronger predictive performance than Dec-Adapter, while also surpassing decentralized PEFT baselines such as Dec-LoRA and DeCAF across different network topologies.
      \begin{table*}[t!]
      \centering
      \begin{small}
      \caption{Performance comparison of different algorithms under different communication topologies. }
      %For clarity, the results on the ER graph are reproduced from Table~\ref{tab:exp_classification}}
      \label{tab:app_classexp_topo}
      \begin{tabular}{l l *{6}{c} *{6}{c}}
      \toprule
      \multirow{2}{*}{\textbf{Topology}}
      & \multirow{2}{*}{\textbf{Method}}
      & \multicolumn{2}{c}{\textbf{NWGI}}
      & \multicolumn{2}{c}{\textbf{TFNS}}
      & \multicolumn{2}{c}{\textbf{MNLI}}
      & \multicolumn{2}{c}{\textbf{Average}} \\
      \cmidrule(lr){3-4} \cmidrule(lr){5-6} \cmidrule(lr){7-8} \cmidrule(lr){9-10}
      & & \textit{Acc.} & $F_1$
      & \textit{Acc.} & $F_1$
      & \textit{Acc.} & $F_1$
      & \textit{Acc.} & $F_1$ \\
      \midrule
      \multirow{4}{*}{Ring}
      & Dec-Adapter
      & \textbf{56.11} & \underline{36.43} & \underline{88.73} & \textbf{85.84} & \underline{79.54} & \underline{78.04} &\underline{74.79} &\underline{66.77} \\
      & Dec-LoRA
      & 53.52 & 32.91 & 86.31 & 81.86 &73.77 & 70.17 &71.20 &61.65 \\
      & DeCAF
      & 43.91 & 28.17 &67.37 & 56.92 & 72.02 & 67.70 &61.10 &50.93\\
      & \textbf{DECA}
      & \underline{54.09} & \textbf{40.11} &\textbf{89.35} &\underline{85.72} &\textbf{ 83.31} & \textbf{83.10} & \textbf{75.58} &\textbf{69.64} \\
      % & $\tau$-DECA
      % &\underline{52.64} & \underline{34.03} &\underline{81.92} &\underline{80.53} & \underline{82.59} & \underline{77.61}  & 68.25 & 65.19 & \underline{81.08} & \underline{77.17}  & \underline{73.30} & \underline{66.91} \\
      \midrule
      \multirow{4}{*}{ER} 
      & Dec-Adapter
      &\underline{52.72} &\underline{35.55} &\underline{88.36} &\underline{85.47} &\underline{80.57} &\underline{78.65} &\underline{73.88} &\underline{66.56} \\
      & Dec-LoRA
      &52.44 &35.23 &84.69 & 78.31 &77.63 &76.01 &71.59 &63.18 \\
      & DeCAF
      &42.93 &27.96 &66.12 &55.52 &72.09 &67.62 &60.38 &50.37 \\
      & \textbf{DECA}
      &\textbf{53.51} &\textbf{36.09} &\textbf{89.25} &\textbf{86.24} &\textbf{85.67} &\textbf{85.34} &\textbf{76.14} &\textbf{69.22} \\
      \midrule
      \multirow{4}{*}{Bipartite}
      & Dec-Adapter
      &49.47 &\underline{35.70} & \underline{88.34} & \underline{84.74} &\underline{79.48} &\underline{78.64} &\underline{72.43} &\underline{66.36} \\
      & Dec-LoRA
      & \underline{52.30} &32.53 &83.47 &75.97 & 77.09 &74.31 &70.95 &60.94 \\
      & DeCAF
      & 43.49 &27.56 & 72.08 &68.36 &71.64 &67.22 &62.40 &54.38 \\
      & \textbf{DECA}
      &\textbf{53.07} &\textbf{39.32} &\textbf{88.65} &\textbf{85.68} &\textbf{83.95} &\textbf{83.63} &\textbf{75.22} &\textbf{69.54} \\
      % & $\tau$-DECA
      % &\underline{55.01} &\underline{36.70} &\underline{90.49} &\underline{90.51} &\underline{87.31} &\underline{84.55} &\underline{82.81} &\underline{82.18} &\underline{80.27} &\underline{75.38} \\
      \bottomrule
      \end{tabular}
      \end{small}
      \end{table*}

      Furthermore, Fig.~\ref{fig:loss_ring} presents the training loss trajectories of the different algorithms under a ring topology. DECA achieves a convergence speed and final training loss comparable to those of decentralized PEFT baselines, indicating that our block-wise full-parameter adaptation can remain stable even under sparse peer-to-peer communication. Fig.~\ref{fig:loss_topo} further shows the training loss of DECA under different network topologies. Across all topologies, DECA exhibits steady and consistent convergence behavior, demonstrating its robustness to diverse communication structures.

      %
      % \begin{figure*}[t!]
      % \centering
      % \parbox{.32\textwidth}{\center\includegraphics[width=.32\textwidth]{result/loss_plot/loss_qwen3_ring.pdf}}
      % \parbox{.32\textwidth}{\center\includegraphics[width=.32\textwidth]{result/loss_plot/loss_qwen3.pdf}}
      % \parbox{.32\textwidth}{\center\includegraphics[width=.32\textwidth]{result/loss_plot/loss_qwen3_bipar.pdf}}
      % \parbox{.32\textwidth}{\center\scriptsize(a) Ring}
      % \parbox{.32\textwidth}{\center\scriptsize(b) ER}
      % \parbox{.32\textwidth}{\center\scriptsize(b) Bipartite}
      % \caption{Trainig loss of different decentralized methods on EMOTION dataset under various communication toploties.}
      % \label{fig:app_classexp_loss_topo}
      % \end{figure*}
      % %
      % %
      % \begin{figure*}[t!]
      % \centering
      % \parbox{.32\textwidth}{\center\includegraphics[width=.32\textwidth]{result/loss_plot/loss_qwen3_gen_ring.pdf}}
      % \parbox{.32\textwidth}{\center\includegraphics[width=.32\textwidth]{result/loss_plot/loss_qwen3_gen.pdf}}
      % \parbox{.32\textwidth}{\center\includegraphics[width=.32\textwidth]{result/loss_plot/loss_qwen3_gen_bipar.pdf}}
      % \parbox{.32\textwidth}{\center\scriptsize(a) Ring}
      % \parbox{.32\textwidth}{\center\scriptsize(b) ER}
      % \parbox{.32\textwidth}{\center\scriptsize(b) Bipartite}
      % \caption{Trainig loss of different decentralized methods on Alpaca dataset under various communication toploties.}
      % \label{fig:app_genexp_loss_topo}
      % \end{figure*}

      \begin{figure*}[t]
          \centering
          \begin{minipage}[t]{0.48\textwidth}
             \centering
              \includegraphics[width=\textwidth]{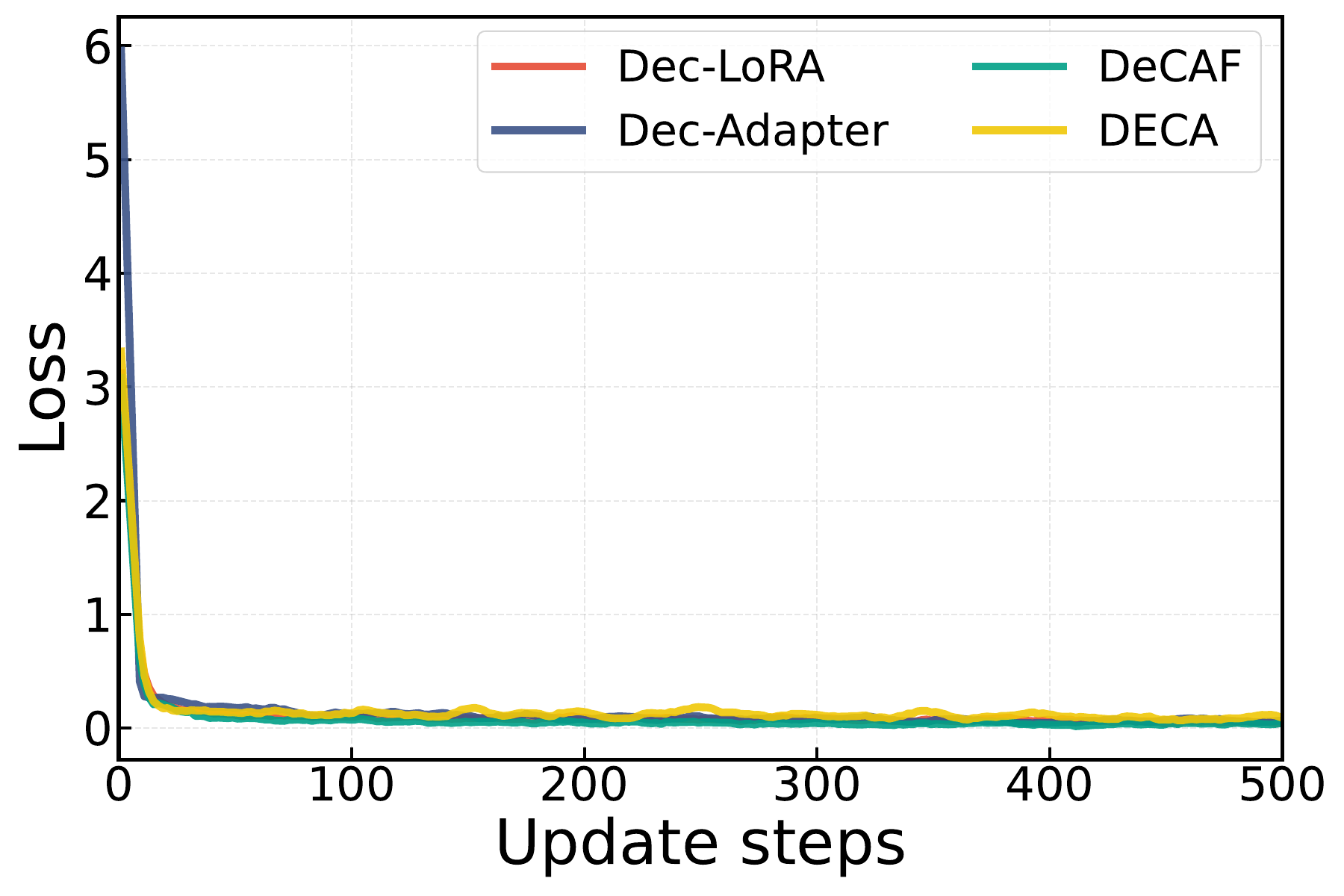}
              \captionof{figure}{Training loss of different algorithms on NWGI dataset under ring topology.}
              \label{fig:loss_ring}
          \end{minipage}
          \hfill
          \begin{minipage}[t]{0.48\textwidth}
              \centering
              \includegraphics[width=\textwidth]{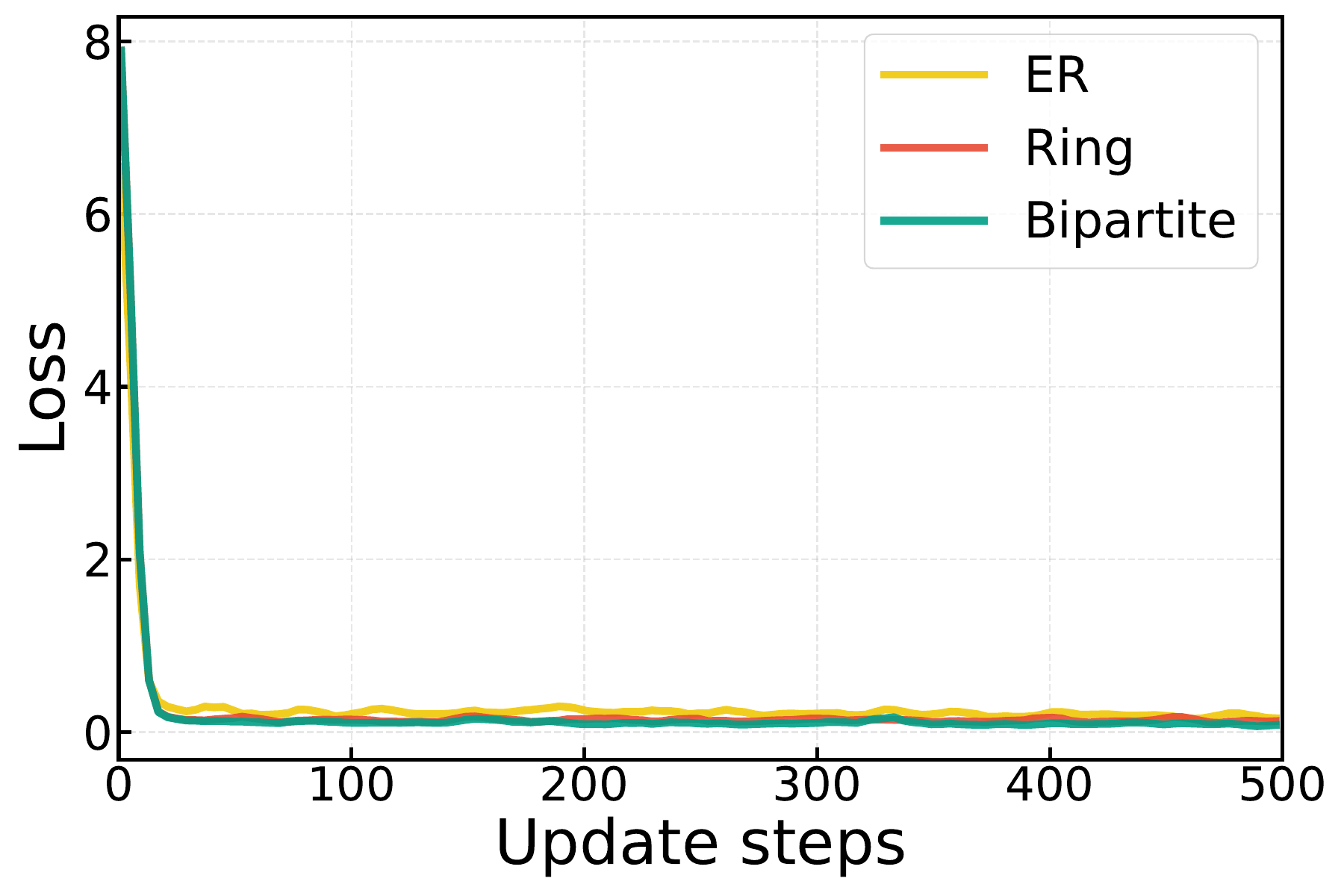}
              \captionof{figure}{Training loss of DECA on MNLI dataset under different topologies.}
              \label{fig:loss_topo}
          \end{minipage}
      \end{figure*}

      \begin{figure*}[t]
          \centering
          \begin{minipage}[t]{0.48\textwidth}
              \centering
              \includegraphics[width=\textwidth]{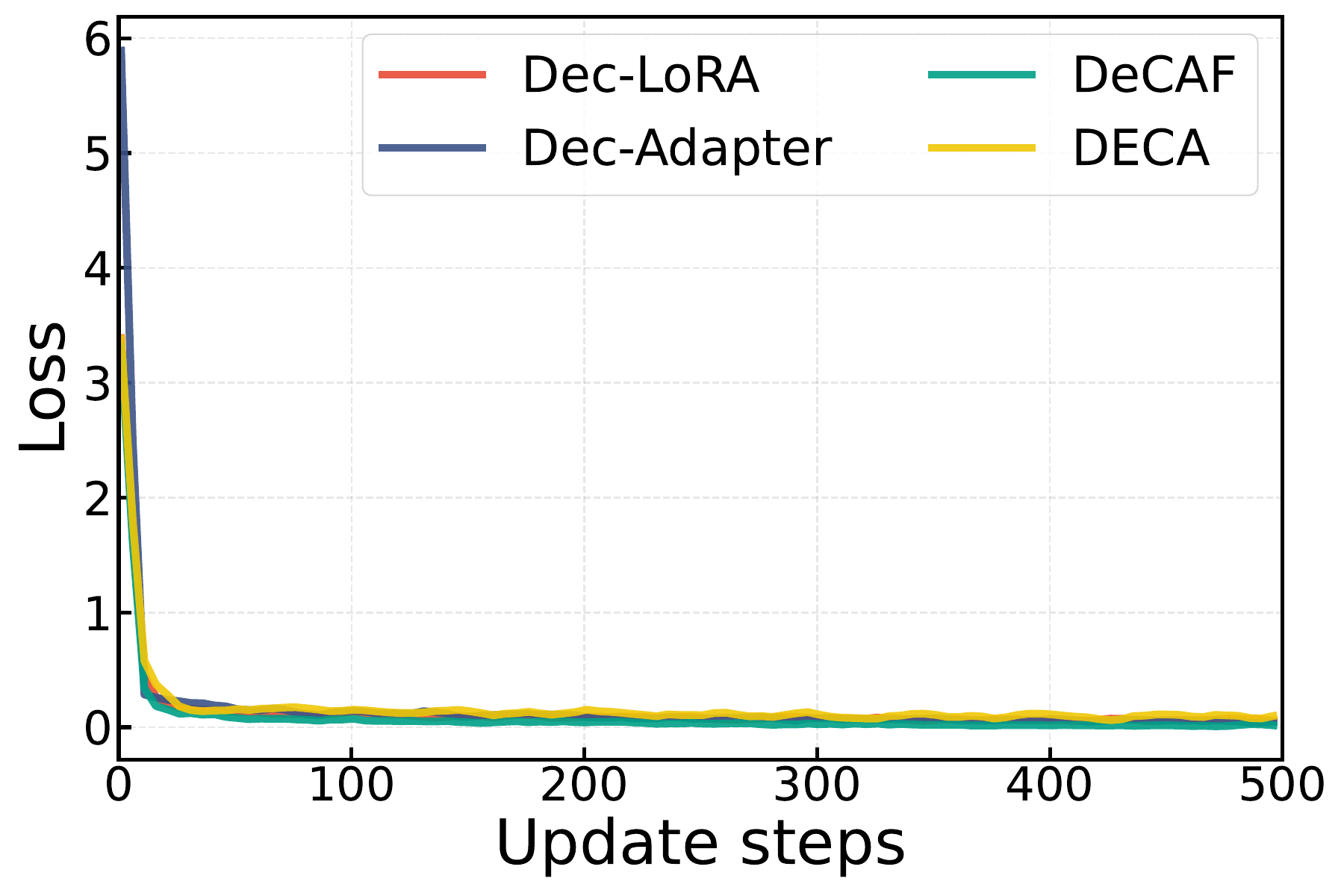}
              \captionof{figure}{Training loss of different algorithms with 12 clients on TFNS dataset.}
              \label{fig:loss_nwgi_12}
          \end{minipage}
          \hfill
          \begin{minipage}[t]{0.48\textwidth}
              \centering
              \includegraphics[width=\textwidth]{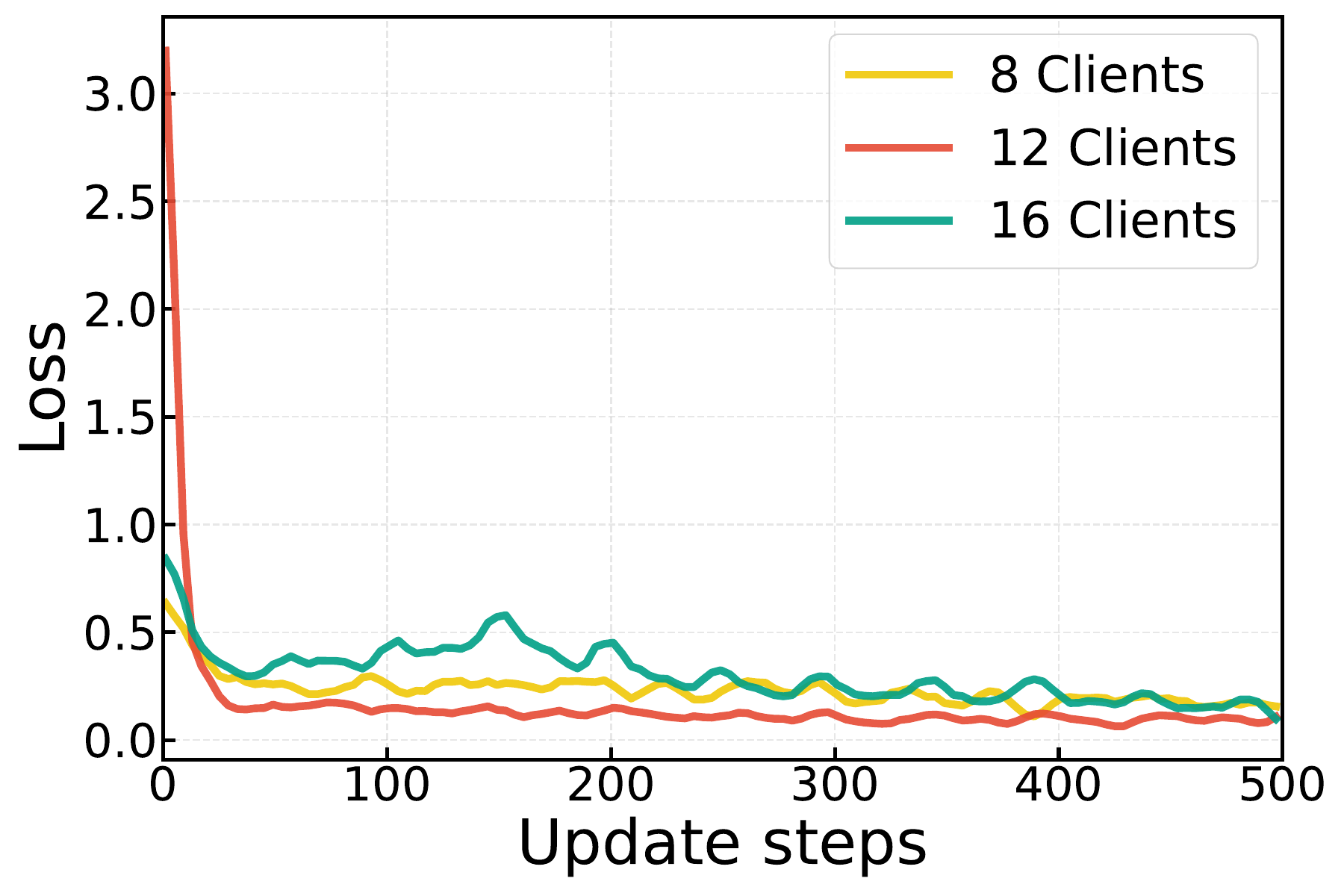}
              \captionof{figure}{Training loss of DECA with different numbers of clients on NWGI dataset.}
              \label{fig:loss_clients}
          \end{minipage}
      \end{figure*}

    \subsection{Different Numbers of Clients} \label{ssec:clients}
      We study the effect of network scale by varying the number of clients, $N\in\{8,12,16\}$, and applying different decentralized algorithms to fine-tune a Qwen2.5-3B model on the five classification datasets under the representative ER topology. As shown in Table~\ref{tab:app_classexp_client}, DECA consistently outperforms decentralized PEFT baselines across all network sizes, demonstrating its robustness to client-scale variations. In the largest setting with $N=16$, DECA achieves an average accuracy of 70.32\%, exceeding Dec-LoRA by nearly 2.25\%. When $N=12$, DECA also maintains a clear advantage over the baselines in most settings. These results confirm that DECA enables effective FPFT even as the communication network expands.
      \begin{table*}[t!]
      \centering
      \begin{small}
      \caption{Performance comparison of different algorithms across varying numbers of clients.}  
      \label{tab:app_classexp_client}
      \begin{tabular}{l l *{6}{c} *{6}{c}}
      \toprule
      \multirow{2}{*}{\textbf{$N$}}
      & \multirow{2}{*}{\textbf{Algorithm}}
      & \multicolumn{2}{c}{\textbf{NWGI}}
      & \multicolumn{2}{c}{\textbf{TFNS}}
      & \multicolumn{2}{c}{\textbf{MNLI}}
      & \multicolumn{2}{c}{\textbf{Average}} \\
      \cmidrule(lr){3-4} \cmidrule(lr){5-6} \cmidrule(lr){7-8}
      \cmidrule(lr){9-10} 
      & & \textit{Acc.} & $F_1$
      & \textit{Acc.} & $F_1$
      & \textit{Acc.} & $F_1$
      & \textit{Acc.} & $F_1$ \\
      \midrule
      \multirow{4}{*}{8}
      & Dec-Adapter
      &\underline{52.72} &\underline{35.55} &\underline{88.36} &\underline{85.47} &\underline{80.57} &\underline{78.65} &\underline{73.88} &\underline{66.56} \\
      & Dec-LoRA
      &52.44 &35.23 &84.69 & 78.31 &77.63 &76.01 &71.59 &63.18 \\
      & DeCAF
      &42.93 &27.96 &66.12 &55.52 &72.09 &67.62 &60.38 &50.37 \\
      & \textbf{DECA}
      &\textbf{53.51} &\textbf{36.09} &\textbf{89.25} &\textbf{86.24} &\textbf{85.67} &\textbf{85.34} &\textbf{76.14} &\textbf{69.22} \\
      \midrule
      \multirow{4}{*}{12}
      & Dec-Adapter
      & \underline{44.51} & \underline{32.92} & 88.84 & 85.84 & 85.15 & 85.18 &72.83 &\underline{67.98} \\
      & Dec-LoRA
      & 43.64 & 31.02 &\textbf{89.80} & \underline{86.35} & \underline{85.52} & \underline{85.34} &\underline{72.99} &67.57 \\
      & DeCAF
      &36.66 &21.14 &65.95 &51.18 &65.44 &59.61 &56.02 &43.98 \\
      & \textbf{DECA}
      &\textbf{46.40} &\textbf{34.04} &\underline{89.44} &\textbf{86.59} &\textbf{86.85} &\textbf{86.28} &\textbf{74.23} &\textbf{68.97} \\
      % & $\tau$-DECA
      % &\underline{41.10} & 28.02  &\textbf{86.16} &\textbf{81.75} & 71.85 & 69.30 & \underline{73.10} & \underline{68.11} \\
      \midrule
      \multirow{4}{*}{16}
      & Dec-Adapter
      & \underline{46.89} & \underline{30.12} & \underline{84.35} & \underline{81.99} & \underline{78.65} & \textbf{77.03} &\underline{69.96} &\underline{63.05} \\
      & Dec-LoRA
      & 46.23 & 29.52 & 82.45 & 79.16 & 75.54 & 73.77 &68.07 &60.82 \\
      & DeCAF
      & 42.05 &26.56 &65.22 &56.38 &63.73 &60.44 &57.00 &47.79 \\
      & \textbf{DECA}
      & \textbf{47.30} & \textbf{30.70} & \textbf{84.83} & \textbf{82.33} & \textbf{78.82} & \underline{76.69} & \textbf{70.32} &\textbf{63.24} \\
      % & $\tau$-DECA
      % & 46.86 &\underline{29.80} & \underline{85.76} & \underline{81.39} & \underline{74.16} & \underline{71.48} & \underline{73.39} & \underline{67.91} \\
      \bottomrule
      \end{tabular}
      \end{small}
      \end{table*}

      Fig.~\ref{fig:loss_nwgi_12} illustrates the training loss trajectories for classification (TFNS) tasks when $N=12$. It is shown that, DECA achieves similar convergence rate and loss value compared to decentralized PEFT baselines. This persists regardless of the system scale, highlighting the scalability of the proposed FPFT framework. Furthermore, Fig.~\ref{fig:loss_clients} demonstrates the training loss trajectories of DECA on NWGI task across the varying numbers of clients. As the number of clients grows, DECA continues to exhibit nearly unchanged convergence speed and loss levels, suggesting that it scales well and remains effective even in large-scale network settings. When more clients are involved, the heterogeneity induced by non-IID data becomes more pronounced, often causing noticeable fluctuations in the initial training stages (e.g. 16 clients). Nevertheless, DECA consistently converges to comparable loss levels across different client configurations.

  \subsection{Different Partitioning Granularities} 
  \label{ssec:granularity}
    We further investigate the impact of model partitioning granularity by setting the granularity to $\{1,2,4\}$, where the parameter controls the number of consecutive Transformer layers grouped into a single trainable block. Based on the results in Table~\ref{tab:app_classexp_granularity}, where DECA is evaluated with the Llama-3.1-8B model on the NWGI, TFNS, and MNLI classification tasks, we observe that finer-grained partitioning consistently yields better performance. Specifically, when granularity $=1$, DECA achieves the best results across all datasets and metrics, obtaining an average accuracy of $78.81\%$ and an average $F_1$ score of $71.98\%$. As the granularity becomes coarser, the performance gradually decreases: increasing granularity from $1$ to $2$ reduces the average accuracy from $78.81\%$ to $78.10\%$, while further increasing it to $4$ lowers the accuracy to $76.71\%$. A similar monotonic degradation can also be observed in the average $F_1$ score, which drops from $71.98\%$ to $70.33\%$ and then to $69.43\%$. Among the three datasets, NWGI is the most sensitive to coarser partitioning, with its $F_1$ score decreasing from $39.82\%$ at granularity $=1$ to $36.49\%$ at granularity $=4$. These results suggest that, for DECA, a finer model partitioning strategy provides more flexible layer-wise optimization and better preserves task-specific adaptation capability, whereas overly coarse blocks may reduce the effectiveness of localized training and lead to degraded classification performance.

    Fig~\ref{fig:loss_gran_mnli} and Fig~\ref{fig:loss_gran_tfns} illustrate the loss curves of DECA under different block granularities across two datasets. While all settings exhibit consistent overall convergence trends, the granularity affects the optimization dynamics. Specifically, a finer granularity (e.g., granularity $=1$) generally achieves slightly lower loss values throughout most of the training process and produces a smoother trajectory with reduced fluctuations, especially in the later stages. In contrast, a coarser granularity (e.g., granularity $=4$) tends to exhibit more pronounced oscillations and occasional spikes, suggesting higher gradient variance and less stable updates. Overall, finer granularity improves training stability and smoothness by reducing gradient variance, while having minimal impact on convergence.
    \begin{table*}[t]
    \centering
    \begin{small}
    \caption{Performance of DECA under different model partitioning granularities using Llama3-8B model.}
    \label{tab:app_classexp_granularity}
      \begin{tabular}{c *{6}{c} *{6}{c}}
      \toprule
      \multirow{3}{*}{\textbf{Granularity}}
      & \multicolumn{2}{c}{\textbf{NWGI}}
      & \multicolumn{2}{c}{\textbf{TFNS}}
      & \multicolumn{2}{c}{\textbf{MNLI}}
      & \multicolumn{2}{c}{\textbf{Average}} \\
      \cmidrule(lr){2-3} \cmidrule(lr){4-5} \cmidrule(lr){6-7}
      \cmidrule(lr){8-9} \cmidrule(lr){10-11}
      & \textit{Acc.} & $F_1$
      & \textit{Acc.} & $F_1$
      & \textit{Acc.} & $F_1$
      & \textit{Acc.} & $F_1$ \\
      \midrule
      \multirow{1}{*}{1}
      & {57.36} & {39.82} & {90.42} & {87.66} & {88.64} &  {88.46} & {78.81} & {71.98} \\
      \midrule
      \multirow{1}{*}{2}
      &56.68 &36.73 &89.94 &86.78 &87.68 &87.49 &78.10 &70.33 \\
      \midrule
      \multirow{1}{*}{4}
      &54.73 &36.49 & 88.81 &85.26 &86.58	&86.55 &76.71 &69.43 \\
      \bottomrule
    \end{tabular}
    \end{small}
    \end{table*}

      \begin{figure*}[t]
          \centering
          \begin{minipage}[t]{0.48\textwidth}
              \centering
              \includegraphics[width=\textwidth]{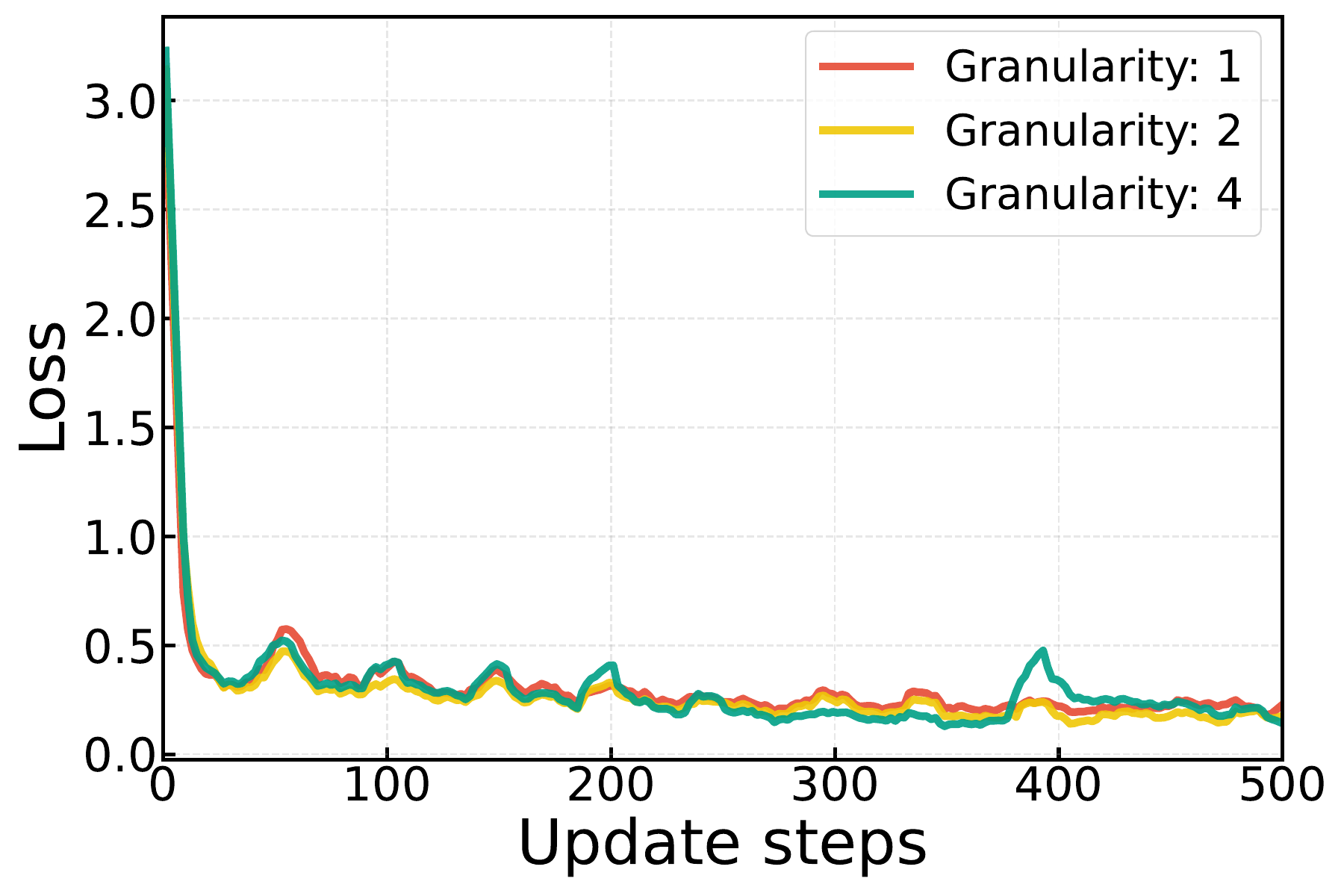}
              \captionof{figure}{Training loss of DECA under different partitioning granularities on MNLI dataset.}
              \label{fig:loss_gran_mnli}
          \end{minipage}
          \hfill
          \begin{minipage}[t]{0.48\textwidth}
              \centering
              \includegraphics[width=\textwidth]{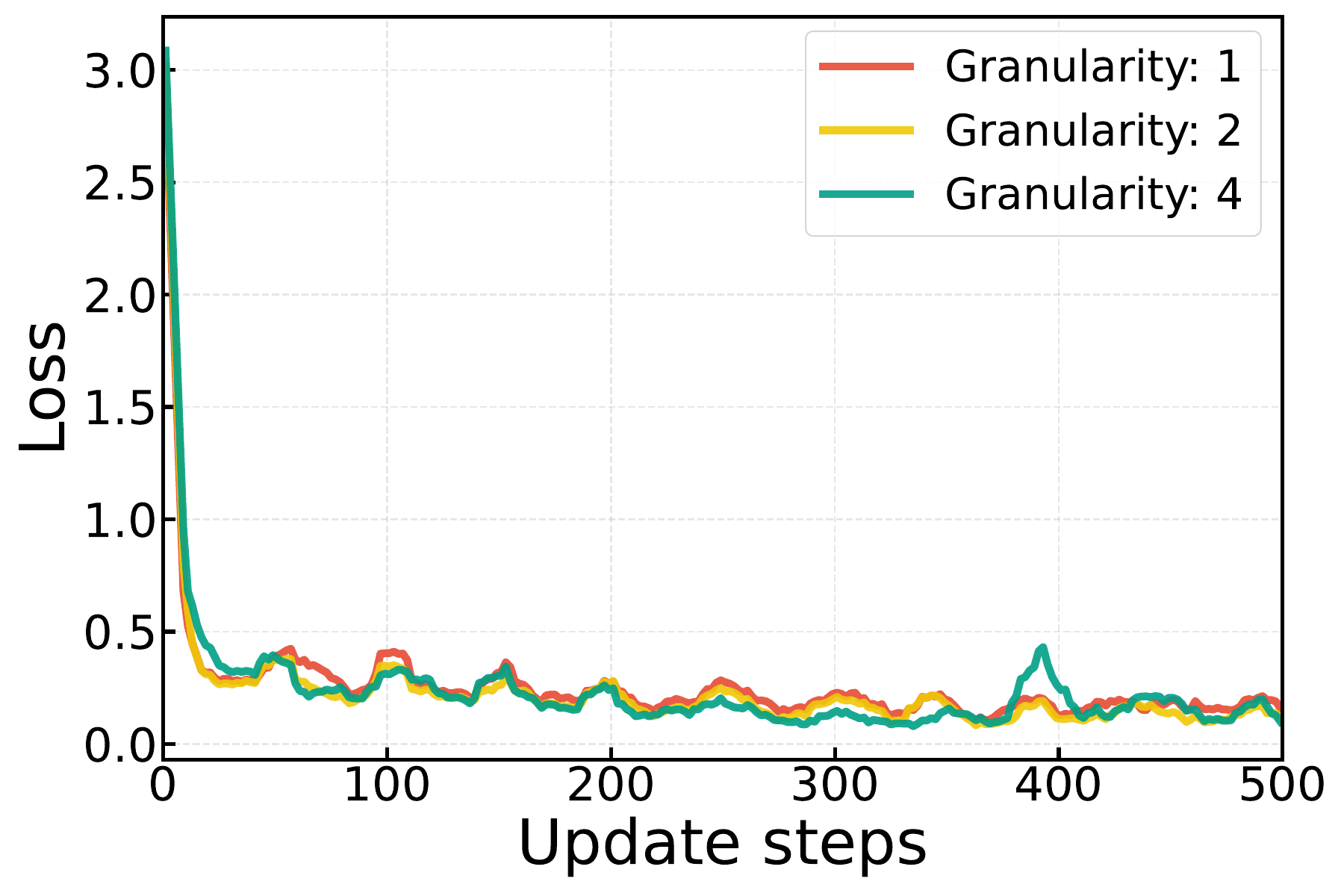}
              \captionof{figure}{Training loss of DECA with different partitioning granularities on TFNS dataset.}
              \label{fig:loss_gran_tfns}
          \end{minipage}
      \end{figure*}

  \subsection{Resource Consumption} \label{ssec:app_exp_resource}

    \begin{table*}[t!]
    \centering
    \caption{Computation and communication overheads across different methods and models. This table summarizes the average forward-propagation latency (\textbf{Fwd}), backward-propagation latency (\textbf{Bkwd}), aggregation-and-update latency (\textbf{Agg \& Up}), and the average number of parameters transmitted per communication step by each client (\textbf{Comm}).}
    \label{tab:comm_comp_model}
    \begin{tabular}{l l *{4}{c}}
    \toprule
    \textbf{Models} & \textbf{Methods} & \textbf{Fwd} (ms) & \textbf{Bkwd} (ms) & \textbf{Agg \& Up} (ms) & \textbf{Comm} (MB) \\
    \midrule
    % \rowcolor{lightgreen} \multicolumn{5}{c}{\textit{Qwen2-1.5B}} \\
    % \midrule
    % 
    \multirow{4}{*}{{Qwen2-1.5B}}
    & Dec-Adapter & 174.21 & 115.42 & 7.11 & 294.67 \\ \cmidrule(lr){2-6}
    & DeCAF & 164.79 & 162.02 & 1141.70 & \multirow{2}{*}{367.50} \\
    & Dec-LoRA & 179.30 & 162.04 & 19.31 & \\ \cmidrule(lr){2-6}
    & DECA & 98.15 & 123.63 & 3.53 & 441.25 \\ 
    \midrule
    % \rowcolor{lightgreen} \multicolumn{5}{c}{\textit{Qwen2.5-3B-Instruct}} \\
    % \midrule
    % 
    \multirow{4}{*}{\makecell{{Qwen2.5-3B}}}
    & Dec-Adapter & 230.12 & 175.06 & 9.38 & 677.49 \\ \cmidrule(lr){2-6}
    & DeCAF & 232.44 & 236.66 & 1637.48 & \multirow{2}{*}{598.50} \\
    & Dec-LoRA & 264.59 & 237.51 & 25.09 & \\ \cmidrule(lr){2-6}
    & DECA & 149.01 & 187.20 & 5.74 & 720.00 \\ 
    \midrule
    % \rowcolor{lightgreen} \multicolumn{5}{c}{\textit{Llama-2-7B}} \\
    % \midrule
    % 
    \multirow{4}{*}{{Llama-2-7B}}
    & Dec-Adapter & 274.96 & 178.60 & 10.13 &2389.04\\ \cmidrule(lr){2-6}
    & DeCAF & 286.62 & 250.14 &  2922.09  & \multirow{2}{*}{1722.00} \\
    & Dec-LoRA & 323.60 & 241.53 &  21.86 & \\ \cmidrule(lr){2-6}
    & DECA & 140.84 & 184.85 & 12.98 & 2100.00 \\
    \midrule
    % \rowcolor{lightgreen} \multicolumn{5}{c}{\textit{Llama-3.1-8B-Instruct}} \\
    % \midrule
    %
    \multirow{4}{*}{\makecell{{Llama-3.1-8B}}}
    & Dec-Adapter & 258.34 & 197.21 & 10.09 & 2389.04 \\ \cmidrule(lr){2-6}
    & DeCAF & 249.03 & 274.29 & 3054.13 & \multirow{2}{*}{1736.00} \\
    & Dec-LoRA & 287.17 & 263.99 & 21.95 & \\ \cmidrule(lr){2-6}
    & DECA & 145.51 & 208.97 & 13.30 & 2128.00 \\
    \bottomrule
    \end{tabular}
    \end{table*}

    Table~\ref{tab:comm_comp_model} presents a comparative analysis of computation and communication overheads across different methods and model scales. The results show that the proposed DECA framework achieves a clear efficiency advantage in end-to-end training latency by jointly reducing forward computation, backward computation, and aggregation/update overhead. Across all evaluated models, DECA consistently yields the fastest forward propagation among all methods. For example, on Llama-2-7B, DECA reduces the forward time from 323.60 ms in Dec-LoRA to 140.84 ms, achieving a reduction of approximately 56.5\%. Similar gains are observed on Qwen2-1.5B, Qwen2.5-3B, and Llama-3.1-8B, where DECA reduces the forward latency by 45.3\%, 43.7\%, and 49.3\% over Dec-LoRA, respectively. 

    In terms of backward propagation, aggregation and update, DECA also significantly outperforms DeCAF and Dec-LoRA across all model scales. For instance, on Llama-3.1-8B, DECA reduces the backward time from 263.99 ms in Dec-LoRA to 208.97 ms, corresponding to a reduction of about 20.8\%. Although Dec-Adapter shows slightly lower backward latency, DECA compensates for this through substantially faster forward propagation, leading to a lower overall computation time for all models. Specifically, on Llama-2-7B, DECA requires only 338.67 ms in a full step, compared with 463.69 ms for Dec-Adapter, 586.99 ms for Dec-LoRA and 3458.85 ms for DeCAF, reducing the end-to-end latency by approximately 27.0\%, 42.3\% and 90.21\%, respectively.
    % 
    % The advantage of DECA becomes even more evident in the aggregation and update phase. DeCAF suffers from extremely high aggregation/update latency, reaching 1141.70 ms on Qwen2-1.5B and over 3000 ms on Llama-3.1-8B, which dominates its overall runtime. In contrast, DECA keeps this overhead within only 3.53--13.30 ms across all models. Compared with Dec-LoRA, DECA reduces the aggregation/update time by 81.7\% on Qwen2-1.5B, 77.1\% on Qwen2.5-3B, 40.6\% on Llama-2-7B, and 39.4\% on Llama-3.1-8B. This demonstrates that the proposed block-wise decentralized update mechanism not only reduces gradient computation but also avoids the costly aggregation bottleneck observed in existing decentralized fine-tuning methods.
    % 
    
    Regarding communication overhead, DECA introduces a moderate increase compared with LoRA-based baselines, since it transmits block-wise full-parameter updates rather than only lightweight adapter parameters. Nevertheless, the communication cost remains manageable across all model scales. More importantly, for larger models such as Llama-2-7B and Llama-3.1-8B, DECA requires less communication than Dec-Adapter, reducing the transmitted volume from 2389.04 MB to 2100.00 MB and 2128.00 MB, respectively. Therefore, DECA achieves a favorable trade-off between communication and computation: it slightly increases communication relative to PEFT-style methods, but in return substantially accelerates forward, backward, and aggregation/update stages, making decentralized full-parameter fine-tuning practical and efficient for large language models.

    %
    % \begin{table*}[t!]
    % \centering
    % \caption{Computation and communication overheads across varying methods and models. This table summarizes the average forward and backward propagation times, along with the average number of parameters transmitted per communication step for each client.}
    % \label{tab:comm_comp_model}
    % \begin{tabular}{l *{5}{c}}
    % \toprule
    % \textbf{Models} & \textbf{Methods} & \textbf{Forward} (ms) & \textbf{Backward} (ms) & \textbf{Comm.} (GB) \\ \midrule
    % \multirow{4}{*}{Qwen2.5-3B}
    % & DeCAF & 413.29 & 398.85 & \multirow{2}{*}{1.09} \\
    % & Dec-LoRA & 418.12 & 403.28 &  \\ \cmidrule(lr){2-5}
    % & DECA & 238.19 & 134.55 & \multirow{2}{*}{0.93} \\
    % & $\tau$-DECA & 237.65 & 135.62 &  \\ \midrule
    % %
    % \multirow{4}{*}{Llama3.1-8B}
    % & DeCAF & 631.70 & 644.27 & \multirow{2}{*}{1.56} \\
    % & Dec-LoRA & 631.12 & 657.05 &  \\ \cmidrule(lr){2-5}
    % & DECA & 471.85 & 295.45 & \multirow{2}{*}{2.64} \\
    % & $\tau$-DECA & 472.94 & 296.87 \\ \midrule
    % %
    % \multirow{4}{*}{Qwen2.5-14B}
    % & DeCAF & 1167.35 & 1093.50 & \multirow{2}{*}{2.63} \\
    % & Dec-LoRA & 1145.01 & 1088.83 &  \\ \cmidrule(lr){2-5}
    % & DECA & 822.46 & 491.26 & \multirow{2}{*}{3.33} \\
    % & $\tau$-DECA & 820.04 & 492.69 &  \\
    % \bottomrule
    % \end{tabular}
    % \end{table*}

    Fig~\ref{fig:gran_time} presents the computational overheads associated with different model partitioning granularities. While increasing the granularity from $1$ to $4$ results in only a moderate increase in backward propagation latency (e.g., rising from {$208.97$ ms} to {$256.61$ ms} for DECA).     Fig.~\ref{fig:gran_gpu} compares the memory consumption of Llama-3.1-8B under different partitioning granularities. The results show that finer granularity is significantly more memory-efficient, with {DECA} requiring only roughly {$46$--$47$ GB} of peak memory and {$29$--$30$ GB} of average memory, which is well within the capacity of commodity GPUs (e.g., PRO 6000 with 96 GB of memory). In contrast, increasing the granularity to $4$ drives the peak memory usage up to approximately {$60$ GB}, a substantial increase that reflects the cost of storing optimizer states for larger active blocks. 
    
    % Specifically, the per-step transmission volume increases fourfold, from $2.08$ GB at granularity $1$ to $8.32$ GB at granularity $4$. 
    %
    %These results indicate that granularity $1$ is the most bandwidth-efficient configuration, as it minimizes the communication load while maintaining slightly lower computation latency.
    %
    \begin{figure*}[t]
    \centering
    \begin{minipage}[t]{0.48\textwidth}
      \centering
      \includegraphics[width=\textwidth]{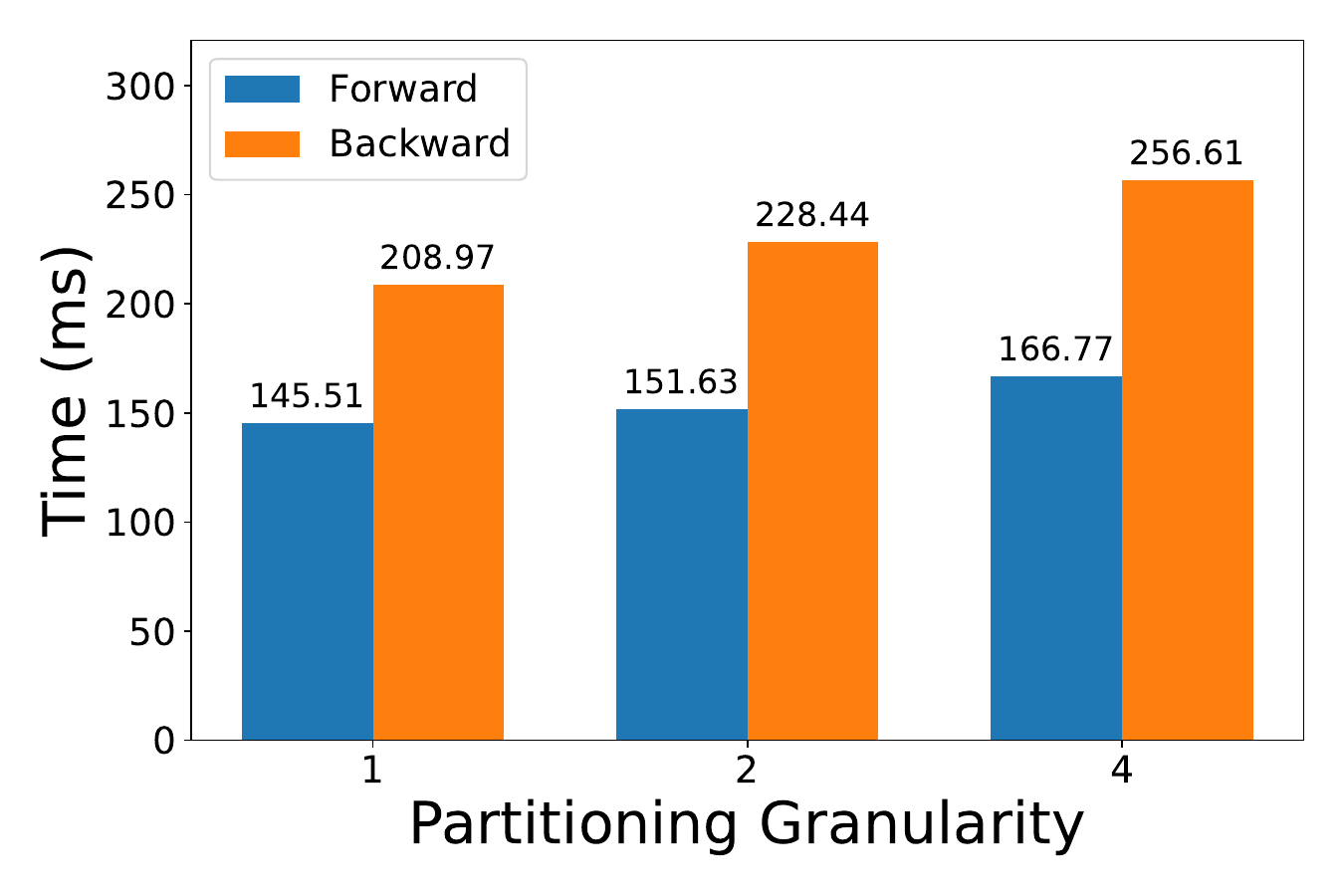}
      \captionof{figure}{Forward and backward latency under different granularities using Llama-3.1-8B. We measure the average latency per single forward and backward pass over the entire training phase.}
    \label{fig:gran_time}
    \end{minipage}
    \hfill
    \begin{minipage}[t]{0.48\textwidth}
      \centering
      \includegraphics[width=\textwidth]{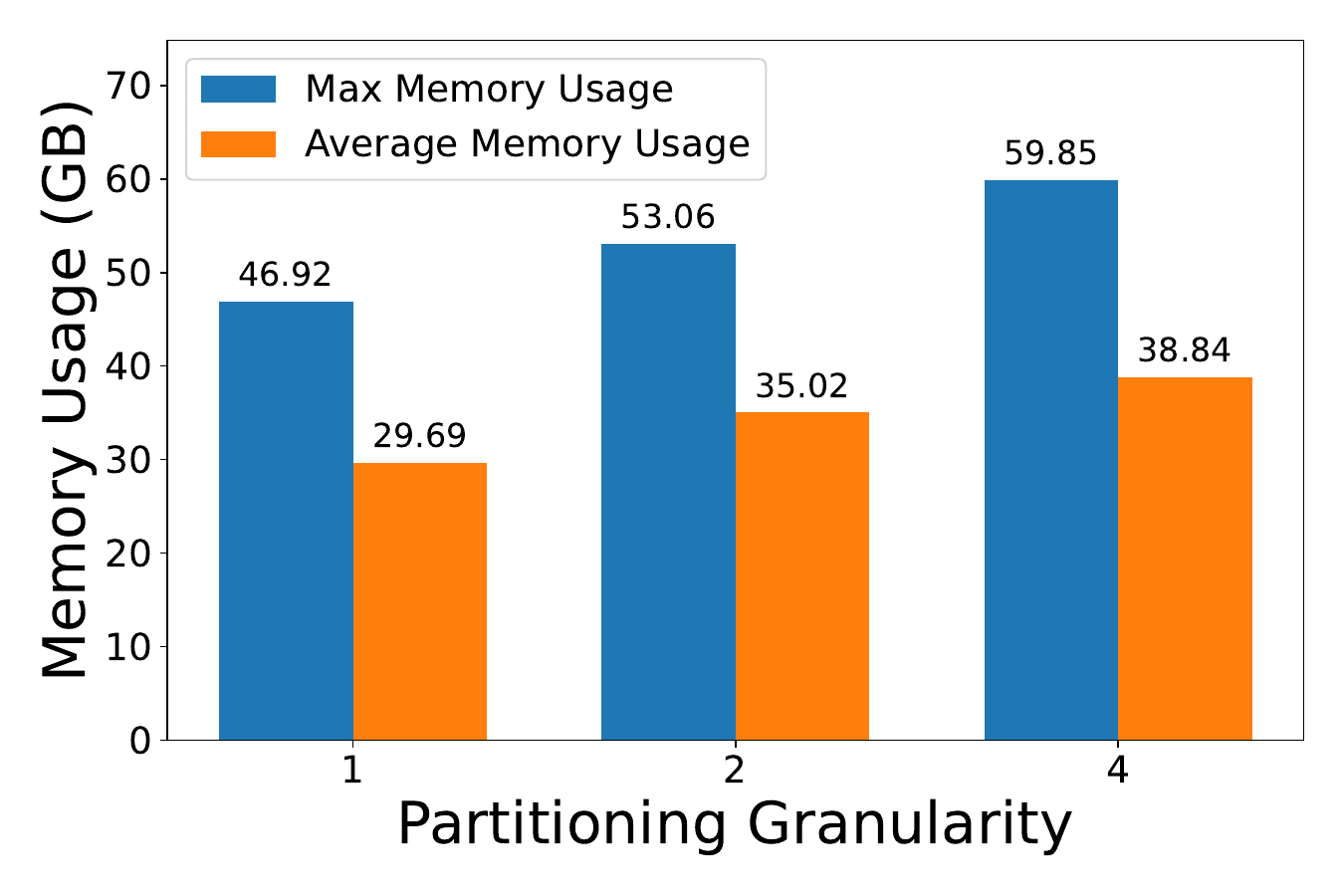}
      \captionof{figure}{Maximum and average GPU memory usage under different granularities. We measure the maximum and average memory usage during the whole fine-tuning process. }
    \label{fig:gran_gpu}
    \end{minipage}
    \end{figure*}

    % Interestingly, GPU utilization improves slightly with coarser granularity (rising from around $75\%$ to $81\%$), suggesting that larger blocks allow for better saturation of compute resources, albeit at the expense of higher memory footprint. This trade-off highlights that while coarser partitioning can slightly enhance computational throughput, finer partitioning is crucial for deploying FPFT on memory-constrained devices.    

\subsection{Hyper-parameter Sensitivity} \label{ssec:sensitivity}
  In this section, we conduct a sensitivity analysis of the hyperparameters in DECA. The Adam hyperparameters $\alpha_1$ and $\alpha_2$ control the exponential moving averages of the first- and second-order gradient moments, respectively, and are widely used with standard default settings~\cite{KingmaB-ICLR2015}. Since the same Adam hyperparameters are adopted for all baselines, we do not further tune them. Instead, we focus on the BMA-specific hyperparameters $\beta_1$ and $\beta_2$, which control how strongly the consensus-derived discrepancy signal is incorporated into the first- and second-order BMAs. As these parameters determine the balance between local gradient-driven optimization and consensus-guided correction, they may affect the stability and performance of DECA. Therefore, we evaluate DECA under different combinations of $\beta_1$ and $\beta_2$.

  \begin{table*}[t!]
  \centering
  \caption{Hyperparameter sensitivity analysis. We vary different BMA hyperparameters for generation tasks using the Qwen2-1.5B model.}
  \label{tab:gen_hyper}
  \begin{tabular}{c|c|cccc}
    \toprule
    $\beta_1$ & $\beta_2$ & \textbf{VIC.} & \textbf{MT-1} & \textbf{MT-2} & \textbf{MT.} \\
    \midrule
    \multirow{3}{*}{0.3} 
    & 0.222 & 5.66 &4.74 &3.25 &3.99 \\
    & 0.333 & 5.68 &4.88 & 3.30 &4.10 \\
    & 0.444 &5.63 &5.23 &3.39 &4.31 \\
    \midrule
    0.2 & \multirow{3}{*}{0.333} 
    & 5.70 &4.86 &3.31 &4.09 \\
    0.3 & & 5.68 &4.88 & 3.30 &4.10 \\
    0.4 & &5.75 &4.91 &3.26 &4.09 \\
    \bottomrule
  \end{tabular}
  \end{table*}

  As shown in Table~\ref{tab:gen_hyper}, the performance of DECA remains stable across all different configurations, demonstrating the robustness of DECA to BMA hyperparameters. First, when fixing $\beta_1=0.3$ and varying $\beta_2$ from $0.222$ to $0.444$, we observe a consistent improvement in performance. In particular, the MT. score increases from $3.99$ to $4.31$, indicating that assigning a larger weight to the local second-order momentum in BMA can effectively enhance generation quality. This suggests that a moderate increase in $\beta_2$ helps better exploit cross-client information without harming local adaptation. Second, when fixing $\beta_2=0.333$ and varying $\beta_1$ from $0.2$ to $0.4$, the performance exhibits only minor fluctuations (MT. ranges from $4.09$ to $4.10$). This indicates that DECA is relatively insensitive to $\beta_1$ within a reasonable range. Since $\beta_1$ controls the contribution of local updates, it implies that DECA maintains a good balance between local optimization and global aggregation even when $\beta_1$ varies.
  
  Combining these observations, we find that $\beta_2$ has a slightly more noticeable impact on performance compared to $\beta_1$, while the overall variation remains limited. The best performance is achieved at $(\beta_1, \beta_2) = (0.3, 0.444)$, suggesting that placing relatively more emphasis on global information is beneficial. Nevertheless, the performance differences across configurations are small, confirming that DECA is robust and does not require careful hyperparameter tuning in practice.

    % \begin{table*}[t!]
    % \centering
    % \caption{Performance comparison in generation tasks.}
    % \label{tab:gen_bsl}
    
    % \begin{small}
    % \begin{tabular}{lcccc}
    % \toprule
    
    % \multirow{1}{*}{\textbf{Methods}}
    % & \textbf{VIC.}
    % & \textbf{MT-1}
    % & \textbf{MT-2}
    % & \textbf{MT.} \\
    
    % \midrule
    
    % \rowcolor{lightgreen}
    % \multicolumn{5}{c}{\textit{Qwen2-1.5B}} \\
    % \midrule
    
    % Dec-Adapter
    % & 5.31 $\pm$ 0.06
    % & 4.23 $\pm$ 0.07
    % & 2.77 $\pm$ 0.16
    % & 3.50 $\pm$ 0.12 \\
    
    % Dec-LoRA
    % & 5.52 $\pm$ 0.05
    % & 4.43 $\pm$ 0.06
    % & 2.96 $\pm$ 0.09
    % & 3.70 $\pm$ 0.06 \\
    
    % DeCAF
    % & \textbf{5.69 $\pm$ 0.03}
    % & \underline{4.85 $\pm$ 0.04}
    % & \underline{3.31 $\pm$ 0.07}
    % & \underline{4.08 $\pm$ 0.02} \\
    
    % \textbf{DECA}
    % & \underline{5.63 $\pm$ 0.02}
    % & \textbf{4.91 $\pm$ 0.10}
    % & \textbf{3.32 $\pm$ 0.04}
    % & \textbf{4.12 $\pm$ 0.04} \\
    
    % \midrule
    
    % \rowcolor{lightgreen}
    % \multicolumn{5}{c}{\textit{Llama2-7B}} \\
    % \midrule
    
    % Dec-Adapter
    % & 4.93 $\pm$ 0.10
    % & 3.84 $\pm$ 0.05
    % & 2.57 $\pm$ 0.19
    % & 3.21 $\pm$ 0.05 \\
    
    % Dec-LoRA
    % & 5.58 $\pm$ 0.04
    % & 4.35 $\pm$ 0.06
    % & 2.82 $\pm$ 0.09
    % & 3.59 $\pm$ 0.01 \\
    
    % DeCAF
    % & \textbf{5.98 $\pm$ 0.09}
    % & \underline{4.38 $\pm$ 0.07}
    % & \underline{2.89 $\pm$ 0.03}
    % & \underline{3.63 $\pm$ 0.02} \\
    
    % \textbf{DECA}
    % & \underline{5.85 $\pm$ 0.02}
    % & \textbf{4.49 $\pm$ 0.05}
    % & \textbf{2.97 $\pm$ 0.06}
    % & \textbf{3.73 $\pm$ 0.00} \\
    
    % \bottomrule
    % \end{tabular}
    % \end{small}
    % \end{table*}

\section{Comprehensive Literature Survey} \label{sec:app_relwork}
  \subsection{PEFT and FPFT} 
    An effective strategy for LLM fine-tuning is to train a small number of model parameters, while keeping the majority of the pretrained parameters frozen~\cite{HoulsbyGJMDGAG-ICML19}. In particular, adapter tuning only finetunes the inserted small modules between layers called adapters~\cite{HoulsbyGJMDGAG-ICML19}. Prompt-tuning / prefix-tuning~\cite{LesterAC-EMNLP21,LiL-ARXIV221} attaches additional trainable prefix tokens to the input and/or hidden layers, while remaining the base model unchanged. Another method is the Low-Rank Adaptation (LoRA)~\cite{HuSSAWLWC-ICLR22}, which models the increment to the base model as a product of two significantly lower dimensional trainable low-rank matrices. Subsequent related research has aimed at extending its rank constraints~\cite{LialinSNA-ICLR24,XiaCH-arXiv24}, further reducing the number of trainable parameters~\cite{KoohpayeganiNNKP-arXiv23,KopiczkoBA-ICLR24}, decreasing memory usage through quantization~\cite{DettmersPHZ-NIPS23}. Presently, LoRA-based methods are commonly employed for LLM fine-tuning with limited memory resources.

    To conduct full parameter fine-tuning of LLMs with limited memory, \cite{LvYLGQ-ACL24} proposes LOMO, which efficiently leverages the BP process to update parameters on the fly in the process of computing stochastic gradients. Consequently, LOMO helps to execute SGD for full parameter finetuning without physically storing the stochastic gradients, significantly reducing memory consumption. However, it is worth emphasizing that SGD generally converges more slowly and is often considered suboptimal compared to Adam. MeZO~\cite{MalladiGNDLCA-NIPS23} is to approximate SGD by using only the forward pass. The idea of MeZO derives from zeroth-order optimization, which utilizes function value difference to approximate the stochastic gradients of the trainable model parameters. Galore~\cite{ZhaoZCWAT-ICML24} uses gradient low-rank projection, which largely reduces memory consumption for full parameter finetuning compared to Adam. Adam-mini~\cite{ZhangCLDWKYLS-ICLR25} proposes to apply block-wise adaptive learning rate, which reduces the memory for storing the full second moment. Another popular approach for finetuning with limited memory is to perform CPU offloads to reduce the memory consumption caused by training data and optimizers; see, e.g., \cite{RenRARYZLH-ATC21,LiuZLLFWZS-EMNLP24}. 
    %
    %
    %\cite{LuoYLi-NIPS24} adopts the well-established block coordinate descent (BCD) paradigm~\cite{Tseng-JOTA01,BengioLPL-NIPS06,Wright-MP15,BelilovskyEO-ICML19} to develop a block-wise Adam optimizer for full-parameter LLM fine-tuning. \textcolor{red}{However, this approach assumes a centralized training regime, requiring all data to be aggregated within a central infrastructure, and its theoretical convergence performance is guaranteed only in a deterministic setting. Enabling this paradigm to a decentralized setting is highly non-trivial, as maintaining convergence guarantees becomes challenging when data is non-IID across clients. In such decentralized environments, the mismatch between local objectives and the global goal often leads to significant model drift, making it unclear whether the theoretical benefits of BCD can be preserved.}
    %
    %
    %
    \cite{LuoYLi-NIPS24} adopts the well-established block coordinate descent (BCD) paradigm~\cite{Tseng-JOTA01,BengioLPL-NIPS06,Wright-MP15,BelilovskyEO-ICML19} to develop a block-wise Adam optimizer for full-parameter LLM fine-tuning. However, this approach is designed for centralized training, where the data are available within a single training infrastructure, and its convergence analysis is established only for the deterministic case. Extending this paradigm to decentralized learning is highly non-trivial: under non-IID data, local block-wise updates may be biased toward heterogeneous client objectives, causing client drift and disrupting consensus. As a result, it remains unclear whether the convergence benefits of BCD can be preserved without central coordination.

  \subsection{Federated LLM Fine-Tuning}
    FL has emerged as a pivotal paradigm for training LLMs while preserving data privacy. However, the massive parameter count of modern LLMs introduces prohibitive communication and computational overheads for standard FL protocols. While recent studies suggest that fine-tuning pre-trained language models offers inherent robustness against non-IID data distributions~\cite{QuZLXWAR-CVPR22,ChenTLSC-ICLR23,NguyenWMSR-ICLR23,WellerMBLD-NAACL22}, the resource constraints of edge devices render full-parameter updates often impractical. Consequently, PEFT has become the de facto standard in this domain~\cite{HuSSAWLWC-ICLR22,WeiSHY-ACL25,LesterAC-EMNLP21}. Early works utilized adapter-based~\cite{HoulsbyGJMDGAG-ICML19,ZhangQX-ICML24} or prompt-tuning methods, such as FedPrompt~\cite{ZhaoDLLL-ICASSP23}, which optimizes soft prompts to reduce transmission costs. Nevertheless, these methods often suffer from accuracy degradation compared to full fine-tuning, particularly under heterogeneous data distributions~\cite{PuJYK-arXiv23}.

    Among PEFT techniques, Low-Rank Adaptation (LoRA) has gained prominence due to the hypothesis that over-parameterized models reside on a low intrinsic dimension~\cite{LiFLY-ICLR18,AghajanyanGZ-IJCNLP21}. LoRA allows for the efficient merging of trainable matrices with frozen weights, introducing no inference latency. Recent research has focused on optimizing LoRA for federated settings. FDLoRA~\cite{QiLHFYQ-arXiv24} introduces a dual-LoRA mechanism to decouple global knowledge sharing from local personalization. Similarly, FedSA-LoRA~\cite{GuoZWFQ-ICLR25} leverages the asymmetric roles of low-rank matrices, selectively aggregating only the shared $A$ matrices to capture global knowledge while maintaining robustness against data heterogeneity. To accommodate devices with varying capacities, ``Heterogeneous LoRA'' strategies have been proposed. For instance, \cite{ChoLXFJ-EMNLP24} manages diverse rank allocations via zero-padding and truncation, while RBLA~\cite{ChenTNZ-ICWS24} and stacking-based methods~\cite{WangSHSWLL-NIPS24} perform weighted aggregation of heterogeneous structures. FFA-LoRA~\cite{SunLLD-ICLR24} further reduces overhead by freezing the randomly initialized $A$ matrices and fine-tuning only the zero-initialized $B$ matrices. However, fixing parameters can impair the model's learning capacity, potentially leading to suboptimal performance~\cite{ZhangZSCL-arXiv23}.

    Despite the efficiency of PEFT, the performance gap between partial and full fine-tuning remains a critical bottleneck. Addressing this, recent works have begun to explore communication-efficient FPFT frameworks. FedKSeed~\cite{QinCQDLD-ICML24} utilizes zeroth-order optimization to enable billion-scale training by exchanging only random seeds and scalar gradients. Similarly, Ferret~\cite{ShuHNLY-ICML25} leverages low-dimensional projection and shared randomness to substantially mitigate the communication overhead of full-parameter updates. These emerging methods signal a shift toward enabling high-performance FPFT within resource-constrained federated environments.

  \subsection{Decentralized Learning/Fine-Tuning}
    Recent years have witnessed significant advancements in the paradigm of decentralized learning. Foundational works such as \cite{LianZZHZL-NIPS17} combined \textit{Stochastic Gradient Descent} (SGD) with gossip-averaging~\cite{BoydGPS-TIT06} to develop \textit{Decentralized Parallel Stochastic Gradient Descent} (DPSGD). Subsequent studies addressed convex optimization with greater theoretical rigor: \cite{ScamanBBLM-ICML17} proposed the \textit{Multi-Step Dual Accelerated} (MSDA) algorithm with a provable linear convergence rate for strongly convex smooth functions, while \cite{ScamanBBLM-NIPS18} introduced the \textit{Multi-Step Primal-Dual} (MSPD) algorithm for the more challenging non-smooth case. Furthermore, inspired by the success of momentum in accelerating convergence and improving generalization, \cite{YuJY-ICML19} proposed \textit{Decentralized Momentum Stochastic Gradient Descent} (DMSGD), extending momentum methods to decentralized topologies.

    While the aforementioned studies typically assume that training data are independent and identically distributed across clients, this assumption rarely holds in real-world applications. Consequently, significant efforts have been dedicated to addressing data heterogeneity. One line of research focuses on robust optimization techniques, such as \cite{ShiSWSYWT-ICML23}, which employs Sharpness Aware Minimization (SAM) to search for flat minima with uniformly low loss values across clients. Others utilize knowledge transfer, such as \cite{LiLV-IOTJ21}, which proposes a mutual knowledge transfer mechanism where subsets of clients exchange local models to mitigate distribution shifts.

    To explicitly correct for drift caused by heterogeneity, various tracking mechanisms have been introduced. Gradient tracking~\cite{PuN-MP21} and momentum tracking~\cite{TakezawaBNSY-TMLR23} are widely adopted to estimate global gradients locally. Diverging from traditional average-gradient tracking, \cite{AketiHR-NeurIPS24} proposes tracking model updates by having clients store copies of their neighbors' parameters. Similarly, NET-FLEET~\cite{ZhangFLYLZ-MobiHoc22} incorporates recursive gradient correction to approximate global stochastic gradients efficiently. More complex aggregation strategies include the \textit{Cross Gradient Aggregation} (CGA) algorithm~\cite{EsfandiariTJBHHS-ICML21}, which uses quadratic programming (QP) to aggregate cross-gradients (derivatives of a local model on neighbors' data). 
    %
    %This concept was further refined in \cite{AketiKR-TMLR23}, which considers bidirectional cross-gradients---derivatives of local data on neighbors' models and vice versa.
    %
    %\textcolor{red}{Most relevant to our work, \cite{LinKSJ-ICML21} introduced Quasi-Global (QG) momentum to accelerate decentralized DNN training on non-IID data by locally approximating the global optimization direction. We build on and extend this idea to block-wise Adam optimization: instead of applying QG momentum only to first-order updates, we design both first- and second-order \emph{block-wise moment approximations} (BMAs) to smooth the block-wise optimization trajectory and stabilize training under non-IID data.}
    %
    \cite{LinKSJ-ICML21} introduces Quasi-Global (QG) momentum for decentralized SGD with momentum, to approximate the global optimization direction at each client. Inspired by this idea, we construct a block-wise discrepancy signal from the gap between the local block parameters and their neighbor-averaged counterparts. Different from QG momentum, DECA incorporates this signal into both the first- and second-order moment estimates of block-wise Adam optimization. This design enables each client to preserve objective-aligned local gradient information while adaptively correcting its update direction according to decentralized consensus, thereby mitigating client drift and stabilizing decentralized block-wise optimization under non-IID data.

    While prior studies primarily focus on decentralized learning for small-scale DNNs, only a few have addressed the unique challenges of decentralized LLM training. \cite{GhiasvandAP-REALM25} proposes Dec-LoRA, a decentralized fine-tuning framework based on LoRA; however, its performance is inherently limited by updating only a small subset of model parameters. \cite{Saadati-NeuralNet26} further combines decentralized LoRA with truncated singular value decomposition (TSVD)-based matrix factorization to mitigate consensus interference, but this factorization introduces substantial overhead. \emph{To the best of our knowledge, no existing work has established a resource-efficient and high-performance decentralized framework for FPFT, leaving decentralized FPFT an unexplored yet promising direction for achieving stronger task-specific adaptation.}

\end{document}